\definecolor{darkred}{RGB}{150,0,0}
\definecolor{darkredred}{RGB}{0,0,0}
\definecolor{darkgreen}{RGB}{0,150,0}
\definecolor{darkblue}{RGB}{0,0,200}
\newtheorem{theorem}{Theorem}%[section]
\newtheorem{assumption}{Assumption}
\newtheorem{lemma}{Lemma}
\newtheorem{corollary}{Corollary}
\newtheorem{proposition}{Proposition}
\newtheorem{definition}{Definition}
\newcommand{\beq}{\begin{equation}}
\newcommand{\ba}{\begin{align}}
\newcommand{\ea}{\end{align}}
\newcommand{\eeq}{\end{equation}}
\newcommand{\vct}[1]{\bm{#1}}
\newcommand{\tr}[1]{\texttt{tr}\left(#1\right)}
\newcommand{\mtx}[1]{\bm{#1}}
\newcommand{\red}{\textcolor{darkredred}}
\newcommand{\eps}{\varepsilon}
\newcommand{\st}{\star}
\newcommand{\A}{{\mtx{A}}}
\newcommand{\B}{{\mtx{B}}}
\newcommand{\Na}{{\mathbb{N}}}
\newcommand{\Ub}{{\mtx{U}}}
\newcommand{\Dc}{{\cal{D}}}
\newcommand{\Cb}{{\mtx{C}}}
\newcommand{\La}{{\boldsymbol{\Lambda}}}
\newcommand{\Iden}{{\mtx{I}}}
\newcommand{\z}{{\vct{z}}}
\newcommand{\zbt}{{\tilde{\vct{z}}}}
\newcommand{\zt}{{\tilde{z}}}
\newcommand{\Rc}{\mathcal{R}}
\newcommand{\Rcom}{\mathcal{R}_{\red{om}}}
\newcommand{\Rcomst}{\mathcal{R}_{\red{om}}^{*}}
\newcommand{\bt}{{\boldsymbol{\beta}}}
\newcommand{\btbrb}{\bar{\boldsymbol{\beta}}}
\newcommand{\bth}{\hat{\boldsymbol{\beta}}}
\newcommand{\thb}{{\boldsymbol{\theta}}}
\newcommand{\btst}{{\boldsymbol{\beta}_\st}}
\newcommand{\bPhi}{{\boldsymbol{\Phi}}}
\newcommand{\Bc}{\mathcal{B}}
\newcommand{\Sc}{\mathcal{S}}
\newcommand{\Mc}{\mathcal{M}}
\newcommand{\Nn}{\mathcal{N}}
\newcommand{\g}{{\vct{g}}}
\newcommand{\Zt}{\tilde{\mtx{Z}}}
\newcommand{\Fc}{\mathcal{F}}
\newcommand{\bSi}{\boldsymbol{\Sigma}}
\newcommand{\x}{\vct{x}}
\newcommand{\xt}{\tilde{\vct{x}}}
\newcommand{\Xt}{\tilde{\vct{X}}}
\newcommand{\yt}{\tilde{y}}
\newcommand{\ybt}{\tilde{\y}}
\newcommand{\y}{\vct{y}}
\newcommand{\Wc}{{\cal{W}}}
\newcommand{\X}{{\mtx{X}}}
\newcommand{\bSiw}{\boldsymbol{\Sigma}_{s}}
\newcommand{\bSis}{\boldsymbol{\Sigma}_{t}}
\newcommand{\sigmaw}{{\sigma}_{s}}
\newcommand{\sigmas}{{\sigma}_{t}}
\newcommand{\btw}{\bt^s}
\newcommand{\btwr}{\bt^s_r}
\newcommand{\pw}{p_s}
\newcommand{\Pip}{\Mc}
\newcommand{\Pipst}{\Mc^{*}}
\newcommand{\btwst}{\bt^{s*}}
\newcommand{\btwstnb}{\beta^{s*}}
\newcommand{\bts}{\bt^t}
\newcommand{\btsr}{\bt^t_r}
\newcommand{\btws}{\bt^{s2t}}
\newcommand{\btwsr}{\bt^{s2t}_r}
\newcommand{\btbr}{\Bar{\beta}}
\newcommand{\btwbr}{\Bar{\btw}}
\newcommand{\btstbr}{{\boldsymbol{\Bar{\beta}}_\st}}
\newcommand{\bSiwbr}{\boldsymbol{\Bar{\Sigma}}_{s}}
\newcommand{\gwbr}{\boldsymbol{\Bar{g}}_{s}}
\newcommand{\ybw}{\y^s}
\newcommand{\yw}{y^s}
\newcommand{\Dcs}{\Dc_t}
\newcommand{\Dcw}{\Dc_s}
\newcommand{\Ks}{M_t}
\newcommand{\Kw}{M_t}
\newcommand{\Ms}{M_t}
\newcommand{\Mw}{M_t}
\newcommand{\est}{\text{Est}}
\newcommand{\Lambdaw}{\Lambda_s}
\newcommand{\Lambdas}{\Lambda_t}
\newcommand{\kappas}{\kappa_t}
\newcommand{\kappaw}{\kappa_s}
\newcommand{\gammas}{\gamma_t}
\newcommand{\gammaw}{\gamma_s}
\newcommand{\xis}{\xi_t}
\newcommand{\xiw}{\xi_s}
\newcommand{\taus}{\tau_t}
\newcommand{\tauw}{\tau_s}
\newcommand{\tausr}{\tau_{t,r}}
\newcommand{\lambdas}{\lambda_t}
\newcommand{\lambdaw}{\lambda_s}
\newcommand{\gw}{\boldsymbol{g}_{s}}
\newcommand{\gs}{\boldsymbol{g}_{t}}
\newcommand{\R}{\mathbb{R}}
\renewcommand{\P}{\operatorname{\mathbb{P}}}
\newcommand{\E}{\operatorname{\mathbb{E}}}
\newcommand{\e}{\mathrm{e}}
\newcommand{\tn}[1]{\|{#1}\|_{2}}
\newcommand{\spec}[1]{\left\|{#1}\right\|_{\text{op}}}
\newcommand{\expa}[1]{\text{exp} \left\{{#1} \right\}}
\newcommand{\fros}[1]{{\|#1\|_F^2}}
\newcommand{\omgn}{\omega_n}
\newcommand{\phin}{\phi_n}
\newcommand{\marco}[1]{\textcolor{red}{MM: #1}}
\title{High-dimensional Analysis of Knowledge Distillation:\\ Weak-to-Strong Generalization and Scaling Laws}
\date{}
\author{M. Emrullah Ildiz$^*$\,$^1$ \quad Halil Alperen Gozeten$^*$\,$^1$ \quad Ege Onur Taga$^1$ \\{ Marco Mondelli$^\dagger$\,$^2$  \quad Samet Oymak$^\dagger$\,$^1$}}
\affil{$^1$ University of Michigan, Ann Arbor\\
{\small \texttt{\{eildiz,alperen,egetaga,oymak\}@umich.edu}}
\\
\vspace{0.15cm}
{$^2$ Institute of Science and Technology Austria} 
\\
{\small \texttt{marco.mondelli@ist.ac.at} 
\vspace{-1mm}}
} 
\begin{document}

\maketitle
\def\thefootnote{*}\footnotetext{Equal Contribution}
\def\thefootnote{$\dagger$}\footnotetext{Equal Advising}
\vspace{-1cm}
\begin{abstract}
A growing number of machine learning scenarios rely on knowledge distillation where one uses the output of a surrogate model as labels to supervise the training of a target model. In this work, we provide a sharp characterization of this process for ridgeless, high-dimensional regression, under two settings: \emph{(i)} model shift, where the surrogate model is arbitrary, and \emph{(ii)} distribution shift, where the surrogate model is the solution of empirical risk minimization with out-of-distribution data. In both cases, we characterize the precise risk of the target model through non-asymptotic bounds in terms of sample size and data distribution under mild conditions. As a consequence, we identify the form of the optimal surrogate model, which %in terms of the ground-truth target weights. This 
reveals the benefits and limitations of %sparsifying the model by 
discarding weak features in a data-dependent fashion. In the context of weak-to-strong (W2S) generalization, this has the interpretation that \emph{(i)} W2S training, with the surrogate as the weak model, can provably outperform training with strong labels under the same data budget, but \emph{(ii)} it is unable to improve the data scaling law. We validate our results on numerical experiments both on ridgeless regression and on neural network architectures. % Finally, we quantify the performance gain associated with the optimal surrogate and derive its scaling law characteristics. 

%In summary, this work provides an important step toward accurate statistical analysis of 
%, show that, while it enhances the test risk by a constant factor, the gain does not impact the scaling law behavior.%Complementing this, we also investigate the scaling laws associated to the optimal surrogate. 
%derive scaling laws associated to the optimal surrogate to show that, even if one can 
%surrogate coefficients that align with the tail of the target covariance should be truncated.
%Through this, we derive the optimal surrogate that minimizes the downstream risk. This reveals that the optimal surrogate is given by a shrinkage of the ground-truth coefficients and .

%The statistical analysis of this process is challenging because surrogate model might be trained on a separate distribution

%In this work, we investigate the high-dimensional asymptotics of distillation  %This problem arises in knowledge distillation and weak-to-strong generalization.
%For instance, in weak-to-strong generalization or data filtering, surrogate model is often a smaller model to generate 
\end{abstract}

\section{Introduction}\vspace{-2pt}

%\marco{we are being a bit too parsimonious with refs in the introduction}

%Recent years have witnessed a substantial expansion in t
The increasing number and diversity of machine learning models %. This 
has motivated the development of techniques that leverage the output of one model to train a different one -- a process known as knowledge distillation \citep{hinton2015distilling}. Variations of this approach include %popular approaches such as 
generating synthetic data from powerful language models \citep{wang2023self,gunasekar2023textbooks,abdin2024phi}, weak-to-strong generalization to obtain stronger models under weak supervision \citep{burns2023weaktostronggeneralizationelicitingstrong}, and filtering/curating ML datasets via a smaller model to train a larger model \citep{fangdata,lin2024rho}. The diversity of these applications motivates a deeper understanding of the statistical properties and limits of the distillation process.

In this work, we focus on the scenario where a target/student model is trained on the labels of a surrogate/teacher model. Let $\Dc_t$ denote the target distribution, $(\x_i,y_i)_{i=1}^n$ be sampled i.i.d.\ from this distribution, and $p$ denote the dimension of the features $\x_i$. Given a surrogate model $s$, we create the synthetic labels $y_i^s = s(\x_i)$ and obtain the target model by minimizing the empirical risk, i.e.,\vspace{-3pt}
\begin{align}
\hat{f}=\arg\min_{f\in\Fc} \frac{1}{n} \sum_{i=1}^n \ell(y_i^s, f(\x_i)),\label{erm:dist}
\end{align}
where $\Fc$ denotes the hypothesis class and $\ell$ the loss function. %Also define the population risk $\Lc(f)=\E_{(\x,y)\sim\Dc_t}[\ell(y, f(\x))]$ and population minima $f_\st=\arg\min_{f\in\Fc} \Lc(f)$. 
This procedure motivates a few fundamental questions regarding \eqref{erm:dist}: \emph{(i)} What is the excess risk of $\hat{f}$ compared to that of the minimizer of the population risk $f_\st=\arg\min_{f\in\Fc} \E_{(\x,y)\sim\Dc_t}[\ell(y, f(\x))]$? \emph{(ii)} What is the optimal surrogate model $s$ that minimizes such excess risk? \emph{(iii)} Can the optimal $s$ strictly outperform using true labels $y_i$ or setting $s=f_\st$ in \eqref{erm:dist}? Furthermore, in practice, $s$ itself is the outcome of an empirical risk minimization (ERM) procedure. Specifically, let $\Dc_s$ be the surrogate distribution, $(\xt_i,\yt_i)_{i=1}^m$ be sampled i.i.d.\ from $\Dc_s$ (having feature dimension $p$), and assume \vspace{-3pt}
\begin{align} 
s= \arg\min_{f\in \Sc} \frac{1}{m} \sum_{i=1}^m \ell(\yt_i, f(\xt_i)),\label{erm:surr}
\end{align}
where $\Sc$ denotes the surrogate hypothesis class.
Thus, as a final (and more challenging question), one may ask: \emph{(iv)} How do the sample sizes $n, m$ and the data distributions $\Dc_t, \Dc_s$ affect the performance of $\hat{f}$?

\begin{figure*}[t!]
     \centering
     \begin{subfigure}[b]{0.49\textwidth}
         \centering
         \includegraphics[width=\textwidth]{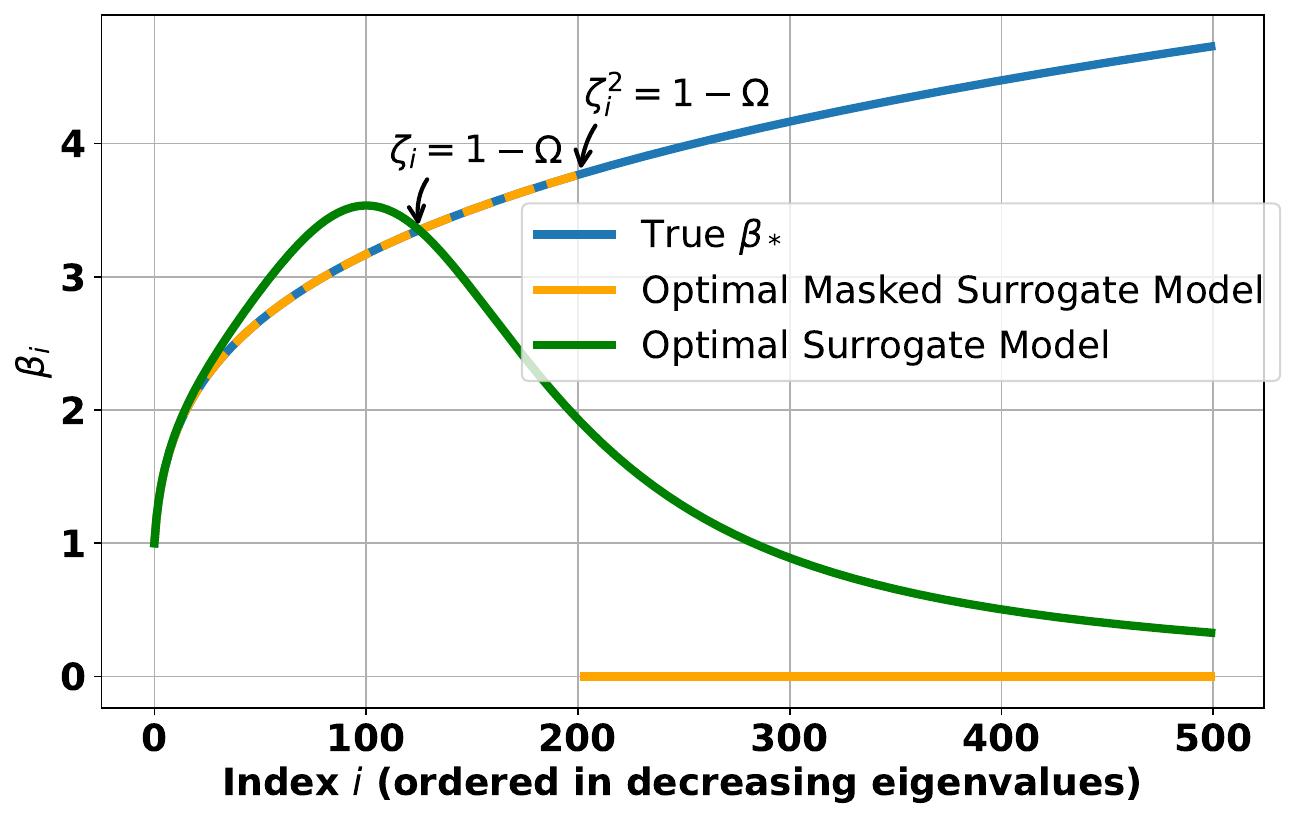}
         \caption{Ground-truth and surrogate model weights}
         \label{fig:intro eigens}
     \end{subfigure}
     \begin{subfigure}[b]{0.5\textwidth}
         \centering
         \includegraphics[width=1.02\textwidth]{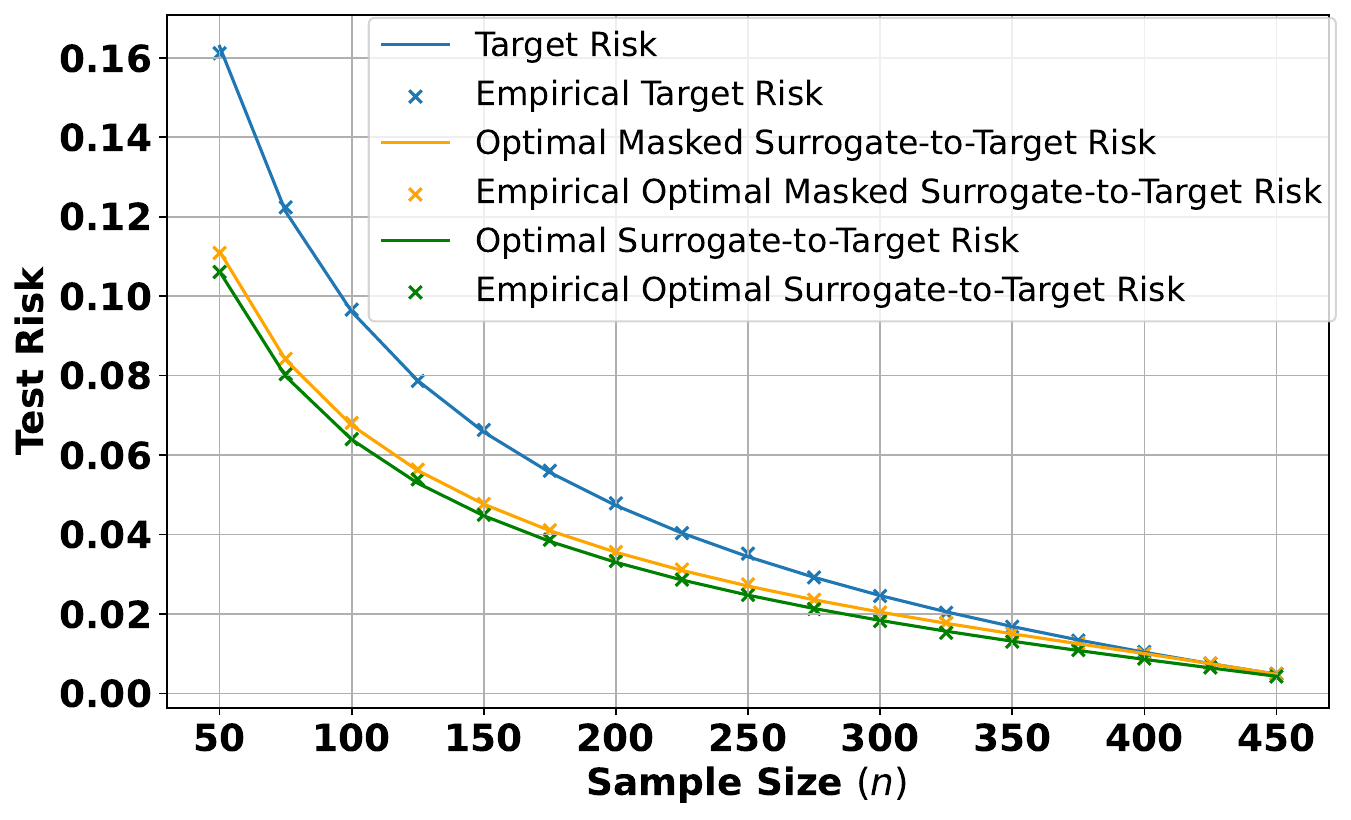}
         \caption{Test risks as a function of sample size}
         \label{fig:intro underparametrized}
     \end{subfigure}
        \caption{Structure and performance of optimal surrogate models. \textbf{(a):} We compare the weights of the optimal surrogate model (green) with the ground-truth (blue). This reveals a transition from amplification to shrinkage as we move from principal to tail eigenvalues. The yellow curve displays the optimal 0-1 masking of the ground-truth where we either keep or discard a feature. \textbf{(b):} Associated test risks as a function of sample size. The theoretical bounds (full lines) match the experiments (markers). \textbf{Setting:} The feature size is $p=500$; the sample size is $n=200$ in (a) and variable in (b); the feature covariance follows the power-law structure $\lambda_i = i^{-2}$, $\lambda_i \beta_i^2 = i^{-1.5}$; $\zeta_i$ is the covariance statistics (see Corollary~\ref{corol optimal surrogate}) governing the optimal surrogate's structure.%We observe that the optimal surrogate model attains an overall lower risk.
        }
        \label{fig:risk-model-comparison}
        \vspace{-0.55cm}
\end{figure*}

%%While \textbf{(a)} depicts the model selection for one specific $n$, \textbf{(b)} has multiple selections for each $n$.

\noindent \textbf{Main contributions.} We address these questions in the context of high-dimensional ridgeless regression, where $\Fc,\Sc$ are the class of linear models and $\ell$ is the quadratic loss. We provide a sharp non-asymptotic characterization of the test risk of $\hat{f}$ in two core settings:  %\emph{(a)} 
\begin{enumerate}
   \item The surrogate $s$ is provided and we solve $\hat{f}$ via \eqref{erm:dist}, which addresses the first three questions above; %\emph{(b)} 
    \item 
%The final model 
$\hat{f}$ is obtained via two stages of ERM, i.e., \eqref{erm:surr} followed by \eqref{erm:dist}, which addresses the last question.
\end{enumerate}
We focus on the
%Our guarantees hold in the 
regime where the sample sizes $n,m$ and the feature dimension $p$ are all proportional and also allow for a distribution shift between $\Dcs$ and $\Dcw$, which correspond to the two ERM stages. %\SO{Add and discuss a figure to support the following text} 
%Focusing on the setting \emph{(a)}, we settle the first three aforementioned questions.
Our theoretical guarantees in Section \ref{sect first section} precisely characterize the regression coefficients $\bt_s$ of the optimal 
surrogate model in terms of 
the corresponding coefficients $\bt_\star$ of the population risk minimizer $f_\st$ and the feature covariance. This is depicted in Figure \ref{fig:intro eigens} where $\bt_\star$ and $\bt_s$ are displayed by blue and green curves, respectively. This unveils a remarkable phenomenology in the process of knowledge distillation, that can be described as follows. Define the per-feature `gain' of the optimal surrogate as $\texttt{gain}=\bt_s/\bt_\star$ where the division is entrywise, which corresponds to the ratio between green and blue curves. We show that the \texttt{gain} vector is entirely controlled by the \emph{covariance statistics}, denoted by $(\zeta_i)_{i=1}^p$, that summarize the role of feature covariance and finite sample size $n$ in the test risk. There is a well-defined transition point, $\zeta_i=1-\Omega$ in Fig.~\ref{fig:intro eigens}, where the \texttt{gain} passes from strict amplification ($\texttt{gain}_i>1$) to strict shrinkage ($\texttt{gain}_i<1$) as we move from principal eigendirections to tail. Our theory also clarifies when we are better off discarding the weak features. The yellow curve shows the optimal surrogate when $\bt_s$ is restricted to be a 0-1 mask on the entries of $\bt_\star$ so that the surrogate has the direct interpretation of feature pruning. We show that beyond the transition point $\zeta_i^2=1-\Omega$, truncating the weak features that lie on the tail of the spectrum is strictly beneficial to distillation. This masked surrogate model can be viewed as a weak supervisor as it contains strictly fewer features compared to the target, revealing a success mechanism for weak-to-strong supervision.

Notably, as the sample size $n$ decreases, the optimal surrogate provably becomes sparser (transition points shift to the left) under a power-law decay covariance model. In Section \ref{sect scaling laws}, under this power-law model, we also quantify the performance gain that arises from the optimal surrogate and show that while the surrogate can strictly improve the test risk, it does not alter the exponent of the scaling law. This is depicted in Figure \ref{fig:intro underparametrized} where the surrogate risks are smaller but behave similarly to the ground-truth. Finally, in Section \ref{sect two step}, we study the more intricate problem of two-stage ERM (\eqref{erm:surr} followed by \eqref{erm:dist}) and establish a non-asymptotic risk characterization that precisely captures the influences of the sample sizes $m,n$ and of the surrogate/target covariance matrices.
%, and provide further insights
%\red{Under a covariance model with power-law decay, we prove that as $n$ becomes smaller, the optimal surrogate becomes sparser, name}

%becomes sparser and, more generally, we are provably better off discarding the weak features in $\bt_\st$ that correspond to the tail of the eigenspectrum. 

%. We further derive non-asymptotic bounds for the model shift setting. As a consequence, we identify the form of the optimal surrogate model, which reveals the benefits and limitations of discarding weak features in a data-dependent fashion. In the context of weak-to-strong (W2S) generalization, this has the interpretation that \emph{(i)} W2S training, with surrogate as the weak model, outperforms training with strong labels under equal data budget, but \emph{(ii)} it is unable to improve the scaling law. We validate our results on numerical experiments both on ridgeless regression and on neural network architectures.

%High-dimensional toolset, such as random matrix theory and gaussian comparison inequalities, equips us with the ability to establish fine-grained control of the ERM solutions

\subsection{Related work}
Our work relates to the topics of high-dimensional learning, distribution shift, scaling laws, and distillation.

\noindent\textbf{High-dimensional risk characterization.} There is a large body of literature dedicated to the study of linear regression in over- and under-parameterized regimes. Via random matrix theory tools, one can precisely study the test risk and various associated phenomena, such as benign overfitting \citep{Bartlett_2020} or double descent \citep{belkin2019reconciling}.
Specifically, asymptotic and non-asymptotic risk characterizations for the minimum $\ell_2$-norm interpolator (i.e., ridgeless regression with $p\geq n$) have been provided in a recent line of work \citep{hastie2020surpriseshighdimensionalridgelesssquares,cheng2024dimensionfreeridgeregression,han2023distributionridgelesssquaresinterpolators, wu2020optimal, richards2021asymptotics, Loureiro_2022}. 

While the standard ERM formulation is well studied, the analysis of the distillation problem necessitates a fine-grained characterization of the ERM process. A series of papers \citep{chang2021provable,montanari2019generalization,han2023distributionridgelesssquaresinterpolators} utilize Gaussian process theory \citep{pmlr-v40-Thrampoulidis15} to characterize the distribution of ridgeless estimators. Our theory builds on these to precisely characterize the distillation performance by \emph{(i)} accounting for model and covariate shift and \emph{(ii)} tracking the distribution across the two-stage problem where \eqref{erm:surr} is followed by \eqref{erm:dist}. Our setting strictly subsumes the problem of characterizing the test risk under distribution shift, which relates to the recent papers \citep{patil2024optimalridgeregularizationoutofdistribution,song2024generalizationerrorminnorminterpolators,yang2020precise,mallinar2024minimumnorminterpolationcovariateshift}. A distinguishing feature of our work is that we precisely characterize the optimal surrogate model that minimizes the downstream target risk, as highlighted in Figure \ref{fig:risk-model-comparison}.

%When specialized to $n=\infty$ or $m=\infty$, our results reduce to the risk analysis of ridgeless regression under model/distribution shifts
%knowledge distillation under both model and distribution shift settings. Specifically, we establish a distributional control over the two stage problem where \eqref{erm:surr} is followed by \eqref{erm:dist}

%This literature utilizes probabilistic tools such as random matrix theory and gaussian comparison inequalities.

%

\noindent\textbf{Distillation and weak-to-strong generalization.} \cite{mobahi2020self} provide a theoretical analysis of self-distillation, whereas \cite{menon2020distillationhelpsstatisticalperspective, nagarajan2023student, harutyunyan2023supervisioncomplexityroleknowledge} study knowledge distillation in a teacher-student setting. A related problem is self-training which relies on progressively generating pseudo-labels for unlabeled data \citep{frei2022self,oymak2021theoretical,wei2020theoretical}. While these works consider low-dimensional settings or provide loose bounds, our study provides a sharp analysis of ridgeless over-parameterized regression. Closer to us, \cite{jain2024scaling} investigates the benefit of \emph{surrogate data} by employing both real and surrogate data in a single step of ERM, but the analysis is limited to isotropic covariance. \cite{kolossov2023towards} consider the problem of surrogate-based data selection. Finally, \cite{charikar2024quantifying,lang2024theoretical} aims to demystify weak-to-strong generalization by formalizing the intuition that W2S generalization occurs when the strong
model avoids fitting the mistakes of the weak teacher. In contrast, our theory reveals that the strong student can benignly overfit the weak teacher, and in fact, a carefully crafted weak teacher provably outperforms strong labels, see again Figure \ref{fig:risk-model-comparison}. %This aspect also relates our work to data selection. Because,

%if the sample size is limited, the optimal teacher is a weak one as discussed in Figure \ref{}.

%Our 
%Finally, elated work \cite{} utilizes a surrogate model to 

%Theoretical aspects of distillation has been studied by several works \cite{}.

%mixture models \cite{frei2022self}, active learning \cite{zrnic2024active}, knowledge distillation in teacher-student \cite{nagarajan2023student}

\noindent\textbf{Scaling laws.} The dependence of the performance on the available statistical and computational resources is often empirically well-captured by a power-law \citep{hestness2017deep,kaplan2020scaling}. This experimental evidence has led to a flurry of theoretical work aimed at characterizing the emergence of scaling laws, mostly focusing on linear regression \citep{spigler2020asymptotic, simon2023eigenlearning, bahri2024explaining, paquette20244+, bordelon2024feature, lin2024scaling, maloney2022solvable}. \cite{bordelon2024dynamical} analyze a random feature model trained with gradient descent via dynamical mean field theory. \cite{jain2024scaling} consider scaling laws with surrogate data, whereas \cite{sorscher2022beyond} study the benefits of data pruning. 

%\marco{TODO: add details}

%\noindent\textbf{Closest Works}  \cite{simon2024bettermodernmachinelearning}. Provable benefits of overparameterization in model compression \cite{chang2021provable}

%data selection under weak supervision \cite{kolossovtowards}

%Analysis of out-of-distribution or mixed data settings:
% min-norm transfer learning \cite{song2024generalization}, OOD optimal ridge \cite{patil2024optimal},  Minimum-Norm Interpolation Under Covariate Shift \cite{mallinar2024minimumnorminterpolationcovariateshift}
 
%Useful references 
%\begin{itemize}
%\item Useful readings: Surprises in high-dim \url{https://arxiv.org/pdf/1903.08560}, Dimension-free ridge regression bounds \url{https://arxiv.org/pdf/2210.08571}, Power-law analysis \url{https://arxiv.org/pdf/2311.14646}
%\item Provable benefits of overparameterization in model compression \url{https://arxiv.org/abs/2012.08749}
%\item The distribution of Ridgeless least squares interpolators \url{https://arxiv.org/pdf/2307.02044}
%\item Analysis of out-of-distribution or mixed data settings:
% surrogate data \url{https://arxiv.org/abs/2402.04376}, min-norm transfer learning \url{https://arxiv.org/pdf/2406.13944}, OOD optimal ridge \url{https://arxiv.org/pdf/2404.01233}  Minimum-Norm Interpolation Under Covariate Shift \url{https://arxiv.org/pdf/2404.00522}

% \item \marco{data selection under weak supervision, \url{https://arxiv.org/abs/2309.14563}}

% \item \marco{scaling laws: linear regression with sketching of features (\url{https://arxiv.org/abs/2406.08466}), features model via DMFT (\url{https://arxiv.org/abs/2402.01092}), beyond power law via pruning (\url{https://arxiv.org/abs/2206.14486})}

% \item \marco{weak-to-strong for mixture models: \url{https://arxiv.org/abs/2106.13805} and something more recent (though casted as active learning): \url{https://arxiv.org/abs/2403.03208}}
%
 %\item \url{https://arxiv.org/abs/2301.12923}

%\end{itemize}
\section{Problem setup}\label{sect setup}
\textbf{Notation. } Let $[p]$ denote the set $\{1,\cdots,  p\}$ for an integer $p \geq 1$.
We use lower-case and upper-case bold letters (e.g., $\x, \X$) to represent vectors and matrices, respectively; $x_i$ denotes the $i$-th entry of the vector $\x$, $\X^\dagger$ the pseudo-inverse of the matrix $\X$, and $\tr{\X}$ the trace of $\X$. \red{For further notations, please see Table \ref{tab:notation-summary}.}

We consider a two-stage linear learning problem. In the first stage, pairs of labels and input features from the distribution $\Dc_s$ are used to produce an estimate of the ground-truth parameter, which is then used to generate labels along with input features from a different distribution $\Dc_t$. The second stage uses these generated labels to obtain the final estimate of the ground-truth parameter. The models trained in the first and second stages are referred to as surrogate and target models, respectively.
%\begin{itemize}[leftmargin=*]
%\item 

\noindent\textbf{Stage 1: Surrogate model.} We consider a data distribution $(\xt,\yt)\sim\Dcw$ following the linear model $\yt=\xt^\top \btst+\zt$, where $\btst\in\R^p$, $\xt\sim\Nn(\boldsymbol{0},\bSiw)$ and $\zt\sim\Nn(0,\sigmaw^2)$ is independent of $\xt$. Let $\{(\xt_i,\yt_i)_{i=1}^m\}$ be the dataset for the surrogate model drawn i.i.d.~from $\Dcw$. We analyze both  under- and over-parametrized settings: in the former, we estimate $\btst$ by minimizing the quadratic loss; % over $\Scw$; 
in the latter, we estimate $\btst$ as the minimum norm interpolator. % in $\Scw$. 
As a result, the estimator of the surrogate model can be written as follows:
\vspace{-.3em}
\begin{align}\label{eq:supb}
    \btw = \est(\Xt, \ybt) := \begin{cases}
    \arg \min_\bt \tn{\ybt - \Xt \bt}^2, & \quad \text{if $m \geq p$},\\
    \arg \min_\bt \{ \tn{\bt} : \Xt \bt = \ybt \} , & \quad \text{if $m < p$},
  \end{cases}
\end{align}
where $\Xt = [\xt_1^\top, \dots, \xt_m^\top]^\top \in \R^{m \times p}$ and $\ybt = [y_1, \dots, y_m]^\top \in \R^{m}$.

%\item 
\noindent\textbf{Stage 2: Target model.} Given $\btw \in \R^p$, we consider another data distribution $(\x, \yw) \sim \Dcs(\btw)$ following the linear model $\yw = \x^\top \btw + z$, where $\x \sim \Nn(\boldsymbol{0}, \bSis)$ and $z \sim \Nn(0, \sigmas^2)$. Let $ \{(\x_i, \yw_i)_{i=1}^n\}$ be the dataset for the target model drawn i.i.d.\ from $\Dcs(\btw)$. As for the surrogate model, the estimator for the target model is defined as %follows:
\vspace{-.3em}
\begin{align}
    \btws = \est(\X, \ybw),
\end{align}
where $\X = [\x_1^\top, \ldots, \x_n^\top]^\top \in \R^{n \times p}$ and $\ybw = [\yw_1, \ldots, \yw_n]^\top \in \R^{n}$. \red{Our analysis will generally apply to an arbitrary $\bt^s$ choice and will not require it to be the outcome of \eqref{eq:supb}. 
%\end{itemize}
Finally, we} define the excess (population) risk for a given estimator $\bth \in \R^{p}$ as
\vspace{-.3em}
\begin{align}\label{setup risk}
    \Rc(\bth) := \E_{(\x, y) \sim \Dcs(\btst)} [(y - \x^\top \bth)^2] - \sigmas^2 =  \tn{\bSis^{1/2} (\bth - \btst)}^2 .
\end{align}
%Note that $\Rc(\btws)$ includes both expectations with respect to $(\x, y) \sim \Dcs(\btst)$ and $(\xt, \yt) \sim \Dc$ as $\btws$ does depend on $\btw$. 

%\noindent\textbf{Reference models.} 
Throughout the paper, we compare the surrogate-to-target model with two different reference models.

\noindent\textbf{Reference 1: Standard target model.} We study the generalization performance of $\btws$ with respect to the standard target model, which % between these reference models. 
%\begin{itemize}[leftmargin=*]
%\item The first reference is the finite-sample weak model $\bt^w$. We will also examine its population version $\bt^w_\st=\arg\min_{\bt}E_{\Dc}[\ell(y,\x^\top\bPi^w \bt)]$. Note that $\bt^w_\st$ is strictly a special case as it corresponds to $M=\infty$.
%\item 
%The standard target model 
has access to the ground-truth parameter through labeling. Specifically, consider the dataset $\{(\x_i, y_i)_{i=1}^n\}$ drawn i.i.d.\ from $\Dcs(\btst)$; then, the estimation is 
\vspace{-.3em}
\begin{align}\label{standard target model}
    \bts := \text{Est}(\X, \y),
\end{align}
where $\X = [\x_1^\top, \ldots, \x_n^\top]^\top \in \R^{n \times p}$ and $\y = [y_1, \ldots, y_n]^\top \in \R^{n}$. 
We compare the excess risks of the surrogate-to-target model $\Rc(\btws)$ with that of the standard target model $\Rc(\bts)$.

\noindent\textbf{Reference 2: Covariance shift model \citep{mallinar2024minimumnorminterpolationcovariateshift,patil2024optimalridgeregularizationoutofdistribution}.} Given $\btst \in \R^p$, let $\{(\x_i, y_i)_{i=1}^n\}$ be a dataset drawn i.i.d.\ from $\Dcw^{\text{cs}}$, where $\x_i \sim \Nn(\boldsymbol{0}, \bSiw)$, $z_i \sim \Nn(0, \sigmas^2)$, and $y_i = \x_t^\top \btst + z_i$; then, using the same notation $\X = [\x_1^\top, \ldots, \x_n^\top]^\top \in \R^{n \times p}$ and $\y = [y_1, \ldots, y_n]^\top \in \R^{n}$, 
the estimation is $\bth^{\text{cs}} := \text{Est}(\X, \y)$. The test risk of $\bth^{\text{cs}}$ is calculated under covariance shift. Let $(\x, y) \sim \Dcs$ be a distribution such that $\x \sim \Nn(\boldsymbol{0}, \bSis)$, $z \sim \Nn(0, \sigmas^2)$, and $y = \x^\top \btst + z$. Then, the excess transfer risk is
\vspace{-.3em}
\begin{align*}
    \Rc(\bth^{\red{\text{cs}}}) := \E_{(\x, y) \sim \red{\Dcs}}[(y- \x^\top \bth)^2] - \sigmas^2.
\end{align*}
We discuss the equivalence between the surrogate-to-target model and the covariance shift model in Section \ref{sect first section}.  \vspace{-0.25cm}

%
%\item 
\begin{comment}

The second reference model is the transfer learning setting \citep{song2024generalizationerrorminnorminterpolators} \marco{transfer learning is something very well studied with hundreds (or thousands) of papers; it's a bit weird that you put here just a paper appeared a few months ago}. Specifically, consider datasets $\{(\xt_i, \yt_i)_{i=1}^m\}$ and $\{(\x_i,y_i)_{i=1}^n\}$, where $(\xt_i, \yt_i) \sim \Dcw$ and $(\x_i, y_i) \sim \Dcs(\btst)$; then, the estimation is 
\begin{align}
    \bt^t = \arg \min \{ \tn{\bt}^2 : \ybt = \Xt \bt \text{ and } \y = \X \bt \},
\end{align}
\marco{use a notation different from $\bt^t$ to avoid confusion with the case above} where $\X, \Xt, \y, \ybt$ are defined as above, and we compare $\Rc(\bts)$ with $\Rc(\btws)$ especially when $m \gg n$. 
%\end{itemize}
\end{comment}

%\marco{remove the notation $\Sc, \Sc_w, \Sc_s$, since $\Sc$ conflicts with the intro and this new notation does not seem to be so useful}

\section{Analysis for model shift}\label{sect first section}

%\marco{not sure I like the title of this section, but I cannot think of anything better at the moment}

\vspace{-0.25cm}We start by examining the behavior of the surrogate-to-target model when there is a model shift $\btw \neq \btst$. First, we provide a non-asymptotic bound on the risk %of the surrogate-to-target model 
conditioned on $\btw$, and then we optimize this quantity with respect to $\btw$ (finding also a closed-form expression of the corresponding optimal value of $\btw$). %This allows us to determine the optimal value of $\btw$ that minimizes the risk of the surrogate-to-target model. 
In addition, we build a connection between our surrogate-to-target model and knowledge distillation \citep{hinton2015distilling}, as well as weak-to-strong generalization \citep{burns2023weaktostronggeneralizationelicitingstrong}. Finally, we extend our analysis to ridge regression in Appendix \ref{app ridge regression}.

We note that when $\btw = \btst$, the calculations in this section simplify to the non-asymptotic risk characterization of the minimum $\ell_2$-norm interpolator, recently studied by \cite{hastie2020surpriseshighdimensionalridgelesssquares,cheng2024dimensionfreeridgeregression,han2023distributionridgelesssquaresinterpolators}. When $\btw \neq \btst$, the problem is instead equivalent to covariance shift in transfer learning, as formalized by the following observation whose proof is deferred to Appendix \ref{observation transfer s2t equivalency proof}.

\begin{restatable}{observation}{obstransfersteq}\label{observation transfer s2t equivalency}
    The model shift in the surrogate-to-target model is equivalent to the covariance shift model \citep{mallinar2024minimumnorminterpolationcovariateshift}. Formally, given any $\btst \in \R^p$ \red{and any jointly diagonalizable covariance matrices $\bSiw,\bSis \in \R^{p \times p}$}, there exists a unique $\btw \in \R^p$ such that the risk of the surrogate-to-target problem $\Rc(\btws)$ with $(\btst, \btw, \bSis)$ is equivalent to the risk of the covariance shift model $\Rc^{\text{cs}}(\bth)$ with $(\btst, \bSiw, \bSis)$.
    %for any $\bSiw\in\R^{p \times p}$ that is jointly diagonalizable with $\bSis$. 
\end{restatable}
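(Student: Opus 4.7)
The plan is to couple the two problems in a common probability space via joint diagonalizability and a shared Gaussian design, and then identify the unique $\btw$ by matching the two resulting risk expressions in the shared eigenbasis. Write $\bSis=\Ub\La_t\Ub^\top$ and $\bSiw=\Ub\La_s\Ub^\top$. By rotational invariance of both the minimum-norm interpolator and of the excess risk under the orthogonal change of basis $\Ub$, we may reduce to the case where $\bSis=\La_t$ and $\bSiw=\La_s$ are diagonal. Next, realize the two designs from a single standard Gaussian matrix $\Gb\in\R^{n\times p}$ by setting $\X=\Gb\La_t^{1/2}$ for the s2t problem and $\Xa=\Gb\La_s^{1/2}$ for the cs problem, using a common noise vector $\z\sim\Nn(\mathbf{0},\sigmas^2\Iden)$; because $\La_t^{1/2},\La_s^{1/2}$ are invertible, the row spaces of $\X,\Xa$ almost surely coincide with that of $\Gb$.

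Substituting the closed-form interpolators $\btws=\X^\dagger(\X\btw+\z)$ and $\bth^{cs}=\Xa^\dagger(\Xa\btst+\z)$ into $\Rc(\btws)=\|\bSis^{1/2}(\btws-\btst)\|^2$ and $\Rc^{cs}(\bth^{cs})=\|\bSis^{1/2}(\bth^{cs}-\btst)\|^2$, expanding, and working componentwise in the common eigenbasis, both risks reduce to the same functional of $\Gb,\La_t,\La_s,\z$ once one identifies the ``effective signal'' seen by each problem: it is $\bSis^{1/2}\btw$ in s2t and $\bSiw^{1/2}\btst$ in cs. Matching these forces
\[
\btw \;=\; \bSis^{-1/2}\bSiw^{1/2}\btst,
\]
i.e., the unique vector such that $\bSis^{1/2}\btw=\bSiw^{1/2}\btst$. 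Uniqueness on $\R^p$ is immediate from invertibility of $\bSis^{1/2}$ (recall $\bSis\succ 0$). Rotating back by $\Ub$ delivers the formula in the original basis.

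The main obstacle is reconciling the noise/variance contributions of the two risks, which at first glance carry different diagonal prefactors ($\La_t^{-1/2}$ versus $\La_s^{-1/2}$) coming from $\X^\dagger$ and $\Xa^\dagger$. This is handled by a direct trace identity: writing the variance parts as $\sigmas^2\,\tr{\La_t\,\X^\dagger(\X^\dagger)^\top}$ and $\sigmas^2\,\tr{\La_t\,\Xa^\dagger(\Xa^\dagger)^\top}$ and substituting $\X=\Gb\La_t^{1/2}$, $\Xa=\Gb\La_s^{1/2}$, both reduce, in the shared eigenbasis, to the same weighted sum over eigendirections of the Gaussian resolvents, with the apparent eigenvalue asymmetry canceling under joint diagonalizability. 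Combined with the bias identification above, this yields equality of the two excess risks under the stated substitution, completing the proof.
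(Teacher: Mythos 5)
Your identified surrogate coefficient, $\btw=\bSis^{-1/2}\bSiw^{1/2}\btst$, is exactly the one the paper chooses: in the joint eigenbasis the paper sets $\btw=\A\btst$ with $\bSiw=\A^\top\bSis\A$, and for diagonal positive matrices this $\A$ equals $\bSiw^{1/2}\bSis^{-1/2}=\bSis^{-1/2}\bSiw^{1/2}$. Your coupling via a shared Gaussian design $\Gb$ and the reduction to the common eigenbasis are also the same moves the paper makes (via the transformation $\bar{\x}=\A^\top\x$), so the two arguments are aligned up to the last step.

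That last step, however, has a genuine gap. The ``trace identity'' you invoke to reconcile the noise contributions is not an identity: after substituting $\X=\Gb\La_t^{1/2}$ and $\Xa=\Gb\La_s^{1/2}$, the two variance terms become $\sigmas^2\tr{\La_t^2\,\Gb^\top(\Gb\La_t\Gb^\top)^{-2}\Gb}$ and $\sigmas^2\tr{\La_t\La_s\,\Gb^\top(\Gb\La_s\Gb^\top)^{-2}\Gb}$, and these disagree for a generic realization of $\Gb$ (already $p=2$, $n=1$, $\La_t=\Iden$, $\La_s=\mathrm{diag}(4,1)$, $\Gb=(g_1,g_2)$ gives $1/(g_1^2+g_2^2)$ versus $1/(4g_1^2+g_2^2)$). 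The root cause is that the min-norm interpolator is not covariant under the substitution $\bt\mapsto\A\bt$: your coupling makes the label vectors coincide, but the two estimators are $\X^\dagger\y$ and $(\X\A)^\dagger\y$, and $(\X\A)^\dagger\neq\A^{-1}\X^\dagger$ when $n<p$. Accordingly, not only the variance but also the bias terms $\|\bSis^{1/2}(\X^\dagger\X\btw-\btst)\|^2$ and $\|\bSis^{1/2}(\Xa^\dagger\Xa\btst-\btst)\|^2$ fail to match sample-path-wise, since $\X^\dagger\X$ and $\Xa^\dagger\Xa$ project onto the distinct random subspaces $\La_t^{1/2}\,\mathrm{range}(\Gb^\top)$ and $\La_s^{1/2}\,\mathrm{range}(\Gb^\top)$. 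Be aware that the paper's own proof is terse on exactly this point (it asserts ``the estimators are computed in the same way'' without addressing that one uses $\X$ and the other $\X\A$); the observation should be read as an equivalence of the asymptotic risk functionals of Definition~\ref{defn asymp w2s}, and a complete argument would have to match the limiting bias/variance expressions and the fixed-point equations for $\tau$ and $\gamma$ under the substitution, rather than relying on a sample-path coupling.
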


%We note that the joint diagonalizability of $\bSiw$ and  $\bSis$ is also required by \cite{mallinar2024minimumnorminterpolationcovariateshift} (which also assume ), see their Assumption 2.1. %The proof of Observation \ref{observation transfer s2t equivalency} is provided in  
The joint diagonalizability of $\bSiw$ and $\bSis$ is also required by \cite{mallinar2024minimumnorminterpolationcovariateshift} (see their Assumption 2.1), where upper and lower bounds for the bias and variance are provided by adapting results from \citep{Bartlett_2020,tsigler2022benignoverfittingridgeregression}.
 %the which adapts . 
 In contrast, our approach utilizes the non-asymptotic characterization of \red{the} $\ell_2$-norm interpolator by \cite{han2023distributionridgelesssquaresinterpolators}. This allows us to directly characterize the non-asymptotic risk (instead of giving upper and lower bounds, as in \citep{mallinar2024minimumnorminterpolationcovariateshift}) and, thus, to obtain the optimal surrogate parameter $\btw$. 
We begin by defining the asymptotic risk of the surrogate-to-target model, given the surrogate parameter $\btw$, in the proportional regime where $p,n \xrightarrow{} \infty$ and the ratio $\kappas = p/n>1$ is kept fixed.
\begin{restatable}{definition}{defnasymponestep}%[Asymptotic target model's risk given $\btw$]
\label{defn asymp w2s}
    Let $\kappas = p/n > 1$ and $\taus \in \R$ be the unique solution of the following equation
%\vspace{-.2em}
    \begin{align}
        \kappas^{-1} = \frac{1}{p} \tr{(\bSis +  \taus \Iden)^{-1} \bSis}.
    \end{align}
    \red{Let} $\thb_1 := (\bSis + \taus \Iden)^{-1} \bSis$ and $\thb_2 := (\bSis + \taus \Iden)^{-1} \bSis^{1/2}\frac{\gs}{\sqrt{p}}$ where $\gs \sim \Nn(\boldsymbol{0}, \Iden_p)$. \red{Now, define the asymptotic characterization of the minimum $\ell_2$-norm interpolator %\cite{han2023distributionridgelesssquaresinterpolators} 
    and the function $\gammas: \R^{p} \xrightarrow[]{} \R$ as the following fixed point equation based on the asymptotic risk}
\red{
\begin{align}
    X_{\kappas, \sigmas^2}^t(\bSis, \btw, \gs) &:= \thb_1 \btw + \gammas(\btw) \thb_2 ,\label{asymptotic min norm interpolator} \\
    \gammas^2(\btw) &:= \kappas \left( \sigmas^2 + \bar{\Rc}^{s2t}_{\kappas, \sigmas}(\bSis, \btw, \btw) \right) \label{defn gammas},
\end{align}
}
where the asymptotic risk is defined as 
 % \vspace{-.2em}
  \begin{equation}\label{risk one step asymp}
        \begin{split}
            \bar{\Rc}^{s2t}_{\kappas, \sigmas}(\bSis, \btst, \btw) &:=  \E_{\gs} \left[ \tn{\bSis^{1/2} ( X_{\kappas, \sigmas^2}^t(\bSis, \btw, \gs) - \btst)}^2 \right] \\
            &= \underset{(a)}{\underbrace{\tn{\bSis^{1/2}(\thb_1 \btw - \btst)}^2}} + \underset{(b)}{\underbrace{\gammas^2(\btw) \E_{\gs}[\thb_2^\top \bSis \thb_2] }}\\
            & = (\btw - \btst)^\top \thb_1^\top \bSis \thb_1 (\btw - \btst) + \gammas^2(\btw) \E_{\gs}[\thb_2^\top \bSis \thb_2]  \\
        &+ \btst^\top (\Iden -\thb_1)^\top \bSis (\Iden - \thb_1) \btst - 2 \btst^\top (\Iden- \thb_1)^\top \bSis \thb_1 (\btw-\btst). 
        \end{split}
    \end{equation}

\end{restatable}
% \begin{definition}[Asymptotic target model's risk given $\btw$]\label{defn asymp w2s}
    
% \end{definition}

In the asymptotic risk, the term $(a)$ corresponds to a part of the bias risk caused by the model shift ($\btw$), and the implicit regularization term where the eigenvalues of $\thb_1$ are less than 1. The term $(b)$ corresponds to the remaining part of the bias and variance risks. We now state our non-asymptotic characterization of the risk. %, which is proved in. % characterization based on Definition \ref{defn asymp w2s}:
\begin{restatable}{theorem}{theoremriskcharaconestep}
%[Non-asymptotic risk given $\btw$]
\label{theorem risk charac one step}
    Suppose that, for some constant $\Ms>1$, we have $1/\Ms \leq \kappas, \sigmas^2 \leq \Ms$ and $ \spec{\bSis},  \spec{\bSis^{-1}}  \leq \Ms$. Recall from \eqref{setup risk} that $\Rc(\btws)$ represents the risk of the surrogate-to-target model given $\btw$. Then, there exists a constant $C = C(\Ms)$ such that, for any $\eps \in (0, 1/2]$, the following holds with $R+1 < \Ms$: %\vspace{-0.15cm}
\begin{align}
   \sup_{\btst, \btw \in \B_p(R)} \P ( \left| \Rc(\btws) - \bar{\Rc}^{s2t}_{\kappas, \sigmas}(\bSis, \btst, \btw) \right| \geq \eps) \leq C p e^{-p \eps^4/C}.\label{non-asymp bound}
\end{align}
\end{restatable}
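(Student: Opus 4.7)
The plan is to specialize and extend the non-asymptotic distributional characterization of the min-norm $\ell_2$-interpolator established by \cite{han2023distributionridgelesssquaresinterpolators} to the present surrogate-to-target setting. The key observation is that, from the viewpoint of the second-stage ERM \eqref{erm:dist}, the estimator $\btws$ is exactly the ordinary min-norm interpolator of the linear model $\ybw = \X\btw + \z$ with $\z\sim\Nn(\mathbf{0},\sigmas^2\Iden_n)$: the surrogate parameter $\btw$ plays the role of ``ground truth'' during estimation, while $\btst$ enters only when the excess risk \eqref{setup risk} is evaluated. Accordingly, I would first use the argument of \cite{han2023distributionridgelesssquaresinterpolators} to approximate $\btws$, conditionally on $\X$, by the Gaussian proxy
\begin{align*}
  \btws \;\approx\; \thb_1\,\btw \;+\; \gammas(\btw)\,(\bSis+\taus\Iden)^{-1}\bSis^{1/2}\,\tfrac{\gs}{\sqrt p},\qquad \gs\sim\Nn(\mathbf{0},\Iden_p),
\end{align*}
in which $\thb_1$ implements the deterministic shrinkage of $\btw$ through the sample covariance and the stochastic part is exactly the $\thb_2$ of Definition~\ref{defn asymp w2s} rescaled by the effective noise level $\gammas(\btw)$ dictated by the self-consistency~\eqref{defn gammas}.

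Given this proxy, I would plug it into $\Rc(\btws) = \|\bSis^{1/2}(\btws-\btst)\|^2$ and expand using the identity $\thb_1\btw - \btst = \thb_1(\btw-\btst) - (\Iden-\thb_1)\btst$. Squaring and taking conditional expectation over $\gs$ produces the three ``bias'' summands of \eqref{risk one step asymp} from the deterministic part, plus the variance $\gammas^2(\btw)\,\E_{\gs}[\thb_2^\top\bSis\thb_2]$ from the fluctuation; the bias--noise cross terms have zero conditional mean and concentrate to $0$ by standard Gaussian concentration. That the variance proxy is $\gammas^2(\btw)$ rather than $\sigmas^2$ is dictated by the fixed point~\eqref{defn gammas}, which itself emerges upon substituting the deterministic equivalent of $(\X\X^\top)^{-1}$ into $\sigmas^2\tr{(\X\X^\top)^{-1}\X\bSis\X^\top(\X\X^\top)^{-1}}$ and combining with the self-interpolation bias $\|\bSis^{1/2}(\thb_1-\Iden)\btw\|^2$ contributed by the signal.

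The technical ingredients I would assemble are: (i) a non-asymptotic anisotropic local law for the resolvent $(\bSis^{1/2}\X^\top\X\bSis^{1/2}/n + \tau\Iden)^{-1}$ in the regime $\kappas>1$, already part of the toolkit of \cite{han2023distributionridgelesssquaresinterpolators} and of the sample-covariance random matrix theory literature; (ii) Hanson--Wright type concentration for the $\z$-quadratic and $\z$-linear terms after conditioning on $\X$; and (iii) the hypothesis $R+1<\Ms$, under which the constants in all of the above concentration statements depend on $(\btst,\btw)$ only through bounded $\ell_2$ norms, delivering the uniformity $\sup_{\btst,\btw\in\B_p(R)}$ for free. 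The main obstacle I anticipate is producing the $\eps^4$ exponent in the tail: the excess risk is (up to lower order) a degree-four polynomial in the discrepancy between $\btws$ and its Gaussian proxy, so a target deviation of order $\eps$ at the level of $\Rc(\btws)$ must be traced back to deviations of order $\eps^{1/4}$ in the underlying local law, and the $Cp$ prefactor arises from a union bound over the $O(p)$ entrywise resolvent bounds needed to pass from the local law to the quadratic-form concentration.
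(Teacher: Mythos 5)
Your high-level strategy matches the paper's: both proofs exploit the same key observation that, from the viewpoint of the second-stage ERM, $\btws$ is an ordinary min-norm interpolator with $\btw$ acting as the ``ground truth'' and $\btst$ entering only in the risk evaluation, and both then invoke the distributional characterization of \cite{han2023distributionridgelesssquaresinterpolators} (Theorem~\ref{theorem dist charac}) to replace $\btws$ by the Gaussian proxy $\thb_1\btw + \gammas(\btw)(\bSis+\taus\Iden)^{-1}\bSis^{1/2}\gs/\sqrt{p}$ and expand the resulting quadratic. The paper, however, treats the distributional characterization as a cited black box rather than re-deriving it from an anisotropic local law plus Hanson--Wright as you propose; this is a reasonable difference in packaging, but it changes which lemmas you actually need to write.

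The gap that matters is in how you get to apply the Lipschitz comparison. The test function $f_1(\x)=\tn{\bSis^{1/2}(\x-\btst)}^2$ is quadratic, hence \emph{not} globally Lipschitz, while Theorem~\ref{theorem dist charac} requires Lipschitz constant $L<L(M)$. You write that the hypothesis $R+1<\Ms$ makes the constants depend on $(\btst,\btw)$ only through bounded norms, ``delivering the uniformity $\sup_{\btst,\btw\in\B_p(R)}$ for free'', but boundedness of $\btst$ and $\btw$ is not what is needed: what is needed is a high-probability bound on $\tn{\btws}$ itself, since $\tn{\nabla f_1(\btws)}\le 2\spec{\bSis}\tn{\btws-\btst}$. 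The paper supplies this via a separate concentration argument (Corollary~\ref{corol concentration btw} / Proposition~\ref{prop concentration btws}), showing $\tn{\btws}\le\tn{\btw}+c_t$ up to an exponentially small event, which is exactly where the condition $R+1<\Ms$ is used (to ensure the resulting Lipschitz constant stays below the threshold $L(\Ms)$). Your sketch needs that lemma inserted before the Gaussian-proxy comparison can be invoked. Relatedly, your diagnosis of the $\eps^4$ tail as arising from ``a degree-four polynomial in the discrepancy between $\btws$ and its Gaussian proxy'' is not the right mechanism: once the risk is made Lipschitz by restricting to the high-probability norm ball, the $Cp\,e^{-p\eps^4/C}$ tail is inherited verbatim from the cited characterization for Lipschitz functionals; no further loss of exponent occurs at the risk level.
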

The proof of Theorem \ref{theorem risk charac one step} utilizes the non-asymptotic characterization of the minimum norm interpolator in \cite{han2023distributionridgelesssquaresinterpolators}, which is based on the convex Gaussian min-max theorem \citep{Gordon, pmlr-v40-Thrampoulidis15}. The proof is deferred to Appendix \ref{proof theorem risk charac one step}.

%\vspace{-0.15cm}
%The proof of Theorem \ref{theorem risk charac one step} is provided in Appendix \ref{proof theorem risk charac one step} and it is based on the non-asymptotic characterization of the minimum $\ell_2$ norm interpolator presented in \cite{han2023distributionridgelesssquaresinterpolators}. This 
\begin{comment}

\red{
Using the risk estimate in \eqref{risk one step asymp}, we express the difference in the asymptotic risk between using the surrogate parameter $\btw$ and the true parameter $\btst$ as follows:
\begin{equation}\label{risk diff one step asymp}
    \begin{split}
    \bar{\Rc}^{s2t}_{\kappas, \sigmas}(\bSis, \btst, \btw) - \bar{\Rc}^{s2t}_{\kappas, \sigmas}(\bSis, \btst, \btst) & = \E_{\gs} \left[ \tn{\bSis^{1/2} (\thb_1 \btw + \thb_2 - \btst)}^2 \right] - \E_{\gs} \left[ \tn{\bSis^{1/2} (\thb_1 \btst + \thb_2 - \btst)}^2 \right] \\
    & = \tn{\bSis^{1/2} (\thb_1 \btw - \btst)}^2 + \left( \gammas^2(\btw) - \gammas^2(\btst) \right) \E_{\gs}[\thb_2^\top \bSis \thb_2] \\
    & \quad - \btst^\top (\Iden -\thb_1)^\top \bSis (\Iden - \thb_1) \btst .
     \end{split}
\end{equation}
}
\end{comment}

The surrogate parameter $\btw$ that minimizes the individual term $(a)$ in \eqref{risk one step asymp} is $\thb_1^{-1} \btst$. On the other hand, the surrogate parameter $\btw$ that minimizes the individual term $(b)$ in \eqref{risk one step asymp} is the zero vector, which follows from \eqref{gamma expression} in Appendix \ref{app first section}. Now, we are going to jointly minimize the asymptotic risk in the next proposition. The optimal surrogate parameter is visualized as the green curve in Figure \ref{fig:risk-model-comparison}.

\begin{restatable}{proposition}{optimalsurrogate}\label{prop optimal surrogate}
    Let $\Omega = \frac{\tr{\bSis^2 (\bSis + \taus \Iden)^{-2}}}{n} $. The optimal surrogate $\btw$ minimizing the asymptotic risk in \eqref{risk one step asymp} is %the following:
 \vspace{-.2em}
   \begin{align*}
        \btwst = \left((\bSis + \taus \Iden)^{-1} \bSis + \frac{\Omega \taus^2}{1 - \Omega} \bSis^{-1} (\bSis + \taus \Iden)^{-1}\right)^{-1} \btst.
    \end{align*}
\end{restatable}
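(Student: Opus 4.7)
The plan is to first eliminate the self-referential $\gammas^2(\btw)$ appearing in \eqref{risk one step asymp}, then rewrite $\bar{\Rc}^{s2t}_{\kappas,\sigmas}(\bSis,\btst,\btw)$ as an explicit quadratic function of $\btw$, and finally recover $\btwst$ via the first-order optimality condition. A key observation that makes the algebra tractable is that $\bSis$, $\thb_1$ and $\Iden-\thb_1$ are all polynomials in $\bSis$ and therefore pairwise commute; in particular, $\Iden - \thb_1 = \taus(\bSis+\taus\Iden)^{-1}$.

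First I would compute the building blocks appearing in \eqref{risk one step asymp}. Using commutativity, $\thb_1^\top \bSis \thb_1 = \bSis^2 \M$, $(\Iden - \thb_1)^\top \bSis\, \thb_1 = \taus\bSis \M$, and $(\Iden-\thb_1)^\top \bSis (\Iden-\thb_1) = \taus^2 \M$, where $\M := \bSis(\bSis+\taus\Iden)^{-2}$, while $\E_{\gs}[\thb_2^\top \bSis \thb_2] = \tfrac{1}{p}\tr{\bSis^2(\bSis+\taus\Iden)^{-2}} = \Omega/\kappas$ by the definition of $\Omega$. Substituting $\btst=\btw$ into \eqref{risk one step asymp} (which kills the first and last terms) and plugging the result into the fixed-point relation \eqref{defn gammas} gives a linear equation in $\gammas^2(\btw)$ whose unique solution is
\[
\gammas^2(\btw) = \frac{\kappas}{1-\Omega}\left(\sigmas^2 + \taus^2\,\btw^\top \M\,\btw\right),
\]
which is valid because $\Omega<1$; this last inequality follows from the defining equation for $\taus$ via $\Omega = \kappas\cdot\tfrac{1}{p}\tr{\bSis^2(\bSis+\taus\Iden)^{-2}} < \kappas\cdot\tfrac{1}{p}\tr{\bSis(\bSis+\taus\Iden)^{-1}}=1$.

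Substituting this closed form for $\gammas^2(\btw)$ back into \eqref{risk one step asymp} yields a strictly convex quadratic in $\btw$, since $\M \succ 0$ by the assumption $\spec{\bSis^{-1}}\le \Ms$. Taking its gradient and setting it to zero gives
\[
2\bSis^2 \M(\btw-\btst) + \tfrac{2\Omega\taus^2}{1-\Omega}\M\,\btw - 2\taus \bSis \M\,\btst = 0.
\]
Multiplying through by $\M^{-1}$ and using $\bSis^2\btst + \taus\bSis\btst = \bSis(\bSis+\taus\Iden)\btst$ reduces this to the linear system
\[
\left(\bSis^2 + \tfrac{\Omega\taus^2}{1-\Omega}\Iden\right)\btwst = \bSis(\bSis+\taus\Iden)\btst.
\]
I would then factor the coefficient matrix on the left as $\bSis(\bSis+\taus\Iden)\bigl[(\bSis+\taus\Iden)^{-1}\bSis + \tfrac{\Omega\taus^2}{1-\Omega}\bSis^{-1}(\bSis+\taus\Iden)^{-1}\bigr]$ and invert both factors to arrive at the stated closed form.

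The main obstacle is not conceptual but algebraic bookkeeping: carefully identifying the commuting matrix factors so that $\M$ can be pulled out in every term, and verifying positivity of all the relevant coefficient matrices (which ultimately reduces to checking $\Omega<1$ and invertibility of $\bSis$). Once this is in place, strict convexity of the quadratic objective guarantees that the unique critical point is the global minimizer, completing the argument.
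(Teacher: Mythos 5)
Your proposal is correct and follows essentially the same route as the paper's own proof: eliminate $\gammas^2(\btw)$ via the fixed-point relation to obtain $\gammas^2(\btw)=\tfrac{\kappas}{1-\Omega}\bigl(\sigmas^2+\taus^2\btw^\top\M\btw\bigr)$ with $\M=\bSis(\bSis+\taus\Iden)^{-2}$, substitute to get an explicit quadratic in $\btw$, differentiate, and solve the resulting linear system using commutativity of all factors. The only difference is that you make explicit two points the paper leaves implicit — that $\Omega<1$ (so the denominator is nonzero) and that the objective is strictly convex (so the stationary point is the global minimizer) — which is a small but welcome tightening.
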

Note that, under the setting of Theorem \ref{theorem risk charac one step}, Proposition \ref{prop optimal surrogate} can be extended to the non-asymptotic risk by applying \eqref{non-asymp bound}. The corollary below then offers a direct interpretation of the optimal surrogate parameter $\btwst$. The proofs of both Corollary \ref{corol optimal surrogate} and Proposition \ref{prop optimal surrogate} are in Appendix \ref{proof theorem risk charac one step}.

\begin{restatable}{corollary}{optimalsurrogatecorollary}\label{corol optimal surrogate}
    Without loss of generality, suppose that $\bSis$ is diagonal.\footnote{If not, there exists an orthogonal matrix $\Ub \in \R^{p \times p}$ s.t.\ $\Ub \bSis \Ub^\top$ is diagonal. Then, we can consider the covariance matrix as $\Ub \bSis \Ub^\top$ and the ground truth parameter as $\Ub \btst$, which behaves the same as the original parameters, see Observation \ref{transforming-x-beta}. \label{foot note joint diagonal}} Let $(\lambda_i)_{i=1}^p$ be the eigenvalues of $\bSis$ in non-increasing order and let $\zeta_i = \frac{\taus}{ \lambda_i + \taus}$ for $i \in [p]$. Then, the following results hold:
 \vspace{-.5em}
   \begin{enumerate}[leftmargin=*]
        \item  $\btwstnb_i = (\beta_{*})_i \left((1-\zeta_i) + \zeta_i \frac{\Omega}{1-\Omega} \frac{\zeta_i}{1-\zeta_i}\right)^{-1}$ for every $i \in [p]$.
        \item $|\btwstnb_i| > |(\beta_{*})_i|$ if and only if $1-\zeta_i > \Omega = \frac{\sum_{j=1}^p (1-\zeta_j)^2}{\sum_{j=1}^p(1-\zeta_j)}$ for every $i \in [p]$.
        \item $\btwst = \btst$ if and only if the covariance matrix $\bSis = c\Iden$ for some $c \in \R$.
    \end{enumerate}
\end{restatable}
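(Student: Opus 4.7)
The plan is to derive all three items as direct consequences of Proposition \ref{prop optimal surrogate} by working in the eigenbasis of $\bSis$. Since $\bSis$ is diagonal with entries $(\lambda_i)_{i=1}^p$, the two matrices appearing inside the inverse in Proposition \ref{prop optimal surrogate}, namely $(\bSis+\taus\Iden)^{-1}\bSis$ and $\bSis^{-1}(\bSis+\taus\Iden)^{-1}$, are themselves diagonal. Hence the matrix to be inverted is diagonal, and the $i$-th coordinate of $\btwst$ decouples from the rest and can be read off directly in terms of $\lambda_i$.

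For item (1), I would substitute the identities $\lambda_i/(\lambda_i+\taus)=1-\zeta_i$, $1/(\lambda_i+\taus)=\zeta_i/\taus$, and $1/\lambda_i=\zeta_i/(\taus(1-\zeta_i))$ (all immediate from $\zeta_i=\taus/(\lambda_i+\taus)$) to rewrite the two diagonal entries as $1-\zeta_i$ and $\zeta_i^2/(\taus^2(1-\zeta_i))$, respectively. Combining with the weight $\Omega\taus^2/(1-\Omega)$ cancels $\taus^2$ and produces the scalar $(1-\zeta_i)+\frac{\Omega}{1-\Omega}\frac{\zeta_i^2}{1-\zeta_i}$; its reciprocal is exactly the factor multiplying $(\beta_*)_i$ in item (1).

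For item (2), the first step is to rewrite $\Omega$ in the form displayed in the statement. Computing the trace entrywise gives $\tr{\bSis^2(\bSis+\taus\Iden)^{-2}}=\sum_{j=1}^p(1-\zeta_j)^2$, while the defining equation for $\taus$ combined with $\kappas=p/n$ yields $\sum_{j=1}^p(1-\zeta_j)=n$. Dividing gives $\Omega=\sum_j(1-\zeta_j)^2/\sum_j(1-\zeta_j)$, and since $(1-\zeta_j)\in(0,1)$ for each $j$, this representation also shows $\Omega\in(0,1)$. The scalar in item (1) is therefore positive, so the amplification condition $|\btwstnb_i|>|(\beta_*)_i|$ is equivalent to that scalar being strictly less than one. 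Multiplying the inequality $(1-\zeta_i)+\frac{\Omega}{1-\Omega}\frac{\zeta_i^2}{1-\zeta_i}<1$ by the positive quantity $(1-\zeta_i)(1-\Omega)$ and collecting terms gives $\zeta_i\bigl(\Omega-(1-\zeta_i)\bigr)<0$, which (since $\zeta_i>0$) reduces to the stated criterion $1-\zeta_i>\Omega$.

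For item (3), sufficiency is immediate: if $\bSis=c\Iden$, all $\zeta_i$ equal a common value $\zeta$, so $\Omega=1-\zeta$ and the scalar in item (1) collapses to $(1-\zeta)+\frac{1-\zeta}{\zeta}\cdot\frac{\zeta^2}{1-\zeta}=(1-\zeta)+\zeta=1$, giving $\btwst=\btst$. Conversely, reading $\btwst=\btst$ as an identity at the operator level (i.e., valid for all $\btst\in\R^p$), the same reduction as in item (2) with equality forces $1-\zeta_i=\Omega$ for every $i$, hence the $\zeta_i$ are constant across $i$ and so are the $\lambda_i$, yielding $\bSis=c\Iden$. The only nontrivial step in the whole argument is the trace-based simplification of $\Omega$ using the fixed-point equation for $\taus$; everything else is routine coordinate-wise algebra, so I expect no substantial obstacle beyond careful bookkeeping of the identities relating $\lambda_i$, $\taus$ and $\zeta_i$.
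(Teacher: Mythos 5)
Your proof is correct and takes essentially the same coordinate-wise route as the paper's: read off the $i$-th entry from Proposition~\ref{prop optimal surrogate} using the diagonality of $\bSis$, substitute the $\zeta_i$ identities, and reduce the amplification/equality conditions to the scalar inequality $1-\zeta_i \gtrless \Omega$. Two small points where you are actually a bit more explicit than the paper: you derive the identity $\Omega = \sum_j(1-\zeta_j)^2/\sum_j(1-\zeta_j)$ from the trace and the fixed-point equation $\sum_j(1-\zeta_j)=n$ (the paper uses it without comment), and in item~(3) you correctly flag that the converse should be read as an operator-level identity (valid for all $\btst$), which avoids the degenerate case where some $(\beta_*)_i=0$ would make the pointwise implication fail -- the paper glosses over this.
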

The first part shows that the optimal surrogate parameter is fully characterized by $(\zeta_i)_{i=1}^p$, which only depends on the covariance spectrum (via $\lambda_i$) and the sample size $n$ (via $\tau_t$). Note that the spectrum $(\zeta_i)_{i=1}^p$ characterizes the risk in both linear regression and random features regression, shown in \cite{pmlr-v238-ildiz24a} and \cite{simon2023eigenlearning}, respectively. As the eigenvalues $(\lambda_i)_{i=1}^p$ are ordered, the 
$\zeta_i$'s are ordered as well, and the second part of the corollary identifies a threshold behavior: before the transition point $1-\zeta_i=\Omega$, the entries of the surrogate are amplified w.r.t.\ the ground-truth parameter $\btst$, while they experience shrinkage after the transition. The threshold corresponds to the ratio of the sample second moment to the sample first moment of the random variable whose realization is given by %, % (or, equivalently, the sum of mean and index of dispersion), where each 
$(1 - \zeta_i)$, % represents a realization of this variable. 
%This implies that, if the corresponding $(1 - \zeta_i)$ exceeds the sum, the magnitude of the optimal surrogate parameter is larger than that of the true parameter, and vice versa. 
and it arises from the optimization of the %This method of constructing the surrogate parameter optimizes the 
trade-off between the bias and variance terms in \eqref{risk one step asymp}. Finally, the third part of the corollary shows that, unless the eigenvalues of the covariance matrix are constant, there is potential for improvement by tuning the surrogate parameter.

The intuition behind improving the performance of the standard target model by utilizing a surrogate parameter $\btw$ different from $\btst$ is associated with the implicit regularization of the minimum norm interpolator in the over-parametrized region $(p > n)$. As long as the covariance matrix eigenvalues are not constant, there is a way to mitigate the bias risk caused by the implicit regularization. This implicit regularization term is specific to the over-parametrized region. Indeed, in the next proposition (proved in Appendix \ref{app first section}), we will show that the optimal surrogate parameter $\btw$ is $\btst$ when the target model is under-parametrized:
\begin{restatable}{proposition}{optimalsurrogateunder}\label{prop optimal surrogate under}
   The optimal surrogate parameter $\btw$ that minimizes the asymptotic risk in the under-parametrized region ($n > p$) is equivalent to the ground truth parameter $\btst$. In other words, for any $\btw$, the surrogate-to-target model cannot outperform the standard target model in the asymptotic risk.
\end{restatable}
In other words, the result above shows that the improvement in the surrogate-to-target model compared to the standard target model is special to the over-parameterized region.

%The features associated with smaller eigenvalues (representing less informative or noisier directions) undergo more aggressive shrinkage.
%At the same time, we introduce some bias by not fully capturing the true coefficients. The optimal surrogate parameter balances the trade-off between bias and variance by achieving lower overall excess risk.

%\subsection{Weak-to-strong Generalization}

\subsection{Weak-to-strong generalization}\label{weak-to-strong-section}
To connect with knowledge distillation \citep{hinton2015distilling} and weak-to-strong generalization \citep{burns2023weaktostronggeneralizationelicitingstrong}, we allow the surrogate model to use fewer features, $\pw < p$, by introducing a mask operation $\Pip(\x)$, where $\Pip(\x) \in \R^{\pw}$ selects $\pw$ features from the full set of $p$ features in $\x \in \R^p$. Alongside this mask, we adjust the distributions for both the surrogate and target models as %follows:
\begin{align*}
    &(\Pip(\xt), \yt) \sim \Dcw^{\pw} \text{ follows } \yt = \Pip(\xt)^\top \Pip(\btst) + \zt, \text{ where } \btst \in \R^p, \xt \sim \Nn(\boldsymbol{0}, \bSiw), \zt \sim \Nn(0, \sigmaw^2), %\btw \in \R^{\pw} 
    \\ 
    &(\x, \yw) \sim \Dcs^{\pw}(\btw) \text{ follows } \yw = \Pip(\x)^\top \btw + z, \text{ where } \btw \in \R^{\pw}, %\btst \in \R^p,
    \x \sim \Nn(\boldsymbol{0}, \bSis), z \sim \Nn(0, \sigmas^2). %, \btws \in \R^p.
\end{align*}
Then, $\btws$ is estimated based on the samples from $\Dcs^{\pw}(\btw)$, and the risk $\Rc(\btws)$ is still calculated with respect to the standard target model distribution $\Dcs(\btst)$ as defined in \eqref{setup risk}. As we focus on analyzing the model shift case, we assume that the covariance matrices $\bSiw$ and $\bSis$ are identical.  

In this formulation, the surrogate model is considered \textit{weak} because it has access to fewer features, while the target model is the \textit{strong} model. We now address the following question: Can the surrogate-to-target model outperform the standard target model, provided in \eqref{standard target model}, in the absence of model shift ($\Pip(\btst) = \btw$)?
The absence of model shift corresponds to the case where the surrogate model has infinitely many data. The next proposition provides a sufficient condition to answer the question above in the affirmative, and it derives the optimal selection of features.

\begin{restatable}{proposition}{proppopulationsurrogate}\label{proposition population surrogate}
    Consider the target model in \eqref{standard target model}, % Similar to Corollary \ref{corol optimal surrogate}, 
    assume that $\bSis$ is diagonal, and recall the definitions of $\zeta_i$ and $\Omega$. Then, the following results hold:
    \begin{enumerate}[leftmargin=*]\vspace{-5pt}
        \item If the mask operation $\Pip$ selects all the features that satisfy $1- \zeta_i^2 > \Omega$, then the surrogate-to-target model outperforms the standard target model in the asymptotic risk in \eqref{risk one step asymp}. \vspace{-5pt}
        \item  Let $\boldsymbol{M}$ represent the set of all possible $\Pip$, where $|\boldsymbol{M}| = 2^p$. The optimal $\Pipst$ for the asymptotic risk in \eqref{risk one step asymp} within $\boldsymbol{M}$ is the one that selects all features satisfying $1 - \zeta_i^2 > \Omega$.
    \end{enumerate}
\end{restatable}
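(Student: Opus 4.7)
The plan is to reduce the claim to a direct computation from the asymptotic risk formula in Definition~\ref{defn asymp w2s}. Since $\bSiw = \bSis$ and the surrogate carries no model shift ($\btw = \Pip(\btst)$), the target regresses on the full feature $\x$ with labels $y^s = \Pip(\x)^\top \btw + z = \bar{\btw}^\top \x + z$, where $\bar{\btw} \in \R^p$ zero-pads $\btw$ onto the full set of coordinates. By the orthogonal-basis argument in footnote~\ref{foot note joint diagonal}, I take $\bSis = \mathrm{diag}(\lambda_1, \dots, \lambda_p)$ without loss of generality, and set $S = \{i : \Pip \text{ keeps coordinate } i\}$, so that $\bar{\btw}_i = (\beta_*)_i \mathbf{1}\{i \in S\}$.

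The first computation is to substitute $\btw \leftarrow \bar{\btw}$ in \eqref{risk one step asymp}. Under diagonality, $\thb_1 = \mathrm{diag}(1-\zeta_i)$ and $\Iden - \thb_1 = \mathrm{diag}(\zeta_i)$, so each quadratic form in \eqref{risk one step asymp} collapses to a coordinate-wise sum. Using the identity $(1-\zeta_i)^2 + 2\zeta_i(1-\zeta_i) = 1 - \zeta_i^2$ to merge the shift-quadratic and cross terms, and the identity $\frac{1}{p}\sum_i (1-\zeta_i)^2 = \Omega / \kappas$ in the noise term, I would obtain
\[
\bar{\Rc}^{s2t}_{\kappas, \sigmas}(\bSis, \btst, \bar{\btw}) \;=\; \sum_{i \notin S} \lambda_i (\beta_*)_i^2 (1 - \zeta_i^2) \;+\; \sum_i \lambda_i \zeta_i^2 (\beta_*)_i^2 \;+\; \gammas^2(\bar{\btw}) \cdot \frac{\Omega}{\kappas}.
\]

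The crucial step is resolving the implicit quantity $\gammas^2(\bar{\btw})$ defined in \eqref{defn gammas}. Applying the same expansion with both arguments of $\bar{\Rc}^{s2t}_{\kappas, \sigmas}$ set to $\bar{\btw}$ kills the shift-dependent terms and produces a linear self-consistent equation in $\gammas^2(\bar{\btw})$; only indices in $S$ appear because $\bar{\btw}$ vanishes off $S$. Solving it yields $\gammas^2(\bar{\btw}) = \kappas \bigl(\sigmas^2 + \sum_{i \in S} \lambda_i \zeta_i^2 (\beta_*)_i^2\bigr)/(1 - \Omega)$. Substituting this back and splitting $\sum_i = \sum_{i \in S} + \sum_{i \notin S}$, the cross-terms telescope into the clean per-feature decomposition
\[
\bar{\Rc}(S) \;=\; \sum_{i \notin S} \lambda_i (\beta_*)_i^2 \;+\; \frac{\Omega \sigmas^2 + \sum_{i \in S} \lambda_i \zeta_i^2 (\beta_*)_i^2}{1 - \Omega}.
\]

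With this formula both parts follow immediately. For Part~1, the standard target corresponds to $S = [p]$, and
$\bar{\Rc}(S) - \bar{\Rc}([p]) = \sum_{i \notin S} \lambda_i (\beta_*)_i^2 \cdot (1 - \Omega - \zeta_i^2)/(1 - \Omega)$,
which is non-positive whenever every excluded coordinate satisfies $1 - \zeta_i^2 \leq \Omega$, as guaranteed by the prescribed rule. For Part~2, the decomposition decouples across coordinates: keeping $i$ changes its contribution from $\lambda_i (\beta_*)_i^2$ to $\lambda_i \zeta_i^2 (\beta_*)_i^2 / (1-\Omega)$, so the minimizer over $\boldsymbol{M}$ keeps $i$ exactly when $\zeta_i^2/(1-\Omega) < 1$, i.e., $1 - \zeta_i^2 > \Omega$. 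The main obstacle is the fixed-point resolution of $\gammas^2(\bar{\btw})$: one has to exploit that $\bar{\btw}$ vanishes outside $S$ so that the self-consistent equation becomes linear and the $(1-\Omega)^{-1}$ factor emerges, which in turn drives both the per-coordinate decoupling and the threshold $1-\zeta_i^2 > \Omega$. Everything else is bookkeeping.
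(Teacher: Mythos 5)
Your proposal is correct and follows essentially the same route as the paper: substitute the zero-padded $\bar{\btw}$ into the asymptotic risk \eqref{risk one step asymp} under diagonal $\bSis$, resolve the self-consistent equation for $\gammas^2(\bar{\btw})$ to expose the $(1-\Omega)^{-1}$ factor, and obtain the per-coordinate decomposition $\bar{\Rc}(S) = \sum_{i\notin S}\lambda_i(\beta_*)_i^2 + \bigl(\Omega\sigmas^2 + \sum_{i\in S}\lambda_i\zeta_i^2(\beta_*)_i^2\bigr)/(1-\Omega)$, from which both claims follow by comparing the kept versus dropped per-feature contributions $\lambda_i\zeta_i^2(\beta_*)_i^2/(1-\Omega)$ versus $\lambda_i(\beta_*)_i^2$. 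The only cosmetic difference is that you spell out the fixed-point resolution of $\gammas^2$ as a separate linear equation, while the paper plugs in the closed form directly; the substance and the per-coordinate threshold $1-\zeta_i^2 > \Omega$ are identical.
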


%\textcolor{red}{Therefore, the optimal dimensionality of the masked surrogate model that minimizes the surrogate-to-target model risk is determined by including only those features corresponding to indices where $\zeta_i^2 \leq 1 - \Omega$.} 
The proof of Proposition \ref{proposition population surrogate} is provided in Appendix \ref{proof population surrogate}, and the result can be extended to the non-asymptotic risk by applying Theorem \ref{theorem risk charac one step}. Similarly to Corollary \ref{corol optimal surrogate}, the result above identifies a threshold behavior: the entries of the surrogate are masked (i.e., set to $0$) after the transition point $1-\zeta_i^2=\Omega$, while they coincide with the ground-truth parameter $\btst$ otherwise. The transition  point changes with respect to Corollary \ref{corol optimal surrogate} and, as $1 - \zeta_i^2 > 1 - \zeta_i$, it is shifted to the right:
%Recall from Corollary \ref{corol optimal surrogate} that the threshold for the optimal surrogate parameter $\btwst$ is determined by the largest index $i$ satisfying $1 - \zeta_i > \Omega$. Since $0 < \zeta_i < 1$ by definition, it follows that $1 - \zeta_i^2 > 1 - \zeta_i$. 
%This indicates that 
the optimal mask includes not only features whose magnitude increases, but also features whose magnitude decreases while selecting the optimal surrogate $\btwst$. 

In Figure \ref{fig:intro eigens}, we illustrate the optimal $\Pip$ and $\btw$, showing that the threshold associated with the optimal $\Pip$ is larger than the threshold associated with the transition from amplification to shrinkage in the optimal $\btw$. In addition, we note that the ratio between the green curve and the blue curve in Figure \ref{fig:intro eigens} is not monotone with respect to $\lambda_i$. In Figure \ref{fig:intro underparametrized}, we also present a comparison of their associated risks.

%\emr{Discuss Figure 2, the mechanism to select features}
%\ege{I added some discussion but it should be edited and we can change the detail level depending on how much we want to highlight it. }

In Figure \ref{fig:resnet}, we examine the surrogate-to-target model in the context of image classification. Specifically, we fine-tune a pretrained ResNet-50 model \citep{he2015deepresiduallearningimage} using both ground-truth labels and predictions from a surrogate (weak) model on the CIFAR-10 dataset \citep{krizhevsky2009learning}. The surrogate models are shallow, 3-layer convolutional neural networks with varying parameter sizes. In all cases, surrogate-to-target models consistently outperform surrogate models across different model sizes. However, in this setting, surrogate-to-target models do not outperform the standard target (strong) model. This is in agreement with the weak-to-strong results in \cite{burns2023weaktostronggeneralizationelicitingstrong}, where the GPT-4 model trained with GPT-2 labels performs comparably to GPT-3.5. \red{The reason why the surrogate-to-target model underperforms the standard target model is the surrogate model is not able to follow the feature selection mechanism characterized in Proposition \ref{proposition population surrogate}.} This suggests that the feature selection mechanism is crucial for surpassing the performance of the \red{standard} target model. \red{We provide further experimental details in Appendix \ref{app experimental details}.}

\section{Fundamental limits and scaling laws}\label{sect scaling laws}
\vspace{-.5em}

We now study the fundamental limits of the surrogate-to-target model with the optimal surrogate parameter $\btwst$ (see Proposition \ref{prop optimal surrogate}) and the optimal mask operator $\Pipst$ (see Proposition \ref{proposition population surrogate}). Our analysis shows that, when eigenvalues ($\lambda_i$) and signal coefficients ($\lambda_i \beta_i^2$) follow a power law, the risk of the surrogate-to-target model under the optimal selection of the parameters $\btwst$ and $\Pipst$ scales the same as that of the target model (even though there is a strict improvement in the risk, as per Corollary \ref{corol optimal surrogate}). By Observation \ref{observation transfer s2t equivalency}, this also indicates that the gain obtained by the covariance shift model, as outlined in \cite{mallinar2024minimumnorminterpolationcovariateshift}, does not change the scaling law. We start our analysis with the definition of the omniscient test risk estimate.

\begin{figure*}[t!]
     \centering
     \begin{subfigure}[b]{0.49\textwidth}
         \centering
         \includegraphics[width=\textwidth]{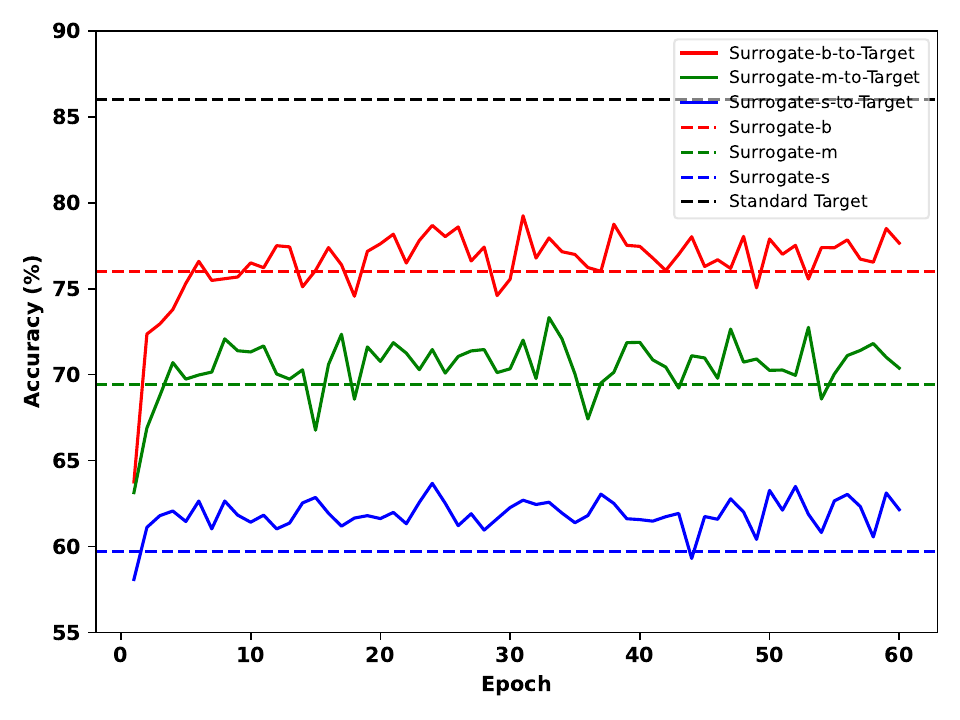}
         \caption{\small{Weak-to-strong on CIFAR-10}}
         \label{fig:resnet}
     \end{subfigure}
     \begin{subfigure}[b]{0.49\textwidth}
         \centering
         \includegraphics[width=1.07\textwidth]{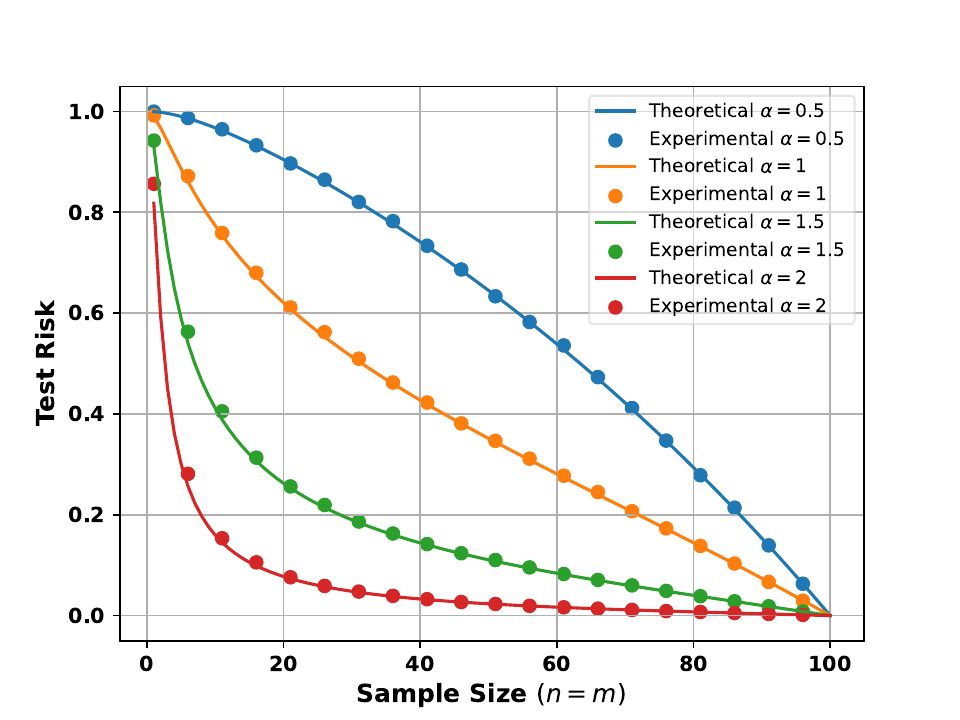}
         \caption{\small{Comparison of theoretical and experimental risks}}
         \label{fig:formula_vs_experimental}
     \end{subfigure}
        \caption{\textbf{(a):} On the CIFAR-10 dataset, we fine-tune a ResNet50 model using the ground-truth labels (target) and the predictions of three weak convolutional models (surrogate) with different capacities: big (b), medium (m), and small (s). We observe that surrogate-to-target models consistently outperform surrogate models' accuracies, even though they are trained on the surrogate models' predictions. \textbf{(b):} We compare the experimental two-stage risk with our estimated theoretical risk. In the experimental setup, \(p = 100\), and we vary \(n = m\) from 1 to 100. \red{Both feature covariances follow the power-law structure $\lambda_i = i^{-\alpha}$ for $\alpha = 0.5, 1, 1.5$ and $2$; the ground truth parameter $\btst$ is specified as $\beta_i = 1$.}
        }
        %\label{fig:risk-model-comparison}
        \vspace{-0.3cm}
\end{figure*}

%the gain obtained by the optimal selection of the parameters

%does not affect the scaling law even though To analyze the scaling law of the excess test risk, we handle the asymptotic properties of the variables $\taus$ and $\Omega$ defined in Definition \ref{risk one step asymp} and Proposition \ref{proposition population surrogate}, respectively. Moreover, we scaling law of the thresholds mentioOur analysis handles asymptotic properties and the scaling laws of the excess test risk. \textcolor{red}{(also mention optimal betaw)}. First, we present the excess test risk estimate for the overparameterized linear regression model.

\begin{restatable}[Omniscient test risk estimate]{definition}{omniscentriskestimate}\label{omniscent test risk estimate}
Fix $p > n \geq 1$. Given a covariance $\bSi = \Ub\operatorname{diag}(\boldsymbol{\lambda})\Ub^{\top}$, $\btst$, and the noise term $\sigma$, set $\btbrb = \Ub^{\top} \btst$ and define $\tau \in \mathbb{R}$ as the unique non-negative solution of $n = \sum_{i=1}^p \frac{\lambda_i}{\lambda_i + \tau}.$ Then, the \red{omniscient} excess test risk estimate is the following:
\vspace{-.5em}
\begin{align}
\Rcom (\bth) &\approx  \mathbb{E}_{\bth \sim D(\btst)} \left[ (y - \x^{\top} \bth)^2 \right] - \sigma^2 = \frac{\sigma^2 \Omega + \Bc (\btbrb) }{1 - \Omega}, \label{omniscent risk} \\
\text{where} \quad &\zeta_i = \frac{\tau}{\lambda_i + \tau}, \quad 
\Omega = \frac{1}{n} \sum_{i=1}^{p} (1-\zeta_i)^2,  \quad \Bc (\btbrb) = \sum_{i=1}^{p} \lambda_i \zeta_i^2 \Bar{\beta_i}^2. \nonumber 
\end{align}
\end{restatable}

The above test risk estimate yields exact results (and, hence, $\approx$ in \eqref{omniscent risk} becomes $=$) in the proportional limit via the analysis of Section \ref{sect first section}. \red{In other words, the omniscient test risk estimate is identical to $\bar{\Rc}^{s2t}_{\kappa, \sigma}(\bSi, \btst, \btst)$, which can be derived by substituting $\btw = \btst$ into Equation \ref{risk one step asymp}}. Specifically, suppose that the empirical distributions of $\btbrb$ and $\boldsymbol{\lambda}$ converge as $p\to\infty$ having fixed the ratio $p/n = \kappa$. Then, the risk obtained in Theorem \ref{theorem risk charac one step} converges to the omniscient risk estimate given in \eqref{omniscent risk}, as proved in Appendix \ref{app scaling laws}. We will use this omniscient risk estimate in the limit of $p \to \infty$, as considered in several papers \citep{Cui_2022, simon2024bettermodernmachinelearning, wei2022toyrandommatrixmodels}. Yet, our empirical validations in Figure \ref{fig:risk-model-comparison} demonstrate that this framework yields consistent results even when applied to scenarios with moderately sized $p$ and $n$.

%Following the methodologies outlined \emr{by Breiman and Freedman (1983) and by Wei et al. (2022), add Simon} we will also refer to this as the "omniscient risk estimate". \textcolor{red}{We will use this omniscient risk estimate in the limit of $p \to \infty$.} This setting is well-studied by several papers \cite{simon2024bettermodernmachinelearning}, \cite{Cui_2022}.

Throughout the section, we analyze the case where the surrogate parameter $\btw$ is given, therefore we need to take into account only the target covariance matrix $\bSis$. Without loss of generality, we assume that the covariance matrix $\bSis$ is diagonal by Observation \ref{transforming-x-beta}. From now on, we will consider the particular case of power-law eigenstructure, that is $\bSi_{i,i} = \lambda_i = i^{-\alpha}$. The omniscient risk under this structure still depends on the parameters $\taus$ and $\Omega$, and the next proposition analyzes them asymptotically. Its proof is in Appendix \ref{app scaling laws}.

%We see that calculating the omniscient risk estimate involves the $\btbrb$ rather than the original $\btst$. Yet, applying this transformation initially ensures that we can directly use $\btst$ in the calculation (of $\Bc (\btbrb)$) because of the fact that $\mathbb{E} \left[ \z \z^{\top} \right] = \Iden \La \Iden $ and $\Iden \btst = \btst$.

%.The next proposition characterizes the behavior of In this case of the decaying spectrum, $\zeta_i$ increases as $\lambda_i$ decreases. Proposition \ref{proposition population surrogate} tells us that we need to find asymptotic bounds for $\taus$ and $\Omega$ in order to understand the cut-off index. Thus, we can talk about a cut-off index $i^*$ such that by selecting all features $i < i^*$ in the masked surrogate model, we minimize the surrogate-to-target model risk. Proposition \ref{closed-form pruning condition} analyzes these cut-off indices in the limit of $p \to \infty$. The following two propositions analyze $\taus$ and $\Omega$ asymptotically for this aim.

\begin{restatable}[Asymptotic analysis of $\taus$ and $\Omega$]{proposition}{asymptotictaubound}\label{asymptotic-tau-bound}
Let \red{the covariance matrix} $\bSi \in \R^{p \times p}$ be diagonal and $\bSi_{i,i} = \lambda_i = i^{-\alpha}$ for $1 < \alpha$. Recall \red{from Definition \ref{omniscent test risk estimate}} that, as $p\rightarrow\infty$, $\taus$ and $\Omega$ are given by the equations
\begin{align*}
    \sum_{i=1}^{\infty} \frac{\lambda_i}{\lambda_i + \taus} = n, \qquad n\Omega = \sum_{i=1}^{\infty} \left( \frac{i^{-\alpha}}{i^{-\alpha} + \taus} \right)^2.
\end{align*}
Then, the following results hold
\begin{equation}\label{eq:asex}
    \begin{split}
      \taus &= c n^{-\alpha} \left(1 + O ( n^{-1}) \right), \qquad \mbox{for } c = \left( \dfrac{\pi}{\alpha \sin{(\pi / \alpha)}} \right)^{\alpha},\\
      \Omega &= \dfrac{\alpha - 1}{\alpha} - O(n^{-1}).
    \end{split}
\end{equation}
\end{restatable}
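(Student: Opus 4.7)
The plan is to approximate each defining sum by a continuous integral, evaluate the resulting integral in closed form via a Beta-function identity together with Euler's reflection formula, and control the discretization error through an elementary integral-comparison bound. Since $\alpha>1$, both the series and the relevant improper integrals converge, so every step is well-defined.

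For the first equation, set $f(x) = 1/(1+\taus x^\alpha)$, so that the defining identity reads $n = \sum_{i=1}^\infty f(i)$. Because $f$ is positive, strictly decreasing, and satisfies $f(0)=1$, integral comparison gives
\[
\Bigl|\sum_{i=1}^\infty f(i) - \int_0^\infty f(x)\,dx\Bigr| \le f(0) = 1.
\]
The substitution $u = \taus^{1/\alpha}x$ transforms the integral into $\taus^{-1/\alpha}\int_0^\infty du/(1+u^\alpha)$, and the further change of variable $t = 1/(1+u^\alpha)$ identifies this last integral with a Beta function; Euler's reflection formula then gives
\[
\int_0^\infty \frac{du}{1+u^\alpha} = \tfrac{1}{\alpha}\,B\!\bigl(1-\tfrac{1}{\alpha},\tfrac{1}{\alpha}\bigr) = \tfrac{1}{\alpha}\Gamma\!\bigl(1-\tfrac{1}{\alpha}\bigr)\Gamma\!\bigl(\tfrac{1}{\alpha}\bigr) = \frac{\pi}{\alpha\sin(\pi/\alpha)}.
\]
Combining these yields $n = \taus^{-1/\alpha}\cdot \pi/(\alpha\sin(\pi/\alpha)) + O(1)$, and inverting produces $\taus = c\,n^{-\alpha}(1+O(n^{-1}))$ with the claimed constant $c$.

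For the second equation, apply the same strategy to $g(x) = 1/(1+\taus x^\alpha)^2$, which is still positive, decreasing, and bounded by $1$. The substitution $u = \taus^{1/\alpha}x$ reduces $\int_0^\infty g(x)\,dx$ to $\taus^{-1/\alpha}\int_0^\infty du/(1+u^\alpha)^2$, and Beta-function identification together with $\Gamma(2-\tfrac{1}{\alpha}) = (1-\tfrac{1}{\alpha})\Gamma(1-\tfrac{1}{\alpha})$ gives $\tfrac{1}{\alpha}B(2-\tfrac{1}{\alpha},\tfrac{1}{\alpha}) = \tfrac{\alpha-1}{\alpha}\cdot \pi/(\alpha\sin(\pi/\alpha))$. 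Combined with the integral-comparison bound for $\sum g(i)$ and the estimate for $\taus^{-1/\alpha}$ from the previous step, this yields $n\Omega = \tfrac{\alpha-1}{\alpha}\,n + O(1)$, and dividing by $n$ gives $\Omega = \tfrac{\alpha-1}{\alpha} + O(n^{-1})$.

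The main subtlety is promoting the $O(1)$ additive discretization error into the sharp $O(n^{-1})$ relative statement in the proposition. This works because the leading-order value of each sum is of magnitude $n$, so an $O(1)$ additive remainder translates, after inversion, into a multiplicative factor of $1 + O(n^{-1})$; the hypothesis $\alpha>1$ is used essentially both to guarantee convergence of the limiting integral and to ensure the dominant balance $\taus^{-1/\alpha}\asymp n$ is rigorous. A first-order Euler--Maclaurin correction would refine the $O(1)$ constant to $-1/2$, but the crude bound above already suffices for the stated precision.
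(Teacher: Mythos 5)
Your proposal is correct and takes essentially the same approach as the paper: both replace the defining series by the improper integrals $\int_0^\infty dx/(1+\taus x^\alpha)$ and $\int_0^\infty dx/(1+\taus x^\alpha)^2$, control the discretization error by monotonicity of the summand (yielding an $O(1)$ additive slack), and then invert the leading-order relation $\taus^{-1/\alpha} \asymp n$ to get the $1+O(n^{-1})$ multiplicative form for $\taus$ and, after substitution, the $O(n^{-1})$ correction for $\Omega$. The only cosmetic difference is that you explicitly evaluate the integrals via the Beta-function identity and Euler's reflection formula, whereas the paper quotes those values directly (following the cited reference).
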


\begin{comment}
\begin{restatable}[Asympotic analysis of $\Omega$]{proposition}{asymptoticomegabound}\label{asymptotic-omega-bound}
Let $ \taus = \left( \dfrac{\pi}{\alpha \sin{(\pi / \alpha)}} \right)^{\alpha} n^{-\alpha} \left(1 + O ( n^{-1}) \right)$ and $\Omega$ be the solution to %the below equation 
\begin{align*}
n\Omega = \sum_{i=1}^{\infty} \left( \frac{i^{-\alpha}}{i^{-\alpha} + \taus} \right)^2.
\end{align*}
Then, $\Omega = \dfrac{\alpha - 1}{\alpha} - O(n^{-1})$.
\end{restatable}
\end{comment}

Recall from Corollary \ref{corol optimal surrogate} and Proposition \ref{proposition population surrogate} that the cut-off indices for the optimal surrogate parameter $\btwst$ and the optimal mask operation $\Pipst$ are respectively $\zeta_i < 1 - \Omega$ and $\zeta_i^2 < 1 - \Omega$, which depend on $\taus, \Omega$. Armed with the asymptotic expressions in \eqref{eq:asex}, we now identify the cut-off indices as a function of the sample size $n$.

\begin{restatable}{proposition}{closedformpruningcondition}\label{closed-form pruning condition}
Set the constants $C_1 := \dfrac{\alpha \sin{(\pi / \alpha)}}{\pi (\alpha - 1)^{1 / \alpha}}$ and $C_2 := \dfrac{\alpha \sin{(\pi / \alpha)}}{\pi (\sqrt{\alpha} - 1)^{1 / \alpha}}$ and assume the power-law eigenstructure $\red{\bSi_{i,i}} = \lambda_i = i^{-\alpha}$ for $1 < \alpha$. \red{Let $\taus$ and $\Omega$ be the solutions given by Proposition \ref{asymptotic-tau-bound} and define $\zeta_i = \frac{\taus}{ \lambda_i + \taus}$.} Then, the indices $i$ for which $\zeta_i < 1 - \Omega$ are $i < nC_1 + O(1)$; while the indices $i$ for which is $\zeta_i^2 < 1 - \Omega$ are $i < nC_2 + O(1)$. 
\end{restatable}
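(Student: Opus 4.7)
The statement is a direct algebraic consequence of the asymptotic expressions for $\taus$ and $\Omega$ in Proposition \ref{asymptotic-tau-bound}. Since $\zeta_i = \taus/(\lambda_i + \taus)$ is strictly decreasing in $i$ under the power-law eigenstructure $\lambda_i = i^{-\alpha}$, the cut-off indices are determined by equality in each condition, and then converting to closed form is a matter of substituting the asymptotic expansions and tracking the $O(n^{-1})$ remainders.

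\textbf{Step 1: rearrange each threshold.} For the first condition, I would solve $\zeta_i < 1 - \Omega$ for $i$: multiplying through by $i^{-\alpha}+\taus$ and rearranging gives $i^{-\alpha} > \Omega\taus/(1-\Omega)$, i.e. $i < \bigl((1-\Omega)/(\Omega\taus)\bigr)^{1/\alpha}$. For the second condition, $\zeta_i^2 < 1 - \Omega$ is equivalent to $\zeta_i < \sqrt{1-\Omega}$ (both sides are non-negative), and the analogous manipulation yields $i < \bigl(\sqrt{1-\Omega}/((1-\sqrt{1-\Omega})\taus)\bigr)^{1/\alpha}$.

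\textbf{Step 2: substitute the asymptotics.} By Proposition \ref{asymptotic-tau-bound}, $1 - \Omega = 1/\alpha + O(n^{-1})$, whence $\sqrt{1-\Omega} = \alpha^{-1/2} + O(n^{-1})$ and $1 - \sqrt{1-\Omega} = (\sqrt{\alpha}-1)/\sqrt{\alpha} + O(n^{-1})$; also $\taus = c n^{-\alpha}(1 + O(n^{-1}))$ with $c^{1/\alpha} = \pi/(\alpha \sin(\pi/\alpha))$. Plugging these in and simplifying gives
\begin{align*}
\frac{1-\Omega}{\Omega\taus} = \frac{n^\alpha}{c(\alpha-1)}\bigl(1+O(n^{-1})\bigr), \qquad \frac{\sqrt{1-\Omega}}{(1-\sqrt{1-\Omega})\taus} = \frac{n^\alpha}{c(\sqrt{\alpha}-1)}\bigl(1+O(n^{-1})\bigr).
\end{align*}
Taking $\alpha$-th roots and using $c^{-1/\alpha} = \alpha\sin(\pi/\alpha)/\pi$, the first bound becomes $i < C_1 n \bigl(1+O(n^{-1})\bigr)^{1/\alpha}$ and the second becomes $i < C_2 n \bigl(1+O(n^{-1})\bigr)^{1/\alpha}$, with $C_1, C_2$ exactly as in the statement.

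\textbf{Step 3: absorb the error.} Since $(1 + O(n^{-1}))^{1/\alpha} = 1 + O(n^{-1})$ for fixed $\alpha > 1$, the multiplicative error contributes $O(1)$ to the right-hand side, yielding $i < C_1 n + O(1)$ and $i < C_2 n + O(1)$, respectively. No step should pose a real obstacle; the only mild care required is ensuring that the $O(n^{-1})$ factors arising from $\Omega$, $\taus$, and $\sqrt{1-\Omega}$ combine multiplicatively without degrading the order, which follows because all three are bounded away from zero for large $n$ (here one uses $\alpha > 1$ so that $\sqrt{\alpha} - 1 > 0$, which is essential for $C_2$ to be well-defined).
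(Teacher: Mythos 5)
Your proof is correct and follows essentially the same route as the paper: rearrange the inequality for $i$, substitute the asymptotics of $\taus$ and $\Omega$ from Proposition \ref{asymptotic-tau-bound}, and absorb the multiplicative $1+O(n^{-1})$ factor into the additive $O(1)$. The one worthwhile comparison is the handling of the $\zeta_i^2$ threshold: the paper rewrites $\zeta_i^2 > c'$ as a quadratic in $\taus i^{\alpha}$ and completes the square before discarding the infeasible root, whereas you observe that both sides are non-negative and reduce immediately to $\zeta_i < \sqrt{1-\Omega}$, which is the same linear manipulation used for the first threshold. Your shortcut is cleaner and avoids the sign analysis needed to reject the spurious root of the quadratic, though both land on the identical expression $\taus^{-1/\alpha}\bigl(\sqrt{c'}/(1-\sqrt{c'})\bigr)^{1/\alpha}$.

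One small slip in the prose: you write that $\zeta_i = \taus/(\lambda_i + \taus)$ is \emph{strictly decreasing} in $i$; under $\lambda_i = i^{-\alpha}$ the eigenvalue $\lambda_i$ decreases with $i$, so $\zeta_i$ is in fact \emph{increasing} in $i$ (from $\zeta_1 = \taus/(1+\taus)$ up toward $1$). This is precisely what makes the solution set of $\zeta_i < 1-\Omega$ an initial segment $\{1,\dots,k\}$, matching the form $i < nC_1 + O(1)$ in the statement. Your subsequent algebra is unaffected by this mislabel, so the conclusion stands; just flip the word.
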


%Recall from Proposition \ref{prop optimal surrogate} that to obtain the optimal surrogate-to-target model risk the masked surrogate model should collect all features $i$ for which $\zeta_i^2 > 1 - \Omega$. Thus, Proposition \ref{closed-form pruning condition} tells that the first $nC_2 + O(1)$ dimensions should be selected when the corresponding eigenvalues are in descending order.

%\begin{remark}
%Note that our step adds noise during the labeling of the dataset $\Scs$ by the surrogate model. One can see that if labels were generated instead as $\y = \X \btw$ without adding any noise, then the optimal number of dimensions in the population surrogate model can be reduced further, and the gap between the surrogate-to-target and target reference models increases.
%\end{remark}

The result above is proved in Appendix \ref{app scaling laws}, and it shows that, as the sample size $n$ decreases, the cut-off indices of both the optimal surrogate parameter $\btwst$ and the optimal mask $\Pipst$ shift to the left \red{linearly in $n$}. This also implies that, with less data, optimal surrogate models tend to be more sparse. 

Next, we address the question of how the excess test risk scales with respect to the sample size $n$, when the surrogate parameter is the optimal $\btwst$. Specifically, Proposition \ref{scaling-law-arbitrary-surrogate} below shows that, under a power-law decay of both the eigenvalues ($\lambda_i$) and the signal coefficients ($\lambda_i \beta_i^2$), the excess test risk of the optimal surrogate-to-target model scales the same as the standard target model. %The next proposition formalizes this scaling-law behavior. 

Before stating the result, we make a comment on the noise assumption needed 
to ensure that the scaling law of the excess test risk remains unaffected by the introduction of noise, which allows us to analyze the model's inherent error. Specifically, we choose the variance of the noise term $\sigmas^2$ to be at most of the order of the scaling law of the excess test risks when $\sigmas^2 = 0$. This corresponds to $\sigmas^2 = O(n^{-\gamma})$, where $\gamma$ is the exponent of the scaling law in the noiseless setting.  %(in fact, we explore that $\gamma = \beta - 1$ when $\beta < 2\alpha + 1$ and $\gamma = 2\alpha$ otherwise), which allows us to analyze the model's inherent error. 
Conversely, a fixed noise variance $\sigmas^2 = \Theta(1)$ that does not decay with $n$ would cause the noise to overshadow the uncaptured part of the signal, which scales down with $n$. In this unintended scenario, the noise would dominate our observations.

\begin{restatable}[Scaling law]{proposition}{scalinglawarbitrarysurrogate}\label{scaling-law-arbitrary-surrogate}
\red{Let the covariance matrix $\bSis$ be diagonal with eigenvalues $\lambda_i$, and let the ground-truth parameter $\btst$ have components $\beta_i$ corresponding to each feature.} Assume that both eigenvalues $\lambda_i$ and signal coefficients $\lambda_i \beta_i^2$ follow a power-law decay, i.e., $\lambda_i \beta_i^2 = i^{-\beta}$ and $\lambda_i = i^{-\alpha}$ for $ \alpha $, $\beta > 1$. Let the optimal surrogate parameter $\btwst$ be given by Proposition \ref{prop optimal surrogate} and define the minimum surrogate-to-target risk attained by $\btwst$ as $ \Rcomst (\btws) = \underset{}{\mathrm{min}} \ \Rcom (\btws),$ \red{ where $\Rcom (\btws)$ is described in Definition \ref{omniscent test risk estimate}.} Then, in the limit of $p \to \infty$, the excess test risk of the surrogate-to-target model with an optimal surrogate parameter scales the same as that of the standard target model. Specifically, we have
\begin{align*}
\Rcomst (\btws) &= \Theta(n^{-(\beta - 1)}) = \Rcom (\bts), \qquad \text{if } \beta < 2\alpha + 1,\\
\Rcomst (\btws) &= \Theta(n^{-2\alpha}) = \Rcom (\bts), \qquad \text{if } \beta > 2\alpha + 1.
\end{align*}
\end{restatable}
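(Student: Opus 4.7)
The plan is to derive a closed-form expression for $\Rc^*(\btws)$ in the eigenbasis of $\bSis$, establish the relation $\Rc^*(\btws)=\Theta(\Rc(\bts))$, and then estimate the scaling of $\Rc(\bts)$ under the power-law assumptions. By Observation~\ref{transforming-x-beta} I may assume $\bSis=\mathrm{diag}(\lambda_i)$. Writing $b_i=(\btw)_i$ and starting from \eqref{risk one step asymp}, the three bias contributions collapse by direct expansion into $\sum_i \lambda_i\bigl((1-\zeta_i) b_i - (\beta_\star)_i\bigr)^2$; moreover, solving the self-consistent relation \eqref{defn gammas} yields $\gammas^2(\btw)\cdot\Omega/\kappas=\tfrac{\Omega}{1-\Omega}\bigl(\sigmas^2+\sum_i\lambda_i\zeta_i^2 b_i^2\bigr)$. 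The resulting objective is quadratic and separable in each $b_i$, and coordinate-wise minimization (in agreement with Proposition~\ref{prop optimal surrogate}) produces
\begin{align*}
\Rc^*(\btws) \;=\; \sum_{i=1}^{p} \lambda_i (\beta_\star)_i^2\cdot\frac{\Omega\,\zeta_i^2}{(1-\Omega)(1-\zeta_i)^2 + \Omega\,\zeta_i^2} \;+\; \frac{\Omega\,\sigmas^2}{1-\Omega}.
\end{align*}

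Next, I compare this to the standard target model, whose omniscient expression \eqref{omniscent risk} reads $\Rc(\bts)=(1-\Omega)^{-1}\bigl[\Omega\sigmas^2+\sum_i\lambda_i\zeta_i^2(\beta_\star)_i^2\bigr]$. The upper bound $\Rc^*(\btws)\le\Rc(\bts)$ is immediate by the feasibility of $\btw=\btst$. For the matching lower bound, the per-coordinate ratio of the bias weights equals $\Omega(1-\Omega)\big/\bigl[(1-\Omega)(1-\zeta_i)^2+\Omega\zeta_i^2\bigr]$; the denominator is a convex combination of $(1-\zeta_i)^2$ and $\zeta_i^2$ maximized at the endpoints $\zeta_i\in\{0,1\}$, so the ratio lies in $[\min(\Omega,1-\Omega),\,1]$ for every $\zeta_i\in[0,1]$. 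Proposition~\ref{asymptotic-tau-bound} gives $\Omega\to(\alpha-1)/\alpha\in(0,1)$, so the lower bound $\min(\Omega,1-\Omega)$ stays uniformly positive as $n\to\infty$; since the noise contributions coincide in the two risks, this yields $\Rc^*(\btws)=\Theta(\Rc(\bts))$ and reduces the problem to estimating the scaling of $\Rc(\bts)$.

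Finally, plugging in $\lambda_i=i^{-\alpha}$, $\lambda_i\beta_i^2=i^{-\beta}$ and using $\taus\asymp n^{-\alpha}$ from Proposition~\ref{asymptotic-tau-bound}, the bias sum reads $\sum_{i\ge 1}\zeta_i^2\,i^{-\beta}$ and transitions at $i\asymp \taus^{-1/\alpha}\asymp n$. For $i\lesssim n$ we have $\zeta_i\approx \taus\,i^\alpha$, giving a head contribution $\taus^2\sum_{i\le n} i^{2\alpha-\beta}$; for $i\gtrsim n$ we have $\zeta_i\approx 1$ and a tail contribution $\sum_{i\ge n} i^{-\beta}\asymp n^{-(\beta-1)}$. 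If $\beta<2\alpha+1$, the head sum is dominated by its endpoint, producing $\taus^2\,n^{2\alpha-\beta+1}\asymp n^{-(\beta-1)}$ and matching the tail, for a total of $\Theta(n^{-(\beta-1)})$; if $\beta>2\alpha+1$, the head sum converges and contributes $\taus^2\asymp n^{-2\alpha}$, which dominates the tail $n^{1-\beta}=o(n^{-2\alpha})$. The noise term $\Omega\sigmas^2/(1-\Omega)\asymp\sigmas^2$ is subdominant by the stated assumption on $\sigmas^2$. The main technical obstacle is precisely the uniform two-sided control of the bias-weight function across both the head ($\zeta_i\to 0$) and tail ($\zeta_i\to 1$) regimes; this is what converts the strict coordinate-wise improvement from Corollary~\ref{corol optimal surrogate} into the scaling-law statement that the optimal surrogate-to-target model improves $\Rc(\bts)$ only by a multiplicative constant and therefore cannot alter the exponent.
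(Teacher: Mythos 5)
Your proof is correct and takes a genuinely different, in fact cleaner, route than the paper's.

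The paper's proof starts from the lower bound $\Rc(\btws)\ge\gammas^2(\btw)\,\E[\thb_2^\top\bSis\thb_2]=\tfrac{\Omega}{1-\Omega}\bigl(\sigmas^2+\sum_i\lambda_i(\beta_i^s)^2\zeta_i^2\bigr)$ (obtained by dropping the nonnegative bias term from the full decomposition), substitutes the optimal $\btwst$, and then proves the scaling lower bound by restricting attention to carefully chosen index windows: $\phi_n<i<\omega_n$ (where $1-\Omega<\zeta_i$ and $\zeta_i^2<1-\Omega$) when $\beta<2\alpha+1$, and $\zeta_i<1-\Omega$ when $\beta>2\alpha+1$. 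The matching upper bound in both cases comes from $\Rc^*(\btws)\le\Rc(\bts)$ with the standard-target scaling quoted from prior work. You instead compute the exact closed form of the minimized risk and compare it coordinate-wise to $\Rc(\bts)$: the per-coordinate bias-weight ratio is $\Omega(1-\Omega)/\bigl[(1-\Omega)(1-\zeta_i)^2+\Omega\zeta_i^2\bigr]$, and since the denominator is a quadratic in $\zeta_i$ with minimum $\Omega(1-\Omega)$ at $\zeta_i=1-\Omega$ and endpoint maximum $\max(\Omega,1-\Omega)$, the ratio lies uniformly in $[\min(\Omega,1-\Omega),1]$. With $\Omega\to(\alpha-1)/\alpha\in(0,1)$ this gives a two-sided constant-factor comparison for every coordinate simultaneously, bypassing the paper's interval gymnastics, and reduces the whole statement to a one-shot evaluation of $\Rc(\bts)$ under the power-law, which you carry out directly. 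Your route buys a stronger structural fact — the optimal surrogate improves each coordinate's bias contribution by a factor bounded below by $\min(\Omega,1-\Omega)$, uniformly in $i$ — which immediately implies the scaling-law invariance; the paper's lower-bound-plus-window argument reaches the same conclusion but is less transparent about why no coordinate can be leveraged to change the exponent.

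One small remark: the phrase "since the noise contributions coincide in the two risks, this yields $\Rc^*(\btws)=\Theta(\Rc(\bts))$" is in fact justified without invoking the noise assumption, since for $A=B'+N$, $C=B+N$, $N\ge 0$, and $cB\le B'\le B$ with $c\in(0,1)$ one has $cC\le A\le C$; the assumption $\sigmas^2=O(n^{-\gamma})$ is then needed only, as you note later, to identify the actual exponent of $\Rc(\bts)$ with the noiseless one.
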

Since $\Rcomst (\btws)$ is a lower bound on $\Rcom (\btws)$, we have that the scaling law of the excess test risk of the surrogate-to-target model cannot be improved beyond that of the standard target model, even with the freedom to choose $\btw$. This also indicates that the optimal selection of the mask does not improve the scaling law, see Proposition \ref{scaling-law-population-surrogate} in Appendix \ref{app scaling laws} for details.

The proof of Proposition \ref{scaling-law-arbitrary-surrogate} is deferred to Appendix \ref{app scaling laws}. Here, we note that we utilize the expression $\frac{\Omega}{1 - \Omega} \left( \sum_{i=1}^{p} \lambda_i (\beta_i^s)^2 \zeta_i^2 \right)$ as a lower bound for the risk, while proving the scaling law for the optimal surrogate-to-target model. Although this expression alone is insufficient to determine an asymptotic lower bound for an arbitrary $\btw$, it becomes particularly useful when considering the optimal surrogate parameter $\btwst$ provided in Proposition \ref{prop optimal surrogate}. We then leverage the fact that the optimal surrogate parameter yields the minimum test risk to characterize its asymptotic behavior. 
 
Finally, we provide in Appendix  \ref{app scaling laws} also a non-asymptotic analysis of $\tau_t$ and $\Omega$ (see Propositions \ref{prop:boundxi} and \ref{prop:boundomega}, respectively), which complements the asymptotic one in Proposition \ref{asymptotic-tau-bound} above. This allows us to characterize a region with finite $n$ and $p$ where the surrogate-to-target model strictly outperforms the standard target model, see Proposition \ref{prop:finite-n-p-benign}.

\section{Risk characterization for the two-stage model}\label{sect two step}

\vspace{-.4em}

Until now, we have examined the behavior of the surrogate-to-target model when $\btw$ is given. In this section, we characterize the non-asymptotic risk of the surrogate-to-target model when $\btw$ is the solution of the surrogate problem \eqref{eq:supb} where $\kappaw = p/m > 1$. Our analysis includes two cases: \emph{(i)} the target model has infinitely many data ($n=\infty$), and \emph{(ii)} the target model is overparametrized, i.e., $\kappas = p/n > 1$. 

When the target model has infinitely many data, the estimate of the surrogate-to-target model $\btws$ is equal to the estimate of the surrogate model $\btw$. This means that the correct ground-truth parameter $\btst$ is estimated under a distribution $\Dcw$ and tested under another distribution $\Dcs(\btst)$, which is equivalent to the covariance shift model by definition. By Observation \ref{observation transfer s2t equivalency}, the model shift in the surrogate-to-target model is equivalent to the covariance shift model and, hence, the analysis in Sections \ref{sect first section} and \ref{sect scaling laws} is valid for this case.

Finally, we consider the case where the target model is overparametrized and begin with the following asymptotic risk definition.
\begin{restatable}{definition}{definitiontwostepasymptotic}
    \label{defn asymp w2s two}
    Recall the definition of $\taus$ and $\gammas$ in Theorem \ref{theorem risk charac one step}. Let $\kappaw = p/m > 1$ and define $\tauw \in \R$ \red{similarly} to $\taus$.
    We define the random variable $X_{\kappaw, \sigmaw^2}^s$ based on $\gw \sim \Nn(0, \Iden)$ and the function $\gammaw : \R^{p} \xrightarrow[]{} \R$ as follows:
    \begin{equation}\label{Xs deffn}
\begin{split}    
        X_{\kappaw, \sigmaw^2}^s(\bSiw, \btst, \gw) &:= (\bSiw + \tauw \Iden)^{-1} \bSiw \left[ \btst + \frac{\bSiw^{-1/2} \gammaw(\btst) \gw}{\sqrt{p}} \right] \\
        \gammaw^2(\btst) &:= \kappaw \left( \sigmaw^2 + \E_{\gw} [ \tn{\bSiw^{1/2} (X_{\kappaw, \sigmaw^2}^s(\bSiw, \btst, \gw) - \btst)}^2 ] \right)
        . \nonumber 
 \end{split}        
    \end{equation}
    Let $\dot{\kappa} = (\kappaw, \kappas)$, $\dot{\bSi} = (\bSiw, \bSis)$, and $\dot{\sigma} = (\sigmaw^2, \sigmas^2)$.
    Then, we define the asymptotic risk estimate as %follows:
    \begin{align}
        \bar{\Rc}_{\dot{\kappa}, \dot{\sigma}}(\dot{\bSi}, \btst) &=  \tn{\bSis^{1/2}\left(\Iden - (\bSis + \taus \Iden)^{-1} \bSis (\bSiw + \tauw \Iden)^{-1} \bSiw \right) \btst}^2 + \frac{\E_{\btw \sim X_{\kappaw, \sigmaw^2}^s} [\gammas^2(\btw)]}{p} \tr{\bSis^2 (\bSis + \taus \Iden)^{-2}}  \nonumber \\ &+ \frac{\gammaw^2(\btst) }{p}\tr{\bSiw^{1/2} (\bSiw + \tauw \Iden)^{-1} \bSis (\bSis + \taus \Iden)^{-1} \bSis (\bSis + \taus \Iden)^{-1} \bSis  (\bSiw + \tauw \Iden)^{-1} \bSiw^{1/2}}. \nonumber     \end{align}
\end{restatable}

The non-asymptotic characterization of the risk is stated below and proved in Appendix \ref{app two step}, which also contains a closed-form expression for $\E_{\btw \sim X_{\kappaw, \sigmaw^2}^s} [\gammas^2(\btw)]$ (see Lemma \ref{lemma expected gamma}).

\begin{restatable}{theorem}{theoremtwostep}\label{theorem risk charac two step}
Suppose that, for some constant $\Ms>1$, we have $1/\Ms \leq \kappaw, \sigmaw^2, \kappas, \sigmas^2 \leq \Ms$ and $\spec{\bSiw}, \spec{\bSiw^{-1}},$ $ \spec{\bSis},  \spec{\bSiw^{-1}}  \leq \Ms$. Consider the surrogate-to-target model defined in Section \ref{sect setup}, and let $\Rc(\btws)$ represent its risk  when $\btst$ is given. Recall the definition of $\dot{\bSi}, \dot{\kappa}, \dot{\sigma}$ and $\bar{\Rc}_{\dot{\kappa}, \dot{\sigma}}$ in Definition \ref{defn asymp w2s two}. Then, there exists a constant $C = C(\Ms)$ such that for any $\eps \in (0, 1/2]$, the following holds when $R+1 < M_t$:
\begin{align*}
   \sup_{\btst \in \B_p(R)} \P ( \left| \Rc(\btws) - \bar{\Rc}_{\dot{\kappa}, \dot{\sigma}}(\dot{\bSi}, \btst) \right| \geq \eps) \leq C p e^{-p \eps^4/C}.
\end{align*}
\end{restatable}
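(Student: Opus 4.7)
The plan is to reduce Theorem \ref{theorem risk charac two step} to the one-stage Theorem \ref{theorem risk charac one step} by conditioning on the first-stage estimator $\btw$, and then to replace the $\btw$-dependence by its Gaussian proxy $X^s_{\kappaw,\sigmaw^2}(\bSiw,\btst,\gw)$ via the distributional characterization that underlies the single-step analysis. Concretely, by the tower property and the triangle inequality, the deviation $|\Rc(\btws) - \bar{\Rc}_{\dot\kappa,\dot\sigma}(\dot\bSi,\btst)|$ is at most the sum of the conditional deviation $|\Rc(\btws) - \bar{\Rc}^{s2t}_{\kappas,\sigmas}(\bSis,\btst,\btw)|$ of the second-stage risk around its one-stage prediction given $\btw$, and the deviation $|\bar{\Rc}^{s2t}_{\kappas,\sigmas}(\bSis,\btst,\btw) - \E_{\gw}[\bar{\Rc}^{s2t}_{\kappas,\sigmas}(\bSis,\btst,X^s_{\kappaw,\sigmaw^2})]|$ of a deterministic function of $\btw$ around its first-stage Gaussian average. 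It suffices to bound each piece by $C p \exp(-p\eps^4/C)$ probabilistically and to identify $\E_{\gw}[\bar{\Rc}^{s2t}_{\kappas,\sigmas}(\bSis,\btst,X^s_{\kappaw,\sigmaw^2})]=\bar{\Rc}_{\dot\kappa,\dot\sigma}(\dot\bSi,\btst)$ deterministically.

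For the first piece, conditioning on $\btw$, the second-stage estimator $\btws$ is exactly the ridgeless interpolator in the setup of Theorem \ref{theorem risk charac one step} with ground truth $\btw$, design covariance $\bSis$, noise level $\sigmas$ and aspect ratio $\kappas$. Applying that theorem conditionally yields a bound of the form $C_1 p \exp(-p\eps^4/C_1)$ uniformly over $\btw$ lying in a ball $\B_p(R')$ with $R'+1<M_t$. A preliminary application of the first-stage distributional characterization to the $1$-Lipschitz map $\btw\mapsto\|\btw\|_2$ shows $\|\btw\|_2\leq R + O_{M_t}(1)$ with failure probability of the required form, so such an $R'$ can be secured. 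For the second piece, apply the same distributional characterization (inherent in the proof of Theorem \ref{theorem risk charac one step} and matching the result of \cite{han2023distributionridgelesssquaresinterpolators}) to the map $\phi(\btw):=\bar{\Rc}^{s2t}_{\kappas,\sigmas}(\bSis,\btst,\btw)$. Reading off the explicit form in Theorem \ref{theorem risk charac one step} and substituting the fixed-point relation $\gammas^2(\btw)=\kappas(\sigmas^2+\bar{\Rc}^{s2t}_{\kappas,\sigmas}(\bSis,\btw,\btw))$ from \eqref{defn gammas}, one sees that $\phi$ is a polynomial of degree two in $\btw$; on $\B_p(R')$ it is therefore $L$-Lipschitz for some $L=L(R',M_t)$, and after rescaling by $L$ the characterization delivers the desired concentration.

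The deterministic identification $\E_{\gw}[\phi(X^s_{\kappaw,\sigmaw^2}(\bSiw,\btst,\gw))]=\bar{\Rc}_{\dot\kappa,\dot\sigma}(\dot\bSi,\btst)$ is then a Gaussian moment computation. The proxy is Gaussian with mean $\mathbf{m}=(\bSiw+\tauw\Iden)^{-1}\bSiw\btst$ and covariance $\frac{\gammaw^2(\btst)}{p}(\bSiw+\tauw\Iden)^{-1}\bSiw(\bSiw+\tauw\Iden)^{-1}$. Expanding the quadratic in the third slot of $\bar{\Rc}^{s2t}_{\kappas,\sigmas}$ and integrating against this Gaussian: the constant-plus-linear-in-$\btw$ pieces collapse via the identity $\|\bSis^{1/2}((\Iden-\thb_1)\btst-\thb_1(\mathbf{m}-\btst))\|^2=\|\bSis^{1/2}(\btst-\thb_1\mathbf{m})\|^2$ into the bias summand $\|\bSis^{1/2}(\Iden-(\bSis+\taus\Iden)^{-1}\bSis(\bSiw+\tauw\Iden)^{-1}\bSiw)\btst\|^2$ of $\bar{\Rc}_{\dot\kappa,\dot\sigma}$; the centered-quadratic piece contributes a trace of the form $\tr{\thb_1^\top\bSis\thb_1\cdot\mathrm{Cov}(X^s)}$ which, after cyclic rearrangement, equals the third summand; and the $\gammas^2(\btw)$ summand produces the second summand once $\E_{\btw\sim X^s}[\gammas^2(\btw)]$ is substituted by the closed form derived in Lemma \ref{lemma expected gamma}.

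The main obstacle is the second piece above: the target functional $\phi$ is genuinely quadratic in $\btw$, not $1$-Lipschitz globally, so the distributional characterization cannot be invoked as a black box. Localizing $\btw$ to the ball $\B_p(R')$ via the preliminary norm control fixes the Lipschitz constant at the price of an event of negligible probability, and unfolding $\gammas^2(\btw)$ through \eqref{defn gammas} is essential to keep the quadratic structure manifest rather than treating the composite map as opaque. Combining the two concentration bounds via the tower property and a union bound then yields the theorem with a constant $C=C(M_t)$.
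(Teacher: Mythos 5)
Your proposal is correct and follows essentially the same route as the paper's own proof: condition on $\btw$ to reduce to the one-stage analysis, then apply the distributional characterization of \cite{han2023distributionridgelesssquaresinterpolators} a second time (after a norm-localization step making $\btw$ lie in a fixed ball, so that the quadratic map $\btw\mapsto\bar{\Rc}^{s2t}_{\kappas,\sigmas}(\bSis,\btst,\btw)$ becomes Lipschitz) and finish with a Gaussian moment computation identifying $\E_{\gw}[\bar{\Rc}^{s2t}_{\kappas,\sigmas}(\bSis,\btst,X^s)]$ with $\bar{\Rc}_{\dot\kappa,\dot\sigma}(\dot\bSi,\btst)$. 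The paper carries out the same plan with slightly different bookkeeping — it decomposes the conditional risk into three summands (Propositions~\ref{prop g1}--\ref{prop g3}) and applies the characterization to each separately before union-bounding, and it obtains the norm control on $\btw,\btws$ from a direct pseudoinverse/small-ball argument (Propositions~\ref{prop concentration btw}--\ref{prop concentration btws}) rather than a Lipschitz argument on $\|\cdot\|_2$ — but these are cosmetic variations of the same argument, and your observation that the only real obstruction is the non-global Lipschitzness of the quadratic functional is exactly the point the paper's localization propositions are designed to address.
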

\red{In the proof of Theorem \ref{theorem risk charac two step}, we apply the distributional characterization of the minimum norm interpolator \citep{han2023distributionridgelesssquaresinterpolators} twice. The main technical difficulty is to satisfy the Lipschitz condition of the distributional characterization in the second step, which is handled by bounding the intermediate step's interpolator in a ball.}

We implement the surrogate-to-target model with a synthetic dataset and demonstrate that our risk characterization agrees well with the experimental two-stage linear regression in Figure \ref{fig:formula_vs_experimental}. \red{Furthermore, we extend our analysis of the two-stage model to the under-parametrized region in Appendix \ref{app two step underparametrized}. This extension enables us to analyze Section \ref{weak-to-strong-section} thoroughly because this analysis allows the surrogate model to have finite data.}

%\red{In addition, we illustrate a case where the surrogate-to-target model outperforms the standard target model even when the surrogate model is over-parametrized in Appendix \ref{app two step}.}
\section{Concluding remarks}

\vspace{-.4em}

We have provided a sharp characterization of knowledge distillation for high-dimensional linear regression when labels are generated by a surrogate model and, additionally, characterized the risk of the two-stage process where the surrogate model is the outcome of an initial ERM. These results shed light on the form of the optimal surrogate model, reveal an amplify-to-shrink phase transition as a function of the eigenspectrum, and draw connections to weak-to-strong generalization. Specifically, we have shown that the labels coming from the optimal surrogate model strictly allow for improving the performance of the target model, unless the covariance is a multiple of the identity. However, even though there is a strict improvement in the risk, the scaling behavior of the two-stage process with labels coming from the optimal surrogate model remains unchanged compared to the standard target model that utilizes ground-truth labels. 

We outline three interesting directions for future research. The first is to extend the two-stage process to multiple stages, establishing whether this further improves the risk. The second is to apply the two-stage learning to data pruning, using the surrogate model to decide whether to keep or discard each $(\x, y)$ pair during the training of the target model. The third is to go beyond linear regression towards neural network models. In this regard, the precise asymptotics of the test error of the ERM solution were provided by \citep{mei2022generalization} for the random features model and by \citep{montanari2022interpolation} for two-layer neural networks in the NTK regime. However, a non-asymptotic characterization (similar to that given by \citep{han2023distributionridgelesssquaresinterpolators} for linear regression) remains an open problem. The resolution of this open problem, as well as the analysis of the phenomena of knowledge distillation and weak-to-strong generalization, represent exciting future directions.

%, under model shift, where the surrogate model is arbitrary, and \emph{(ii)} distribution shift

%At a high-level, our work demonstrates the value of high-dimensional analysis in understanding complex machine learning scenarios that involve multiple models or ERMs. By characterizing the fundamental limits and benefits of distillation, we hope that our results provide insights into further theoretical and empirical research spanning distillation, weak-to-strong generalization, and synergistic use of multiple ML models.

\section*{Acknowledgements}
M.E.I., H.A.G., E.O.T., S.O.~are supported by the NSF grants CCF-2046816, CCF-2403075, the Office of Naval Research grant N000142412289, an OpenAI Agentic AI Systems grant, and gifts by Open Philanthropy and Google Research. M.~M.\ is funded by the European Union (ERC, INF$^2$, project number 101161364). Views and opinions expressed are however
those of the author(s) only and do not necessarily reflect those of the European Union or the
European Research Council Executive Agency. Neither the European Union nor the granting
authority can be held responsible for them.
\newpage
\bibliography{iclr2025_conference}
\bibliographystyle{iclr2025_conference}

\clearpage
\appendix
\section{Proofs for Section \ref{sect first section}}\label{app first section}
\red{Below, we summarize the notations used throughout this paper:}

\begin{table}[h]
\centering
\begin{tabularx}{\textwidth}{@{}>{\centering\arraybackslash}p{0.10\textwidth}>{\centering\arraybackslash}p{0.15\textwidth}X@{}}
\toprule
\textbf{Category} & \textbf{Symbol} & \textbf{Description} \\ 
\midrule
\multirow{4}{*}{Risk} 
    & $\Rc (\cdot)$ & Excess population test risk. \\ 
    & $\bar{\Rc} (\cdot)$ & Asymptotic excess test risk. \\ 
    & $\Rcom (\cdot)$ & Asymptotic omniscient excess test risk, which is equivalent to the two-stage risk when $\btw = \btst$. \\ 
\midrule
\multirow{4}{*}{Parameters} 
    & $\btw$ & Estimator of the surrogate model after the stage 1. \\ 
    & $\btws$ & Estimator of the target model after the stage 2. \\
    & $\bts$ & Estimator of the standard target model. \\
    & $\btst$ & Ground truth parameter. \\
\bottomrule
\multirow{3}{*}{Covariances} 
    & $\bSiw$ & Covariance matrix of the data distribution used in surrogate model. \\ 
    & $\bSis$ & Covariance matrix of the data distribution used in target and standard target models. \\
\bottomrule
\multirow[c]{12}{*}{Variables} 
    & $\lambda_i$ & $i$'th eigenvalue of the covariance matrix $\bSis$ in decreasing order. \\
    & $\taus$ & The unique solution to the fixed point equation: $\kappas^{-1} = \frac{1}{p} \tr{(\bSis +  \taus \Iden)^{-1} \bSis}.$ \\
    & $\tauw$ & The unique solution to the fixed point equation: $\kappaw^{-1} = \frac{1}{p} \tr{(\bSiw +  \tauw \Iden)^{-1} \bSiw}.$ \\
    & $\zeta_i$ & Covariance statistics given by $\zeta_i = \frac{\taus}{ \lambda_i + \taus}$ . \\
    & $\Omega$ & $\Omega = \frac{\tr{\bSis^2 (\bSis + \taus \Iden)^{-2}}}{n} = \frac{\sum_{j=1}^p (1-\zeta_j)^2}{\sum_{j=1}^p(1-\zeta_j)}$ . \\
    & $X_{\kappaw, \sigmaw^2}^s(\bSiw, \btst, \gw)$ & Asymptotic characterization of the minimum $\ell_2$ interpolator obtained after training surrogate model. \\ 
    & $X_{\kappas, \sigmas^2}^t(\bSis, \btw, \gs)$ & Asymptotic characterization of the minimum $\ell_2$ interpolator obtained after training target model. \\ 
    & $\gammaw^2(\btst)$ & $\gammaw^2(\btst) = \kappaw \left( \sigmaw^2 + \E_{\gw} [ \tn{\bSiw^{1/2} (X_{\kappaw, \sigmaw^2}^s(\bSiw, \btst, \gw) - \btst)}^2 ] \right) $ .\\
    & $\gammas^2(\btw)$ & $\gammas^2(\btw) = \kappas \left( \sigmas^2 + \bar{\Rc}^{s2t}_{\kappas, \sigmas}(\bSis, \btw, \btw) \right)$ . \\
\bottomrule
\end{tabularx}
\caption{Summary of notations used in the paper.}
\label{tab:notation-summary}
\end{table}

\obstransfersteq*
\begin{proof}\label{observation transfer s2t equivalency proof}
By Observation \ref{transforming-x-beta}, we assume that $\bSis$ and $\bSiw$ are diagonal matrices. As $\bSis$ and $\bSiw$ are jointly diagonalizable, there exists a unique diagonal matrix $\A \in \R^{p \times p}$ such that
\begin{align*}
\bSiw = \A^{\top} \bSis \A.
\end{align*}
Then, consider the model shift discussed in Section \ref{sect first section}. Take the case where $\btw = \A \btst$ and labels are generated as $y = \x^{\top} \btw + z$, where $\x \sim \Nn(\boldsymbol{0}, \bSis)$ and $z \sim \Nn(0, \sigmas^2)$. This is equivalent to the case where $y = (\x^{\top} \A) \btst + z = \Bar{\x}^{\top} \btst + z$ such that $\x \sim \Nn(\boldsymbol{0}, \bSiw)$ and $z \sim \Nn(0, \sigmas^2)$. Note that \emph{(i)} the transformed inputs 
and the labels are identical in both scenarios, and  \emph{(ii)} the estimators are computed in the same way. Thus, it follows that the risks $\Rc(\btws)$ and $\Rc^{\text{cs}}(\bth)$ are equivalent. The other way follows from an almost identical argument.
\end{proof}

\begin{restatable}{observation}{transformingxbeta}\label{transforming-x-beta}
For any covariance matrix $\bSi \in \R^{p \times p}$, there exists an orthonormal matrix $\Ub \in \R^{p \times p}$ such that the transformation of $\x \to \Ub^{\top} \x$ and $\bt \to \Ub^{\top} \bt$ does not affect the labels $\y$ but ensures that the covariance matrix is diagonal. 
\end{restatable}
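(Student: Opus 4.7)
The plan is to invoke the spectral theorem for symmetric matrices and then verify directly that the proposed change of variables (i) diagonalizes the covariance and (ii) preserves the labels.

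First, I would observe that any covariance matrix $\bSi$ is symmetric and positive semidefinite, so by the spectral theorem there exist an orthonormal matrix $\Ub \in \R^{p \times p}$ and a diagonal matrix $\boldsymbol{\Lambda} \in \R^{p \times p}$ with non-negative entries such that $\bSi = \Ub \boldsymbol{\Lambda} \Ub^\top$. This $\Ub$ is the object whose existence the observation claims.

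Next, I would set $\xb := \Ub^\top \x$ and $\btbrb := \Ub^\top \bt$ and check the two desired properties in turn. For the covariance, since $\x \sim \Nn(\boldsymbol{0},\bSi)$ implies that $\xb$ is Gaussian with mean zero and covariance $\E[\xb \xb^\top] = \Ub^\top \E[\x \x^\top] \Ub = \Ub^\top \bSi \Ub = \boldsymbol{\Lambda}$, which is diagonal. For the labels, I would use orthonormality $\Ub \Ub^\top = \Ub^\top \Ub = \Iden_p$ to write
\begin{equation*}
\xb^\top \btbrb = (\Ub^\top \x)^\top (\Ub^\top \bt) = \x^\top \Ub \Ub^\top \bt = \x^\top \bt,
\end{equation*}
so that any label of the form $y = \x^\top \bt + z$ is unchanged by the joint transformation. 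Applying this reasoning sample-wise gives the statement for the full label vector $\y$.

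There is no real obstacle here: the entire argument is a one-line consequence of the spectral theorem plus the orthogonality identity $\Ub \Ub^\top = \Iden_p$. The only thing worth flagging is that the same matrix $\Ub$ must be used for both $\x$ and $\bt$ (as in the statement), since this is precisely what makes the inner product invariant; using different orthonormal matrices would diagonalize the covariance but could alter the labels. One could optionally add a remark that when two covariances $\bSis, \bSiw$ are jointly diagonalizable, as assumed elsewhere in the paper, the same $\Ub$ simultaneously diagonalizes both, which is why the reduction to diagonal covariances can be carried out without loss of generality throughout the analysis.
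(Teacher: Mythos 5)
Your proof is correct and takes essentially the same route as the paper: diagonalize $\bSi$ via the spectral theorem, verify that $\Ub^\top \x$ has diagonal covariance, and invoke orthonormality to preserve labels. If anything, you are slightly more explicit than the paper (which only asserts label preservation) by writing out $\xb^\top \btbrb = \x^\top \Ub\Ub^\top \bt = \x^\top\bt$, and your remark that the same $\Ub$ must be applied to both $\x$ and $\bt$ is a useful emphasis.
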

\begin{proof}
Since the covariance matrix $\bSi$ is PSD, its unit-norm eigenvectors are orthogonal. Consider the matrix $\Ub$ whose columns are the eigenvectors of $\bSi$. Then, $\bSi$ can be expressed as $\bSi = \Ub \La \Ub^{\top}$, where $\La$ is the diagonal matrix containing the eigenvalues of $\bSi$. Consider now the transformation
\begin{align*}
\z = \Ub^{\top} \x \implies \mathbb{E} \left[ \z \z^{\top} \right] = \mathbb{E} \left[ \Ub^{\top} \x \x^{\top} \Ub \right] = \Ub^{\top} \mathbb{E} \left[ \x \x^{\top} \right] \Ub = \Ub^{\top} \Ub \La \Ub^{\top} \Ub = \La.
\end{align*}
In this way, the covariance matrix is diagonalized. Thus, the transformation $(\x, \btst) \to (\Ub^{\top} \x, \Ub^{\top} \btst)$ works as intended since the labels are preserved. % under this transformation.
\end{proof}
\defnasymponestep*

\theoremriskcharaconestep*
\begin{proof}\label{proof theorem risk charac one step}
    Even though the claim readily follows from Theorem \ref{theorem risk charac two step}, we give a proof for the sake of completeness.

    Define a function $f_1 : \R^p \xrightarrow[]{} \R$ as $f_1(\x) = \tn{\bSis^{1/2} (\x - \btst)}^2$. The gradient of this function is 
    \begin{align*}
        \tn{\nabla f_1(\x)} = \tn{2 \bSis ( \x - \btst)} \leq 2 \spec{\bSis} \tn{\x - \btst}.     
    \end{align*}
    Using Corollary \ref{corol concentration btw}, there exists an event $E$ with $\P(E^c) \leq C_t e^{-p/C_t}$ where $C_t = C_t(\Ms, \frac{\Ms-R}{2})$ (with the definition of $\Mw$ in Corollary \ref{corol concentration btw}), such that $f_1(\btws)$ is $2\Ms^2$-Lipschitz if $\btst, \btw \in \B_p(R)$. Applying Theorem \ref{theorem dist charac} on the target model, there exists a constant $\bar{C}_s = \bar{C}_s(\Ms)$  such that for any $\eps \in (0, 1/2]$, we obtain
    \begin{align}
         \sup_{\btw \in \B(\frac{\Ms + R}{2})} \P \left(  \left|f(\btws) - \E_{\gs}[f(X_{\kappas, \sigmas^2}^t(\bSis, \btw, \gs))] \right| \geq \eps  \right) \leq C p \e^{-p \eps^4/C}, \label{ridge change}
    \end{align}
    where $f(\btws) = \Rc(\btws)$ and
    \begin{align*}
        X_{\kappas, \sigmas^2}^t(\bSis, \btw, \gs) = (\bSis + \taus \Iden)^{-1} \bSis \left[\btw + \frac{\bSis^{-1/2} \gammas(\btw) \gs}{\sqrt{p}} \right].
    \end{align*}
    Furthermore,
    \begin{align}
        \E_{\gs}\left[f(X_{\kappas, \sigmas^2}^s(\bSis, \btw, \gs))\right] &= \E_{\gs}\left[ \tn{ \bSis^{1/2}\left(\thb_1 (\btw - \btst) - (\Iden - \thb_1) \btst + \thb_2 \gammas(\btw) \right)}^2 \right] \nonumber \\
        &=(\btw - \btst)^\top \thb_1^\top \bSis \thb_1 (\btw - \btst) + \gammas^2(\btw) \E_{\gs}[\thb_2^\top \bSis \thb_2] \nonumber \\
        &+ \btst^\top (\Iden -\thb_1)^\top \bSis (\Iden - \thb_1) \btst - 2 \btst^\top (\Iden- \thb_1)^\top \bSis \thb_1 (\btw-\btst),
    \end{align}
    where $\thb_1 := (\bSis + \taus \Iden)^{-1} \bSis$ and $\thb_2 := (\bSis + \taus \Iden)^{-1} \bSis^{1/2}\frac{\gs}{\sqrt{p}}$. This completes the proof.
\end{proof}

\optimalsurrogate*

\begin{proof}\label{asd}
We have that
\begin{align*}
        \E_{\gs}\left[f(X_{\kappas, \sigmas^2}^t(\bSis, \btw, \gs))\right] &= \E_{\gs}\left[ \tn{ \bSis^{1/2}\left(\thb_1 (\btw - \btst) - (\Iden - \thb_1) \btst + \thb_2 \gammas(\btw) \right)}^2 \right] \\
        &=(\btw - \btst)^\top \thb_1^\top \bSis \thb_1 (\btw - \btst) + \gammas^2(\btw) \E_{\gs}[\thb_2^\top \bSis \thb_2] \\
        &+ \btst^\top (\Iden -\thb_1)^\top \bSis (\Iden - \thb_1) \btst - 2 \btst^\top (\Iden- \thb_1)^\top \bSis \thb_1 (\btw-\btst), 
\end{align*}
where $\thb_1 := (\bSis + \taus \Iden)^{-1} \bSis$, $\thb_2 := (\bSis + \taus \Iden)^{-1} \bSis^{1/2}\frac{\gs}{\sqrt{p}}$. \red{Recall from \eqref{defn gammas} that
\begin{align*}
\gammas^2(\btw) &= \kappas \left(\sigmas^2 + \bar{\Rc}^{s2t}_{\kappas, \sigmas}(\bSis, \btw, \btw) \right) \\
&= \kappas \left(\sigmas^2 + \gammas^2(\btw) \E_{\gs}[\thb_2^\top \bSis \thb_2]+ (\btw)^\top (\Iden -\thb_1)^\top \bSis (\Iden - \thb_1) \btw \right) . 
\end{align*}
This implies that}
\begin{align}
    \gammas^2(\btw) &= \kappas \frac{\sigmas^2 + (\btw)^\top (\Iden -\thb_1)^\top \bSis (\Iden - \thb_1) \btw}{1 - \kappas \E_{\gs}[\thb_2^\top \bSis \thb_2]} \label{gamma expression}\\
    &\stackrel{(a)}{=} \frac{\sigmas^2 + \taus^2 \tn{ \bSis^{1/2} (\bSis + \taus \Iden )^{-1}  \btw}^2 }{1-\frac{1}{n}\tr{(\bSis + \taus \Iden)^{-2} \bSis^2} }, \nonumber
\end{align}
\red{where $(a)$ follows from the fact that $\Iden - \thb_1 = \Iden - (\bSis + \taus \Iden)^{-1} \bSis = \taus (\bSis + \taus \Iden)^{-1}$ and $\kappas \E_{\gs}[\thb_2^\top \bSis \thb_2] = \frac{1}{n} \tr{(\bSis + \taus \Iden)^{-2} \bSis^2}$. }

In order to optimize this with respect to $\btw$, let's take the derivative:
\begin{align*}
\frac{\partial}{\partial \btw} &\E_{\gs}\left[f(X_{\kappas, \sigmas^2}^t(\bSis, \btw, \gs))\right]  \\
&=2 \thb_1^\top \bSis \thb_1 (\btw - \btst) - 2 \thb_1^\top \bSis (\Iden - \thb_1) \btst + 2 \frac{\kappas \taus^2}{1 - \Omega} \bSis (\bSis + \taus \Iden)^{-2} \btw \frac{\tr{\bSis^2 (\bSis + \taus \Iden)^{-2}}}{p} \\
&= 2 \thb_1^\top \bSis \thb_1 \btw  - 2 \bSis \thb_1 \btst + 2 \frac{\kappas \taus^2}{1 - \Omega} \bSis (\bSis + \taus \Iden)^{-2} \btw \frac{\tr{\bSis^2 (\bSis + \taus \Iden)^{-2}}}{p} \\
&= 2 \thb_1^\top \bSis \thb_1 \btw  - 2 \bSis \thb_1 \btst + 2 \frac{\kappas \taus^2}{1 - \Omega} \bSis (\bSis + \taus \Iden)^{-2} \btw \frac{n \Omega}{p} \\
&\implies \thb_1^\top \bSis \thb_1 \btwst  - \bSis \thb_1 \btst + \frac{\Omega \taus^2}{1 - \Omega} \bSis (\bSis + \taus \Iden)^{-2} \btwst  = 0 \\
&\implies (\thb_1^\top \bSis \thb_1 + \frac{ \Omega \taus^2}{1 - \Omega}  \thb_1 (\bSis + \taus \Iden)^{-1}) \btwst = \bSis \thb_1 \btst 
\end{align*}
Hence, the claimed result follows.
%\begin{align*}
%&\implies \btwst = \left(\thb_1 + \frac{\Omega \taus^2}{1 - \Omega} \bSis^{-1} (\bSis + \taus \Iden)^{-1}\right)^{-1} \btst = \left((\bSis + \taus \Iden)^{-1} \bSis + \frac{\Omega \taus^2}{1 - \Omega} \bSis^{-1} (\bSis + \taus \Iden)^{-1}\right)^{-1} \btst
%\end{align*}
\end{proof}
\begin{comment}

\begin{corollary}\label{corol optimal surrogate app}
    Consider the setting in Theorem \ref{theorem risk charac one step} and suppose the assumptions hold. For any $\eps > 0$, there exists $P \in \Na$ such that for any $\bt \in \R^p$ satisfying $\tn{\bt - \btwst} > \eps$ and $p > P$ while having the ratio $\kappas = p/n$ fixed, the following holds: \marco{omit or write a proper statement}
    \begin{align*}
        \Rc(\btw) < \Rc(\bt)
    \end{align*}
\end{corollary}

\begin{proof}
    This corollary is a direct application of Theorem \ref{theorem risk charac one step} and Proposition \ref{prop optimal surrogate}
\end{proof}
\end{comment}

\optimalsurrogatecorollary*
\begin{proof}\label{proof optimalsurrogatecorollary}
When the definition of $\zeta_i$ and $\Omega$ is plugged in Proposition \ref{prop optimal surrogate}, the first claim is obtained. Using the diagonalization assumption on $\bSis$, let's analyze only the $i$-th component of the optimal surrogate given in Proposition \ref{prop optimal surrogate}:
\begin{align*}
& \btwstnb_i = \frac{1}{\frac{\lambda_i}{\lambda_i + \taus} + \frac{\Omega}{1 - \Omega} \frac{\taus^2}{\lambda_i (\lambda_i + \taus)}}  (\beta_{*})_i \\
& \iff \btwstnb_i = \frac{\frac{\lambda_i}{\lambda_i + \taus}}{ \left(\frac{\lambda_i}{\lambda_i + \taus}\right)^2 + \frac{\Omega}{1 - \Omega} \left( \frac{\taus}{\lambda_i + \taus}\right)^2} (\beta_{*})_i \\
& \iff \btwstnb_i = (\beta_{*})_i \frac{(1 - \zeta_i)}{\left( 1- \zeta_i \right)^2 + \frac{\Omega}{1 - \Omega} \zeta_i^2} \\
& \iff \btwstnb_i = (\beta_{*})_i \frac{1}{\left( 1- \zeta_i \right) + \frac{\Omega}{1 - \Omega} \frac{\zeta_i}{1 - \zeta_i} \zeta_i}.
\end{align*}
It's now clear that $\zeta_i > 1 - \Omega$ if and only if $|\btwstnb_i| < |(\beta_{*})_i| $.

Let's now check when the ratio between them is $1$. Algebraic manipulations give:
\begin{align*}
& \frac{(1 - \zeta_i)}{\left( 1- \zeta_i \right)^2 + \frac{\Omega}{1 - \Omega} \zeta_i^2} = 1 \\
& \iff (1 - \zeta_i) - \left( 1- \zeta_i \right)^2 = \frac{\Omega}{1 - \Omega} \zeta_i^2 \\
& \iff \zeta_i = 1 - \Omega \iff 1 - \zeta_i = \Omega \text{ where } \Omega = \frac{\sum_{i=1}^{p} (1-\zeta_i)^2}{\sum_{i=1}^{p} (1-\zeta_i)}.
\end{align*}
This \red{gives that} $\btwst = \btst$ if all $\zeta_i$'s are equal, which implies that all $\lambda_i$'s are equal. Concluding, the covariance matrix is a multiple of the identity if and only if $\btwst = \btst$.
\end{proof}

\proppopulationsurrogate*

\begin{proof}\label{proof population surrogate}

\red{The proof for Proposition \ref{proposition population surrogate} was correct, yet we made changes to it to improve clarity by explicitly stating all intermediate steps.} For the purposes of analysis, we assume, without loss of generality, that the first $\pw$ dimensions are selected from $\btst$ in $\Pip(\btst) = \btw \in \R^{\pw}$. Based on this, we no longer need to have the decreasing order for the corresponding $\lambda_i$'s. \red{Let $\bar{\Rc} (\bts)$ and $\bar{\Rc} (\btst)$ represent the asymptotic risk of the target and the surrogate-to-target models, respectively.} From the excess test risk formula in Definition \ref{defn asymp w2s}, we have that
\begin{align}
\bar{\Rc} (\bts) = \mathbb{E} \left[ \left( y - \x^\top \bts \right)^2 \right] - \sigmas^2 = \frac{\Bc (\btst) + \sigmas^2 \Omega}{1 - \Omega}, \label{test-risk-strong-model}
\end{align}
\red{where $\Bc (\btst) = \sum_{i=1}^{p} \lambda_i \zeta_i^2 \beta_i^2$}. Now, consider the zero-padded vector $\Bar{\btw} = \begin{bmatrix} \btw \\ \mathbf{0}_{p - \pw} \end{bmatrix} \in \R^{p}$. This way, we consider the labels in the second training phase as $\yw = \x^\top \Bar{\btw} + z$, where $z \sim \Nn(0, \sigmas^2)$. \red{Next, using the asymptotic risk estimate in Equation \eqref{risk one step asymp}, we write the excess test risk formula for the surrogate-to-target model with respect to the original ground truth labels:
\begin{align*}
    \begin{split}
        \bar{\Rc}^{s2t}_{\kappas, \sigmas}(\bSis, \btst, \Bar{\btw}) &:=  (\Bar{\btw} - \btst)^\top \thb_1^\top \bSis \thb_1 (\Bar{\btw} - \btst) + \gammas^2(\Bar{\btw}) \E_{\gs}[\thb_2^\top \bSis \thb_2]  \\
    &+ \btst^\top (\Iden -\thb_1)^\top \bSis (\Iden - \thb_1) \btst - 2 \btst^\top (\Iden- \thb_1)^\top \bSis \thb_1 (\Bar{\btw}-\btst), 
    \end{split}
\end{align*}
where $\thb_1 := (\bSis + \taus \Iden)^{-1} \bSis$, $\thb_2 := (\bSis + \taus \Iden)^{-1} \bSis^{1/2}\frac{\gs}{\sqrt{p}}$, and $\gs \sim \Nn(\boldsymbol{0}, \Iden_p)$. Algebraic manipulations give:}
\begin{align} 
\red{\bar{\Rc} (\btws)} & \red{= \mathbb{E} \left[ \left( y - \x^\top \btws \right)^2 \right]-\sigma_t^2 \nonumber} \\ 
& \red{\approx \bar{\Rc}^{s2t}_{\kappas, \sigmas}(\bSis, \btst, \Bar{\btw}) \nonumber} \\
& \red{= (\Bar{\btw} - \btst)^\top \thb_1^\top \bSis \thb_1 (\Bar{\btw} - \btst) + \kappas \frac{\sigmas^2 + \taus^2 (\Bar{\btw})^{\top} \bSis (\bSis + \taus \Iden)^{-2} \Bar{\btw}}{1 - \Omega} \frac{\tr{\bSis^2 (\bSis + \taus \Iden)^{-2}}}{p} \nonumber} \\
& \red{ + \btst^\top (\Iden -\thb_1)^\top \bSis (\Iden - \thb_1) \btst - 2 \btst^\top (\Iden- \thb_1)^\top \bSis \thb_1 (\Bar{\btw}-\btst) \nonumber} \\
& \red{= \sum_{i = \pw+1}^{p} \lambda_i \beta_i^2 \left(\frac{\lambda_i}{\lambda_i + \taus}\right)^2 + \Omega \frac{\sigmas^2 + \sum_{i = 1}^{\pw} \lambda_i \beta_i^2 \left(\frac{\taus}{\lambda_i + \taus}\right)^2}{1 - \Omega} \nonumber} \\
& \red{~+ \sum_{i = 1}^{p} \lambda_i \beta_i^2 \left(\frac{\taus}{\lambda_i + \taus}\right)^2 + \sum_{i = \pw+1}^{p} \lambda_i \beta_i^2 \frac{2\taus \lambda_i}{(\lambda_i + \taus)^2} \nonumber} \\ 
& = \frac{\sigmas^2 \Omega + \sum_{i = 1}^{\pw} \lambda_i \beta_i^2 \zeta_i^2}{1 - \Omega} + \sum_{i = \pw+1}^{p} \lambda_i \beta_i^2 \nonumber \\
& = \frac{\Bc (\Bar{\btw}) + \sigmas^2 \Omega}{1 - \Omega} + \sum_{i = \pw+1}^{p} \lambda_i \beta_i^2, \label{risk-decomposition-ps}
\end{align}
Thus, the risk difference between the target and surrogate-to-target models is
\begin{align*}
\bar{\Rc} (\bts) - \bar{\Rc} (\btws) &= \frac{\Bc (\btst) - \Bc (\Bar{\btw})}{1 - \Omega} - \sum_{i = \pw+1}^{p} \lambda_i \beta_i^2 \\ & = \frac{\sum_{i = \pw+1}^{p} \lambda_i \zeta_i^2 \beta_i^2}{1 - \Omega} - \sum_{i = \pw+1}^{p} \lambda_i \beta_i^2.
\end{align*}
We observe that each dimension's contribution to the excess test risk can be analyzed individually. Therefore, if
\begin{align}
    \zeta_i^2 > 1 - \Omega, \label{pruning-cond-inequality}
\end{align}
excluding feature $i$ in the feature selection reduces the overall risk $\bar{\Rc} (\btws)$. Along the same lines, the projection $\Pip$ that selects all the features $i$ that satisfy $\zeta_i^2 < 1 - \Omega$ minimizes the asymptotic excess test risk.
\end{proof}

\optimalsurrogateunder*

\begin{proof}
\red{
    According to Theorem 3 of \cite{chang2021provable}, when the second stage is in the under-parameterized regime, the estimator $\btws$  can be expressed asymptotically as:
\begin{align*}
\btws = \btw + \sigmas \frac{\bSis^{-1/2} \gs}{\sqrt{p \left( \kappas^{-1} - 1 \right)}},
\end{align*}
where $\gs \sim \Nn(\boldsymbol{0}, \Iden_p)$. Then, the excess test risk estimate of this estimate is:
\begin{align*}
\E_{(\x, y) \sim \Dcs(\btst), \gs} [(y - \x^\top \btws)^2] - \sigmas^2 & =  \E_{\gs} \left[ \tn{\bSis^{1/2} (\btws - \btst)}^2 \right] \\
& = \E_{\gs} \left[ \left( \btw + \sigmas \frac{\bSis^{-1/2} \gs}{\sqrt{p \left( \kappas^{-1} - 1 \right)}} - \btst \right)^\top \bSis \left( \btw + \sigmas \frac{\bSis^{-1/2} \gs}{\sqrt{p \left( \kappas^{-1} - 1 \right)}} - \btst \right) \right] \\
& = \left( \btw - \btst \right)^\top \bSis \left( \btw - \btst \right) + \frac{\sigmas^2}{\kappas^{-1} -1}  
\end{align*}
As $\bSis$ is positive semi-definite, the expression $\left( \btw - \btst \right)^\top \bSis \left( \btw - \btst \right)$ is non-negative and takes its minimum value of $0$ at $\btw=\btst$. Note that at $\btw=\btst$ the expression corresponds to the risk of the standard target model. Hence, we conclude that it is not possible to surpass the performance of the standard target model with the surrogate-to-target model when the target model is under-parameterized ($n > p$).}
\end{proof}

\begin{comment}
\begin{corollary}\label{corollary population surrogate}
    Recall the standard target model given in \eqref{standard target model}, along with its associated risk $\Rc(\bts)$. Similar to Corollary \ref{corol optimal surrogate}, assume that $\bSis$ is diagonal, and recall the definitions of $\zeta_i$ and $\Omega$. Let $\Rc_{\Pip}(\btws)$ represent the risk of the surrogate-to-target model when $\btw = \Pip(\btst)$ is given. 
    \begin{enumerate}[leftmargin=*]
        \item If the mask operation $\Pip$ selects all the features that satisfy $1- \zeta_i^2 > \Omega$, then the surrogate-to-target model outperforms the standard target model in the asymptotic risk \eqref{risk one step asymp}.
        \item  Let $\boldsymbol{M}$ represent the set of all possible $\Pip$, where $|\boldsymbol{M}| = 2^p$. The optimal $\Pip$ for the asymptotic risk in \eqref{risk one step asymp} within $\boldsymbol{M}$ is the one that selects all features satisfying $1 - \zeta_i^2 > \Omega$.
    \end{enumerate}
\end{corollary}
\end{comment}

\subsection{Analysis for Model Shift under Ridge Regression}\label{app ridge regression}
\red{
In this subsection, we analyze the behavior of the surrogate-to-target model under ridge regression. First, we redefine the surrogate and target models below.}

\red{
\noindent\textbf{Stage 1: Surrogate model.} We consider a data distribution $(\xt,\yt)\sim\Dcw$ following the linear model $\yt=\xt^\top \btst+\zt$, where $\btst\in\R^p$, $\xt\sim\Nn(\boldsymbol{0},\bSiw)$ and $\zt\sim\Nn(0,\sigmaw^2)$ is independent of $\xt$. Let $\{(\xt_i,\yt_i)_{i=1}^m\}$ be the dataset for the surrogate model drawn i.i.d.~from $\Dcw$. The estimator of the surrogate model can be written as follows:
\vspace{-.3em}
\begin{align}\label{eq:supb ridge}
    \btwr := \arg \min_{\bt \in \R^p} \left\{\frac{1}{2p} \tn{\ybt - \Xt \bt}^2 + \frac{\lambdaw}{2} \tn{\bt}^2 \right\} = \frac{1}{p} \left(\frac{1}{p} \Xt^\top \Xt + \lambdaw \Iden\right)^{-1}\Xt^\top \ybt,
\end{align}
where $\Xt = [\xt_1^\top, \dots, \xt_m^\top]^\top \in \R^{m \times p}$ and $\ybt = [y_1, \dots, y_m]^\top \in \R^{m}$.}

%\item
\red{
\noindent\textbf{Stage 2: Target model.} Given $\btwr \in \R^p$, we consider another data distribution $(\x, \yw) \sim \Dcs(\btw)$ following the linear model $\yw = \x^\top \btw + z$, where $\x \sim \Nn(\boldsymbol{0}, \bSis)$ and $z \sim \Nn(0, \sigmas^2)$. Let $ \{(\x_i, \yw_i)_{i=1}^n\}$ be the dataset for the target model drawn i.i.d.\ from $\Dcs(\btw)$. As for the surrogate model, the estimator for the target model is defined as %follows:
\begin{align}
    \btwsr :=  \arg \min_{\bt \in \R^p} \left\{\frac{1}{2p} \tn{\ybw - \X \bt}^2 + \frac{\lambdas}{2} \tn{\bt}^2 \right\} = \frac{1}{p} \left(\frac{1}{p} \Xt^\top \Xt + \lambdas \Iden\right)^{-1}\Xt^\top \ybw,
\end{align}
where $\X = [\x_1^\top, \ldots, \x_n^\top]^\top \in \R^{n \times p}$ and $\ybw = [\yw_1, \ldots, \yw_n]^\top \in \R^{n}$. Our analysis will generally apply to an arbitrary $\btwr$ choice and will not require it to be the outcome of \eqref{eq:supb ridge}. 
%\end{itemize}
Finally, we define the excess (population) risk for a given estimator $\bth \in \R^{p}$ as defined in \eqref{setup risk}. Throughout the section, we compare the surrogate-to-target model with the following reference model.}

\red{\noindent\textbf{Reference Model: Standard target model.} We study the generalization performance of $\btwsr$ with respect to the standard target model, which has access to the ground-truth parameter through labeling. Specifically, consider the dataset $\{(\x_i, y_i)_{i=1}^n\}$ drawn i.i.d.\ from $\Dcs(\btst)$; then, the estimation is 
\vspace{-.3em}
\begin{align}\label{standard target model ridge}
    \btsr := \arg \min_{\bt \in \R^p} \left\{\frac{1}{2p} \tn{\y - \X \bt}^2 + \frac{\lambdas}{2} \tn{\bt}^2 \right\} = \frac{1}{p} \left(\frac{1}{p} \Xt^\top \Xt + \lambdas \Iden\right)^{-1}\Xt^\top \y,
\end{align}
where $\X = [\x_1^\top, \ldots, \x_n^\top]^\top \in \R^{n \times p}$ and $\y = [y_1, \ldots, y_n]^\top \in \R^{n}$. 
We compare the excess risks of the surrogate-to-target model $\Rc(\btwsr)$ with that of the standard target model $\Rc(\btsr)$.}

\red{Now, we will obtain a result similar to Theorem \ref{theorem risk charac one step} under ridge regression. For this purpose, we have the following definition:
\begin{definition}\label{defn asymp w2s ridge}
    Let $\kappas = p/n > 1$ and $\tausr \in \R$ be the unique solution of the following equation
%\vspace{-.2em}
    \begin{align}
        \kappas^{-1} - \frac{\lambdas}{\tausr}= \frac{1}{p} \tr{(\bSis +  \tausr \Iden)^{-1} \bSis}.
    \end{align}
    Then, define $X_{\kappas, \sigmas^2}^t(\bSis, \btw, \gs)$, the function $\gammas$, and the asymptotic risk $\bar{\Rc}^{s2t}_{\kappas, \sigmas}(\bSis, \btst, \btw)$ as in Definition \ref{defn asymp w2s} based on $\tausr$. 
\end{definition}
%Now, we are ready to share a corresponding of Theorem \ref{theorem risk charac one step} to the ridge regression:
\begin{theorem}\label{theorem risk charac one step ridge}
    Suppose that, for some constant $\Ms>1$, we have $1/\Ms \leq \kappas, \sigmas^2 \leq \Ms$ and $ \spec{\bSis},  \spec{\bSis^{-1}}  \leq \Ms$. Then, there exists a constant $C = C(\Ms)$ such that, for any $\eps \in (0, 1/2]$, the following holds with $R+1 < \Ms$: %\vspace{-0.15cm}
\begin{align}
   \sup_{\btst, \btw \in \B_p(R)} \P ( \sup_{\lambdas \in [0, M_t]} \left| \Rc(\btws) - \bar{\Rc}^{s2t}_{\kappas, \sigmas}(\bSis, \btst, \btw) \right| \geq \eps) \leq C p e^{-p \eps^4/C}.\label{non-asymp bound ridge}
\end{align}
\end{theorem}
\begin{proof}
    The proof directly follows from the proof of Theorem \ref{theorem risk charac one step} with only one modification in \eqref{ridge change}:
    \begin{align}
         \sup_{\btw \in \B(\frac{\Ms + R}{2})} \P \left( \sup_{\lambdas \in [0, M_t]} \left|f(\btws) - \E_{\gs}[f(X_{\kappas, \sigmas^2}^t(\bSis, \btw, \gs))] \right| \geq \eps  \right) \leq C p \e^{-p \eps^4/C}. 
    \end{align}
\end{proof}}

\red{In Propositions \ref{prop optimal surrogate}, \ref{proposition population surrogate} and Corollary \ref{corol optimal surrogate}, we treat $\taus$ as a variable, which is a fixed point equation based on the covariance matrix and ratio between numbers of features and samples under ridgeless regression. However, under ridge regression, we modify the parameter $\tausr$ and define the asymptotic risk the same as we define under ridgeless regression based on this parameter. This means that we can directly apply the results of Propositions \ref{prop optimal surrogate}, \ref{proposition population surrogate} and Corollary \ref{corol optimal surrogate} to ridge regression by only modifying the parameter $\tau$. Similarly, Theorem \ref{theorem risk charac two step} can also be extended to ridge regression. }

\subsection{Experimental Details}\label{app experimental details}
\red{In the CIFAR-10 experiment, we initially trained the surrogate models on the training portion of the CIFAR-10 dataset. While training the surrogate-to-target models, we employed the predictions from the surrogate models. Conversely, when training the standard target model, we utilized the ground truth labels. During testing, all models were evaluated using the test portion of the CIFAR-10 dataset. It is important to note that the distributions for both surrogate and target are identical since we trained and tested the models on the same training and testing sets.}

\red{We employ three distinct surrogate model sizes: big, medium, and small. The big model contains 127,094 parameters, the medium model 58,342 parameters, and the small model 28,286 parameters. All three models are shallow, three-layer convolutional networks that follow the same architectural specifications. For the small model, $ x = 4 $; for the medium model, $ x = 8 $; and for the big model, $ x = 16 $.}
\[
\begin{array}{|c|c|c|c|}
\hline
\textbf{Layer Type} & \textbf{Input Channels} & \textbf{Output Channels} & \textbf{Kernel/Stride/Pad} \\
\hline
\text{Conv2d} & 3 & x & 3 \times 3 / 1 / 1 \\
\text{ReLU} & - & - & - \\
\text{MaxPool2d} & - & - & 2 \times 2 / 2 \\
\text{Conv2d} & x & 2x & 3 \times 3 / 1 / 1 \\
\text{ReLU} & - & - & - \\
\text{MaxPool2d} & - & - & 2 \times 2 / 2 / 0\\
\text{Conv2d} & 2x & 4x & 3 \times 3 / 1 / 1 \\
\text{ReLU} & - & - & - \\
\text{MaxPool2d} & - & - & 2 \times 2 / 2 / 0 \\
\text{Flatten} & - & - & - \\
\text{Linear} & 64x & 100 & - \\
\text{ReLU} & - & - & - \\
\text{Linear} & 100 & 10 & - \\
\hline

\end{array}
\]

\red{We initialize the optimizer for our model using stochastic gradient descent (SGD) provided by the optim module of PyTorch. The optimizer is configured with the following parameters: learning rate set to $0.01$, momentum to $0.9$, and weight decay to $5 \times 10^{-4}$. Additionally, we define a learning rate scheduler, specifically a cosine annealing scheduler, which adjusts the learning rate using a cosine function over 200 iterations, denoted T\_max. We use a batch size of 32 and trained all models over 60 epochs.}

\section{Proofs for Section \ref{sect scaling laws}}\label{app scaling laws}

\omniscentriskestimate*
In the following proof, we suppose that the empirical distributions of $\btbrb$ and $\boldsymbol{\lambda}$ converge as $p\to\infty$ having fixed the ratio $p/n = \kappa$. Then, we will prove that the omniscient risk converges to the asymptotic risk defined in \eqref{risk one step asymp}.
\begin{proof}[Proof for the proportional asymptotic case]
Using Theorem 2.3 of \cite{han2023distributionridgelesssquaresinterpolators}, we can estimate $\bth$ as follows:
\begin{align*}
\bth = (\bSi + \tau \Iden)^{-1} \bSi \left(\btst + \frac{\bSi^{-1/2} \gammaw (\btst) \g }{\sqrt{p}} \right),
\end{align*}
where
$$
\g \sim \Nn(0, \Iden_p), \quad \gammaw^2 (\btst) = \kappa \frac{\sigma + \tau^2 \tn{ (\bSi + \tau \Iden )^{-1} \bSi^{1/2} \btst}^2 }{1-\frac{1}{n}\tr{(\bSi + \tau \Iden)^{-2} \bSi^2} }
, \quad \tau \text{ is the solution to } n = \sum_{i=1}^p \frac{\lambda_i}{\lambda_i + \tau} .
$$ Let
\begin{align*}
\X_1 = (\bSi + \tau \Iden)^{-1} \bSi  \quad , \quad \X_2 =\frac{(\bSi + \tau \Iden)^{-1} \bSi^{1/2} \gammaw (\btst) }{\sqrt{p}}.
\end{align*}
Using this estimate, we can calculate the excess test risk as %follows:
\begin{align}
\Rcom (\bth) = \quad & \mathbb{E} \left[ \left( \left(\X_1 - \Iden\right) \btst + \X_2 \g \right)^{\top} \bSi \left(\left(\X_1 - \Iden\right) \btst + \X_2 \g \right) \right] \nonumber \\
= \quad & \btst^{\top} (\X_1 - \Iden)^{\top} \bSi (\X_1 - \Iden) \btst + \mathbb{E} \left[ \g^{\top} \X_2^{\top} \bSi \X_2 \g \right] \nonumber \\
= \quad & \btst^{\top} (\X_1 -\Iden)^{\top} \bSi (\X_1 - \Iden) \btst + \tr{\X_2^{\top} \bSi \X_2}. \label{total-risk-formula}
\end{align}
Then by recalling the eigendecomposition for the covariance matrix $\bSi = \Ub \La \Ub^{\top}$, we have
\begin{align*}
\X_1 & = (\Ub \La \Ub^{\top} + \tau \Ub \Ub^{\top})^{-1} \Ub \La U^{\top} \\
& = \Ub (\La + \tau \Iden)^{-1} \Ub^{\top} \Ub \La \Ub^{\top} \\
& = \Ub \operatorname{diag}\left(\frac{\boldsymbol{\lambda}}{\boldsymbol{\lambda} + \tau}\right) \Ub^{\top} .
\end{align*}
Using the diagonalization of $\Iden$, $\X_1 - \Iden$ can now be computed as
\begin{align*}
\X_1 - \Iden & = \Ub \operatorname{diag}\left(\frac{-\tau}{\boldsymbol{\lambda} + \tau}\right) \Ub^{\top} .
\end{align*}
Let's now compute
\begin{align*}
\btst^{\top} (\X_1 -\Iden)^{\top} \bSi (\X_1 - \Iden) \btst & = \btst^{\top} \Ub \operatorname{diag}\left(\frac{-\tau}{\boldsymbol{\lambda} + \tau}\right) \Ub^{\top} \Ub \La \Ub^{\top} \Ub \operatorname{diag}\left(\frac{-\tau}{\boldsymbol{\lambda} + \tau}\right) \Ub^{\top} \btst \\
& = \btst^{\top} \Ub \operatorname{diag}\left(\frac{\boldsymbol{\lambda} \tau^2}{(\boldsymbol{\lambda} + \tau)^2}\right) \Ub^{\top} \btst.
\end{align*}
As $\btbrb = \Ub^{\top} \btst $, we obtain that the RHS of the previous expression equals \begin{align*}
 & \sum_{i=1}^{p} \frac{\lambda_i \tau^2 \btbr_i^2}{(\lambda_i + \tau)^2}=\Bc (\btbrb). 
\end{align*}
Next, we write more compactly the terms  $\tr{\X_2^{\top} \bSi \X_2}$ and $\gammaw^2 (\btst)$. % Let's turn back to the second expression that forms the excess test risk.
%\begin{align*}
%\tr{\X_2^{\top} \bSi \X_2} = \frac{\gamma (\btst)^2}{p} \sum_{i=1}^{p} \left(\frac{\lambda_i} {\lambda_i + \tau}\right)^2 \\
%\end{align*}
By defining the short-hand notation $\Omega = \frac{1}{n}\tr{(\bSi + \tau \Iden)^{-2} \bSi^2} = \frac{1}{n} \sum_{i=1}^{p} (1-\zeta_i)^2$, we have
\begin{align*}
\tr{\X_2^{\top} \bSi \X_2} & = \frac{\gammaw^2 (\btst)}{p} \sum_{i=1}^{p} \left(\frac{\lambda_i} {\lambda_i + \tau}\right)^2=\frac{\gammaw^2 (\btst) n \Omega}{p} ,\\ 
\gammaw^2 (\btst) & = \kappa \frac{\sigma^2 + \tau^2 \tn{ (\bSi + \tau \Iden )^{-1} \bSi^{1/2} \btst}^2 }{1- \Omega} = \kappa \frac{\sigma^2 + \sum_{i=1}^{p} \frac{\lambda_i \tau^2 \btbr_i^2}{(\lambda_i+\tau)^2}}{1 - \Omega} = \kappa \frac{\sigma^2 + \Bc (\btbrb)}{1 - \Omega},
\end{align*}
where $\kappa = \frac{p}{n}$. Hence, putting it all together in \eqref{total-risk-formula} gives the desired result.
%\begin{align*}
%\Rc (\bth) = \Bc (\btbrb) + \Omega \frac{\sigma^2 + \Bc (\btbrb)}{1 - \Omega} = \frac{\sigma^2 \Omega + \Bc (\btbrb)}{1 - \Omega}
%\end{align*}
\end{proof}

\asymptotictaubound*

\begin{proof}

We start with the asymptotic analysis of $\taus$. 
Along the same lines as~\cite{simon2024bettermodernmachinelearning}, since $\frac{i^{-\alpha}}{i^{-\alpha} + \taus}$ is a monotonically decreasing function, we have:
\begin{align*}
n = \sum_{i=1}^{\infty} \frac{i^{-\alpha}}{i^{-\alpha} + \taus} \leq \int_0^{\infty} \frac{x^{-\alpha}}{x^{-\alpha} + \taus} \, dx = \dfrac{\pi}{\alpha \sin{(\pi / \alpha)}} \taus^{-1 / \alpha}.
\end{align*}
Furthermore,
\begin{align*}
\dfrac{\pi}{\alpha \sin{(\pi / \alpha)}} \taus^{-1 / \alpha} - 1 = \int_0^{\infty} \frac{x^{-\alpha}}{x^{-\alpha} + \taus}\, dx - 1 \leq \int_1^{\infty} \frac{x^{-\alpha}}{x^{-\alpha} + \taus} \, dx \leq \sum_{i=1}^{\infty} \frac{i^{-\alpha}}{i^{-\alpha} + \taus} = n.
\end{align*}
Hence, combining these two facts gives
\begin{align*}
& \dfrac{\pi}{\alpha \sin{(\pi / \alpha)}} \taus^{-1 / \alpha} - 1 \leq n \leq \dfrac{\pi}{\alpha \sin{(\pi / \alpha)}} \taus^{-1 / \alpha} \\
\iff & \left( \frac{\left( n + 1 \right) \alpha \sin{(\pi / \alpha)}}{\pi} \right)^{-\alpha} \leq \taus \leq \left( \frac{n \alpha \sin{(\pi / \alpha)}}{\pi} \right)^{-\alpha},
\end{align*}
which leads to the desired result. % implies,
%\begin{align*}
%\tau = \left( \frac{\pi}{\alpha \sin{(\pi / \alpha)}} \right)^{\alpha} n^{-\alpha} \left(1 + O ( n^{-1}) \right)
%\end{align*}

Next, we move to the asymptotic analysis of $\Omega$. We have that
\begin{align*}
n\Omega = \sum_{i=1}^{\infty} \left( \frac{i^{-\alpha}}{i^{-\alpha} + \taus} \right)^2 & \leq \int_0^{\infty} \left( \frac{x^{-\alpha}}{x^{-\alpha}+ \taus} \right)^2 \, dx = \dfrac{\pi (\alpha - 1)}{\alpha^2 \sin{(\pi / \alpha})} \taus^{-1 / \alpha}.
\end{align*}
Besides, since the summand is monotonically decreasing, we also have %can use the Riemann sums as follows:
\begin{align*}
\dfrac{\pi (\alpha - 1)}{\alpha^2 \sin{(\pi / \alpha})} \taus^{-1 / \alpha} - 1 \leq \int_0^{\infty} \left( \frac{x^{-\alpha}}{x^{-\alpha}+ \taus} \right)^2 \, dx - 1 \leq \int_1^{\infty} \left( \frac{x^{-\alpha}}{x^{-\alpha}+ \taus} \right)^2 \, dx \leq \sum_{i=1}^{\infty} \left( \frac{i^{-\alpha}}{i^{-\alpha} + \taus} \right)^2 = n\Omega.
\end{align*}
Hence,
\begin{align}
\dfrac{\pi (\alpha - 1)}{\alpha^2 \sin{(\pi / \alpha})} \taus^{-1 / \alpha} - 1  \leq n\Omega \leq \dfrac{\pi (\alpha - 1)}{\alpha^2 \sin{(\pi / \alpha})} \taus^{-1 / \alpha} .\label{omega-asymptotic-bounds}
\end{align}
By the hypothesis on $\taus$, we have that %Using the Taylor Expansion, we have $\left(1+O (n^{-1})\right)^{-1/\alpha} = 1-\frac{1}{\alpha} O( n^{-1} )$. Thus,
\begin{align}
\taus^{-1 / \alpha} = n \frac{\alpha \sin{(\pi / \alpha})}{\pi} \left(1 - O\left( n^{-1} \right) \right), \label{tau-one-alpha-asymptotic-bound}
\end{align}
and plugging this in \eqref{omega-asymptotic-bounds} gives the desired result. %s
%\begin{align*}
%\Omega = \frac{\alpha - 1}{\alpha} - O(n^{-1})
%\end{align*}
\end{proof}

\closedformpruningcondition*

\begin{proof}
Recall from Proposition \ref{proposition population surrogate} that we should identify indices $i$ which satisfy the condition $\zeta_i^2 > 1 - \Omega$ to decide if we're better off not selecting this dimension $i$ in the surrogate model. Furthermore, Proposition \ref{asymptotic-tau-bound} gives that $\Omega = \dfrac{\alpha - 1}{\alpha} - O(n^{-1})$. Putting these together, we have
\begin{align*}
    & \zeta_i^2 > 1-\Omega \\
    \iff & \zeta_i^2 > c^{\prime} \quad \text{where } c^{\prime} = \frac{1}{\alpha} + O(n^{-1}) \\
    \iff & \frac{\taus^2}{(\taus + i^{-\alpha})^2} = \frac{\taus^2 i^{2\alpha}}{(\taus i^{\alpha} + 1)^2} > c^{\prime} \\
    \iff & (1 - c^{\prime}) \taus^2 i^{2\alpha} > 2 c^{\prime} \taus i^{\alpha} + c^{\prime} \\
    \iff & \left(\sqrt{1 - c^{\prime}} \taus i^{\alpha} - \frac{c^{\prime}}{\sqrt{1 - c^{\prime}}}\right)^2 > \frac{c^{\prime}}{1 - c^{\prime}} \\
    \iff & i^{\alpha} > \frac{\sqrt{c^{\prime}}}{\taus (1 - \sqrt{c^{\prime}})} \\
    \iff & i > \taus^{-1 / \alpha} \left( \frac{\sqrt{c^{\prime}}}{1 - \sqrt{c^{\prime}}} \right)^{1 / \alpha}
\end{align*}
As $c^{\prime}=\frac{1}{\alpha} + O(n^{-1})$, we get $\left( \frac{\sqrt{c^{\prime}}}{1 - \sqrt{c^{\prime}}} \right)^{1 / \alpha} = \frac{1}{(\sqrt{\alpha} - 1)^{1/\alpha}} (1 + O(n^{-1}))$. Incorporating \eqref{tau-one-alpha-asymptotic-bound}, we achieve that 
\begin{align*}
\taus^{-1 / \alpha} \left( \frac{\sqrt{c^{\prime}}}{1 - \sqrt{c^{\prime}}} \right)^{1 / \alpha} = n \dfrac{\alpha \sin{(\pi / \alpha)}}{\pi (\sqrt{\alpha} - 1)^{1 / \alpha}} \left(1 + O(n^{-1}) \right) = nC_2 + O(1).
\end{align*}
Similarly, by following the same procedure with the initial inequality $\zeta_i > 1 - \Omega$, we get
\begin{align*}
\zeta_i > 1 - \Omega \iff i >  n C_1 + O(1), \quad \text{where} \quad C_1 = \dfrac{\alpha \sin{(\pi / \alpha)}}{\pi (\alpha - 1)^{1 / \alpha}} .
\end{align*}
\end{proof}

\begin{figure*}[t!]
     \centering
     \includegraphics[width=0.5\textwidth]{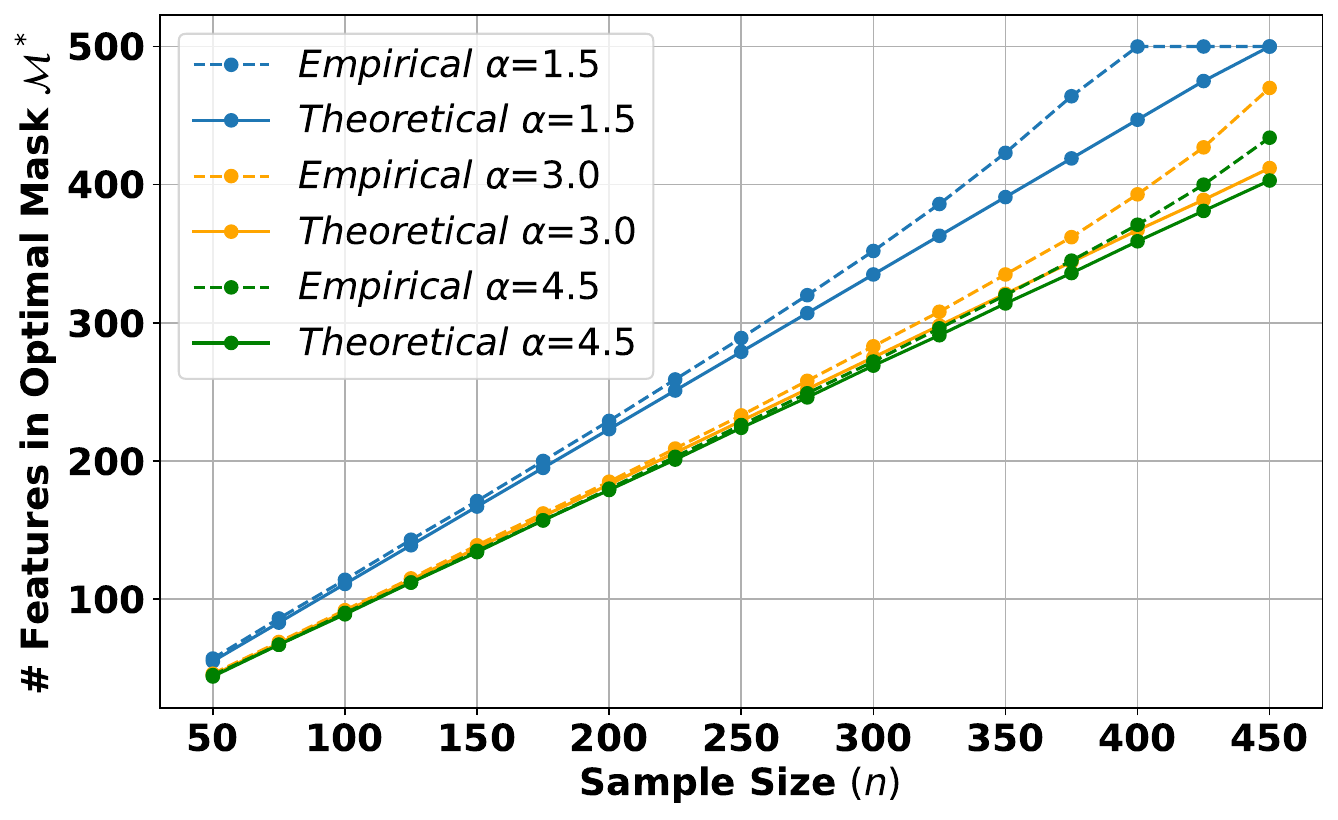}
     \caption{Comparison of the empirical and theoretical number of features satisfying the feature selection condition in the optimal mask $\Pipst$ ($\zeta_i^2 < 1 - \Omega$). The theoretical value is calculated as $n \dfrac{\alpha \sin{(\pi / \alpha)}}{\pi (\sqrt{\alpha} - 1)^{1 / \alpha}}$, ignoring the $O(1)$ in Proposition \ref{closed-form pruning condition}.  \textbf{Setting:} The feature size is $p=500$, and the feature covariance follows the power-law structure $\lambda_i = i^{-\alpha}$ for $\alpha = 1.5, 3.0,$ and $4.5$.}
     \label{fig:optimal-mask}
\end{figure*}

In Figure \ref{fig:optimal-mask}, we compare the empirical results with theoretical predictions for the number of features that meet the selection criteria in the optimal mask $\Pipst$ ($\zeta_i^2 < 1 - \Omega$). The theoretical value, calculated as $n \dfrac{\alpha \sin{(\pi / \alpha)}}{\pi (\sqrt{\alpha} - 1)^{1 / \alpha}}$ ignoring the $O(1)$ term, aligns well with the experimental data and the accuracy in estimation increases with $\alpha$. Notably, our theoretical estimate also closely matches the empirical results when the sample size $n$ is small relative to the feature size $p$.

\begin{restatable}[Scaling law for masked surrogate-to-target model]{proposition}{scalinglawpopulationsurrogate}\label{scaling-law-population-surrogate}
Together with the eigenvalues, also assume now power-law form for $\lambda_i \beta_i^2$, that is $\lambda_i \beta_i^2 = i^{-\beta}$ for $\beta > 1$. Then, in the limit of $p \to \infty$, the excess test risk for the masked surrogate-to-target model with the optimal dimensionality has the same scaling law as the reference (target) model:
\begin{align*}
\Rcom (\btws) = \Theta(n^{-(\beta - 1)}) \quad \text{if } \beta < 2\alpha + 1,
\end{align*}
and
\begin{align*}
\Rcom (\btws) = \Theta(n^{-2\alpha}) \quad \text{if } \beta > 2\alpha + 1.
\end{align*}
\end{restatable}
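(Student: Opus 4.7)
My plan is a sandwich argument combined with Proposition~\ref{scaling-law-arbitrary-surrogate}. The masked surrogate class $\{\Pip(\btst):\Pip\in\boldsymbol{M}\}$ contains the full-identity mask, for which $\btw=\btst$ and the surrogate-to-target model coincides exactly with the standard target model. Hence the risk attained at the optimal mask $\Pipst$ is upper bounded by $\Rc(\bts)$. On the other hand, since the masked surrogates form a strict subset of all possible surrogate parameters, the risk at $\Pipst$ is lower bounded by the risk $\Rc^*(\btws)$ under the unconstrained optimal surrogate $\btwst$ of Proposition~\ref{prop optimal surrogate}. We therefore have
\[
\Rc^*(\btws) \;\leq\; \Rc(\btws)\big|_{\Pipst} \;\leq\; \Rc(\bts).
\]
Proposition~\ref{scaling-law-arbitrary-surrogate} establishes that both bookends have matching exponents: each equals $\Theta(n^{-(\beta-1)})$ when $\beta<2\alpha+1$ and $\Theta(n^{-2\alpha})$ when $\beta>2\alpha+1$. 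The sandwich then pins the masked risk to the same scaling in each case, which is exactly the claim.

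If one wishes to verify this by direct calculation at $\Pipst$, the plan is to insert $\pw=nC_2+O(1)$ from Proposition~\ref{closed-form pruning condition} into the decomposition \eqref{risk-decomposition-ps}, and use $\Omega=(\alpha-1)/\alpha-O(n^{-1})$ and $\taus=cn^{-\alpha}(1+O(n^{-1}))$ from Proposition~\ref{asymptotic-tau-bound}. The tail term $\sum_{i>\pw}i^{-\beta}$ gives $\Theta(n^{-(\beta-1)})$ by integral approximation; the noise contribution is $\Theta(\sigmas^2)$ and hence subdominant under the noise-scaling assumption discussed in Section~\ref{sect scaling laws}; and the signal term $\sum_{i=1}^{\pw}i^{-\beta}\zeta_i^2$, with $\zeta_i=\taus i^{\alpha}/(1+\taus i^{\alpha})$, is analyzed via the change of variables $u=\taus^{1/\alpha}i$ that rescales the transition region $i\sim\taus^{-1/\alpha}$ to $u\sim 1$ and turns the integral approximation into
\[
\taus^{(\beta-1)/\alpha}\int_{\taus^{1/\alpha}}^{c'}\frac{u^{2\alpha-\beta}}{(1+u^\alpha)^2}\, du, \qquad c' = (\sqrt{\alpha}-1)^{-1/\alpha}.
\]
When $\beta<2\alpha+1$, the integrand is integrable at $u=0$, the integral tends to a positive constant as $n\to\infty$, and the signal term is $\Theta(\taus^{(\beta-1)/\alpha})=\Theta(n^{-(\beta-1)})$, matching the tail. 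When $\beta>2\alpha+1$, the integral diverges at $u=0$; instead the inequality $\zeta_i^2\leq\taus^2 i^{2\alpha}$ combined with convergence of $\sum_i i^{2\alpha-\beta}$ (since $2\alpha-\beta<-1$) yields signal $\Theta(\taus^2)=\Theta(n^{-2\alpha})$, which dominates the tail because $\beta-1>2\alpha$.

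The main technical obstacle in the direct route is controlling the sum-to-integral error uniformly in $n$ near the regime boundary $\beta\approx 2\alpha+1$, where the integral constant of one case blows up while the series constant of the other is near-divergent; an additional nuisance is the $O(1)$ offset in $\pw$ from Proposition~\ref{closed-form pruning condition}, which must be absorbed without disturbing the exponent. The sandwich argument sidesteps both issues and suffices for the $\Theta$-scaling claimed, so I would lead with it and invoke the direct computation only as an interpretive sanity check or to extract explicit leading constants.
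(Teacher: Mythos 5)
Your sandwich argument is conceptually attractive and the upper bound it supplies is sound (the identity mask reduces the masked surrogate-to-target model to the standard target model, and Proposition~\ref{proposition population surrogate} already guarantees $\Rc(\btws)\big|_{\Pipst}\leq\Rc(\bts)$). However, the lower bound half of your sandwich, which leans on $\Rc^*(\btws)=\Theta(\cdot)$ from Proposition~\ref{scaling-law-arbitrary-surrogate}, is circular given the paper's dependency structure: the paper's proof of Proposition~\ref{scaling-law-arbitrary-surrogate}, Case~2, explicitly invokes the estimate $\sum_{i=1}^{\omgn}\frac{i^{-\beta}}{(1+\taus^{-1}i^{-\alpha})^2}=\Theta(n^{-2\alpha})$ which is established \emph{inside} the proof of Proposition~\ref{scaling-law-population-surrogate}, Case~2. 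So you cannot lead with the sandwich and treat Proposition~\ref{scaling-law-arbitrary-surrogate} as an independent primitive; you would first have to re-derive that sum scaling, which is exactly the content of the direct calculation you relegate to a sanity check.

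The paper's own proof is actually a sandwich as well, but it sources the two sides differently: the upper side is $\Rc(\btws)\big|_{\Pipst}\leq\Rc(\bts)=\Theta(n^{-(\beta-1)})$ or $\Theta(n^{-2\alpha})$ with the target scaling drawn from the cited literature (\cite{Cui_2022,simon2024bettermodernmachinelearning}) rather than from Proposition~\ref{scaling-law-arbitrary-surrogate}, while the lower side is obtained by directly estimating the dominant component of \eqref{risk-decomposition-ps}: the truncated tail $\mathbf{err2}'=\sum_{i>\omgn}i^{-\beta}$ when $\beta<2\alpha+1$, and the bias-like sum $\mathbf{err1}=\sum_{i\leq\omgn}i^{-\beta}/(1+\taus^{-1}i^{-\alpha})^2$ bounded via the two-sided control $c_1n^\alpha\leq\taus^{-1}\leq c_2n^\alpha$ when $\beta>2\alpha+1$. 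Your plan~B matches that calculation in content and is correct: the tail estimate is identical, the Case~2 signal bound via $\zeta_i^2\leq\taus^2i^{2\alpha}$ (with the matching lower bound from the $i=1$ term) agrees with the paper, and your change-of-variables $u=\taus^{1/\alpha}i$ treatment of the Case~1 signal term is a clean alternative the paper does not even need, since it dispatches that term via the external upper bound rather than estimating it. So to fix the proof, lead with your plan~B for the lower bound and keep the identity-mask observation for the upper bound; once you do this, the argument is self-contained and in fact somewhat more explicit than the paper's (it computes $\mathbf{err1}$ in both cases rather than leaning on the literature bound to absorb it in Case~1).
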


\begin{proof}
As discussed in Section \ref{sect scaling laws}, in order to analyze the model's inherent error, we need to set $\sigmas^2 = O(n^{-\gamma})$ where $\gamma$ is the exponent characterizing the scaling law of the test risk in the noiseless setting. We will work on this proof in two cases depending on $\beta$ and $2\alpha + 1$. 

\textbf{Case 1:} $\beta < 2\alpha + 1$. In this case, it is previously stated by \cite{Cui_2022,simon2024bettermodernmachinelearning} that the test risk of ridgeless overparameterized linear regression can be described in the scaling sense as $\mathbf{err} = \Theta ( n^{-\beta + 1} )$ when $\beta < 2\alpha + 1$. Consider the optimal mask operation $\Pip$ mentioned in Proposition \ref{proposition population surrogate} that selects all features satisfying $1 - \zeta_i^2 > \Omega$. Let $\pw$ be the number of selected features. We can then decompose the risk estimate in Definition \ref{omniscent test risk estimate} as follows:
\begin{align*}
\frac{\Bc (\Bar{\btst}) + \sigmas^2 \Omega}{1 - \Omega} = \frac{\sum_{i=1}^{\pw} \lambda_i \zeta_i^2 \beta_i^2 + \sum_{i=\pw + 1}^{p} \lambda_i \zeta_i^2 \beta_i^2 + \sigmas^2 \Omega}{1 - \Omega} = \frac{\mathbf{err1} + \mathbf{err2} + \sigmas^2 \Omega}{1 - \Omega},
\end{align*}
where $\mathbf{err1}$ and $\mathbf{err2}$ are the contributions to the total risk of the target model from dimensions selected and omitted in the surrogate model, respectively. Therefore, we express the total error as:
\begin{align*}
\frac{\mathbf{err1} + \mathbf{err2} + \sigmas^2 \Omega}{1 - \Omega} = \mathbf{err} = \Theta ( n^{-\beta + 1} ).
\end{align*}
    
Going back to Proposition \ref{closed-form pruning condition}, we know that, as $p \to \infty$, the criterion for selecting a feature $i$ in the optimal masked surrogate model is given by
\begin{align*}
    i \red{<}  n C_2 + O(1), \quad \text{where} \quad C_2 = \dfrac{\alpha \sin{(\pi / \alpha)}}{\pi (\sqrt{\alpha} - 1)^{1 / \alpha}} .
\end{align*}

Define now $\omgn = nC_2 + O(1)$. The equation \eqref{risk-decomposition-ps} tells us that after the optimal mask operation $\Pip$, $\dfrac{\mathbf{err2}}{\red{1 - \Omega}}$ is replaced by $\mathbf{err2}^{\prime}$, which is calculated as follows
\begin{align*}
\mathbf{err2}^{\prime} = \sum_{i =  \omgn + 1}^{p} \lambda_i \beta_i^2 = \sum_{i = \omgn + 1}^{p} i^{-\beta}.
\end{align*}
Since $x^{-\beta}$ is a monotonically decreasing function, we can bound the summation by the following two integrals:
\begin{align*}
\int_{\omgn + 1}^{p + 1} x^{-\beta} \, dx &\leq \sum_{\omgn + 1}^{p} i^{-\beta} \leq \int_{\omgn}^{p} x^{-\beta} \, dx \\
\frac{(\omgn + 1)^{-\beta + 1} - (\red{p+1})^{-\beta + 1}}{\beta - 1} &\leq \sum_{\omgn + 1}^{p} i^{-\beta} \leq \frac{(\omgn)^{-\beta + 1} - p^{-\beta + 1}}{\beta - 1} .
\end{align*}
In the limit of $p \to \infty $, we obtain
\begin{align*}
\mathbf{err2}^{\prime} = \Theta ( n^{-\beta + 1} ) .
\end{align*}
Thus, we have tightly estimated $\mathbf{err2}^{\prime}$. Using the fact from Proposition \ref{asymptotic-tau-bound} that $\Omega = \Theta(1)$, and our assumption on the noise variance $\sigmas^2 = O(n^{-\beta + 1})$, we conclude that the scaling law doesn't change for the surrogate-to-target model as
\begin{align*}
    \Rcom (\btws) = \frac{\mathbf{err1} + \sigmas^2 \Omega}{1 - \Omega} + \mathbf{err2}^{\prime} = \Theta ( n^{-\beta + 1} ) .
\end{align*}

\textbf{Case 2:} $\beta > 2\alpha + 1$. In this case, we %will fully focus on $\mathbf{err1}$ and 
show that the scaling law is determined by $\mathbf{err1}$, hence changing $\mathbf{err2}$ to $\mathbf{err2}^{\prime}$ has no effect in the scaling sense. From Proposition \ref{asymptotic-tau-bound}, we have the asymptotic expression $ \taus = c n^{-\alpha} \left(1 + O ( n^{-1}) \right)$, for $c = \left( \dfrac{\pi}{\alpha \sin{(\pi / \alpha)}} \right)^{\alpha}$. We can argue that there exists positive constants $c_1 < \dfrac{1}{c} < c_2$, such that $c_1 n^{\alpha} \leq \dfrac{1}{\taus} \leq c_2 n^{\alpha}$. We have that %\marco{need to check these calculations again}
\begin{align*}
\mathbf{err1} = \sum_{i=1}^{\omgn} \frac{i^{-\beta}}{(1 + \frac{1}{\taus} i^{-\alpha})^2} &\leq \sum_{i=1}^{\omgn} \frac{i^{-\beta}}{(1+ c_1 n^{\alpha} i^{-\alpha})^2} \\
&= \sum_{i=1}^{\omgn} \frac{i^{2\alpha -\beta}}{(i^{\alpha} + c_1 n^{\alpha})^2} \leq \sum_{i=1}^{\omgn} \frac{i^{2\alpha -\beta}}{c_1^2 n^{2\alpha}} .
\end{align*}
This implies $\mathbf{err1} = O ( n^{-2\alpha} )$. At the same time,
\begin{align*}
\mathbf{err1} = \sum_{i=1}^{\omgn} \frac{i^{-\beta}}{(1+ \frac{1}{\taus} i^{-\alpha})^2} & \geq \sum_{i=1}^{\omgn} \frac{i^{2\alpha - \beta}}{(i^{\alpha} + c_2 n^{\alpha})^2} \\
& \geq \sum_{i=1}^{\omgn} \frac{i^{2\alpha - \beta}}{((\omgn)^{\alpha} + c_2 n^{\alpha})^2} = \sum_{i=1}^{\omgn} \frac{i^{2\alpha - \beta}}{n^{2\alpha} ((\omgn / n)^{\alpha} + c_2)^2}.
\end{align*}
Using $\omgn / n = \Theta(1)$ gives $\mathbf{err1} = \Omega ( n^{-2\alpha} )$ and we can conclude that $\mathbf{err1} = \Theta ( n^{-2\alpha} )$. From \cite{Cui_2022}, we already know that $\mathbf{err} = \Theta ( n^{-2\alpha} )$ when $\beta > 2\alpha + 1$. Using $\Omega = \Theta(1)$, and our assumption on the noise variance $\sigmas^2 = O(n^{-2\alpha})$ allows us to conclude that the scaling is dominated by $\mathbf{err1}$, and thus, the scaling law remains unchanged.
\end{proof}

\scalinglawarbitrarysurrogate*

\begin{proof} 
From asymptotic risk decomposition in \eqref{asymptotic-total-risk-decomposition}, we can write
\begin{align*}
\E_{\gs}\left[f(X_{\kappas, \sigmas^2}^t(\bSis, \btw, \gs))\right] &= (\btw - \btst)^\top \thb_1^\top \bSis \thb_1 (\btw - \btst) + \gammas^2(\btw) \E_{\gs}[\thb_2^\top \bSis \thb_2] \\
&+ \btst^\top (\Iden -\thb_1)^\top \bSis (\Iden - \thb_1) \btst - 2 \btst^\top (\Iden- \thb_1)^\top \bSis \thb_1 (\btw-\btst) \\
& \geq \gammas^2(\btw) \E_{\gs}[\thb_2^\top \bSis \thb_2],
\end{align*}
since we can put in the form of $(a-b)^2 + c^2 \geq c^2$. At the same time, we know that 
\begin{align*}
\gammas^2(\btw) \E_{\gs}[\thb_2^\top \bSis \thb_2] &= \kappas \frac{\sigmas^2 + \taus^2 \tn{ (\bSis + \taus \Iden )^{-1} \bSis^{1/2} \btst}^2 }{1-\frac{1}{n}\tr{(\bSis + \taus \Iden)^{-2} \bSis^2} } \frac{\tr{\bSis^2 (\bSis + \taus \Iden)^{-2}}}{p} \\
& = \kappas \frac{\sigmas^2 + \taus^2 \tn{ (\bSis + \taus \Iden )^{-1} \bSis^{1/2} \btw}^2 }{1 - \Omega} \frac{n \Omega}{p} \\
& = \frac{\Omega}{1 - \Omega} \left( \sigmas^2 + \sum_{i=1}^{p} \lambda_i \beta_i^{s2} \zeta_i^2 \right) .
\end{align*}
Recall the optimal surrogate vector discussed in Proposition \ref{prop optimal surrogate} and the corresponding minimal surrogate-to-target risk $\Rcomst (\btws)$. In this case, we can write
\begin{align*}
\sum_{i=1}^{p} \lambda_i \beta_{i}^{s*2} \zeta_i^2 = \sum_{i=1}^{p} \lambda_i \beta_i^2 \frac{(1-\zeta_i)^2 \zeta_i^2}{\left((1-\zeta_i)^2 + \frac{\Omega}{1 - \Omega} \zeta_i^2 \right)^2}.
\end{align*}

\begin{figure*}[t!]
     \centering
     \begin{subfigure}[b]{0.49\textwidth}
         \centering
         \includegraphics[width=\textwidth]{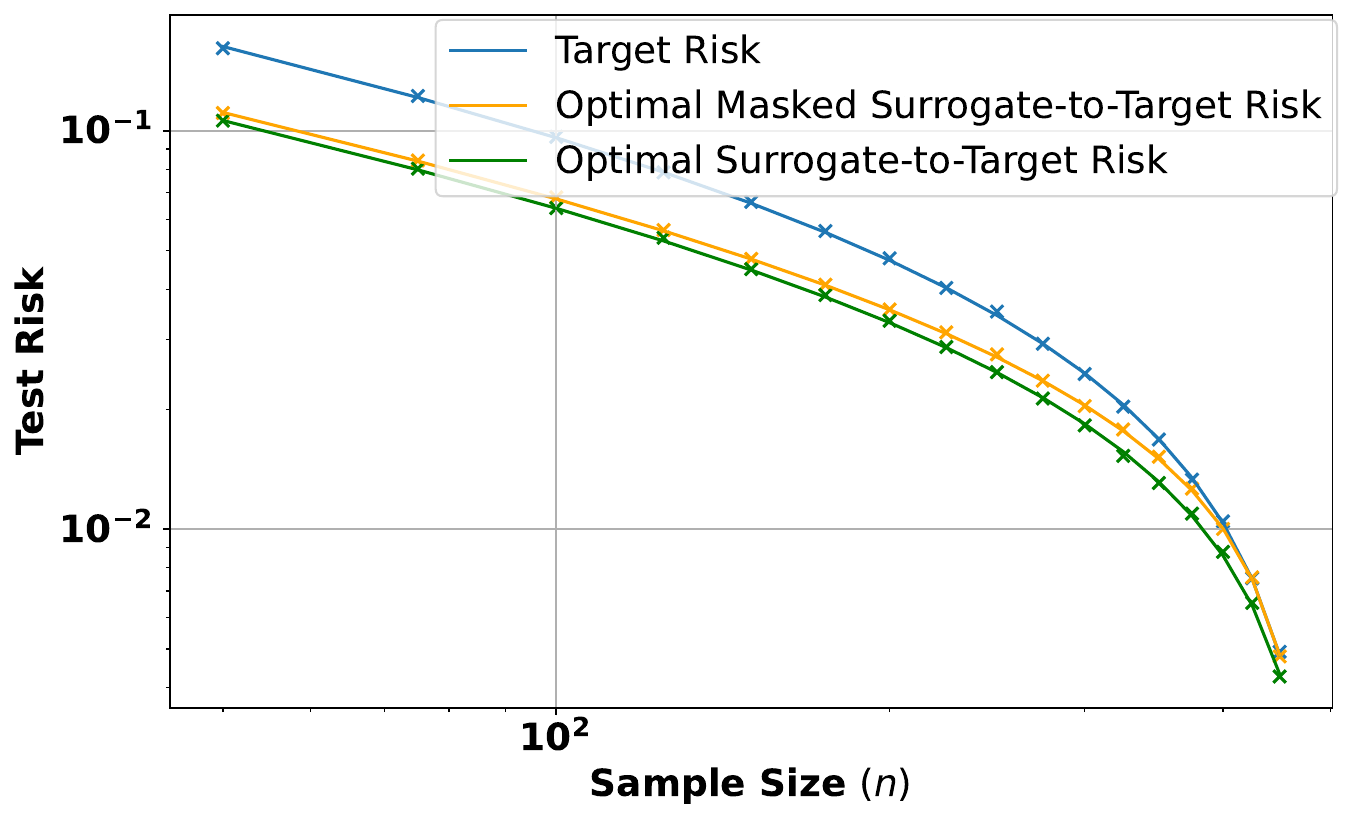}
         \caption{Test risks in log-log scale when $p = 500$}
         \label{fig:scaling-law-p-500}
     \end{subfigure}
     \begin{subfigure}[b]{0.5\textwidth}
         \centering
         \includegraphics[width=1.02\textwidth]{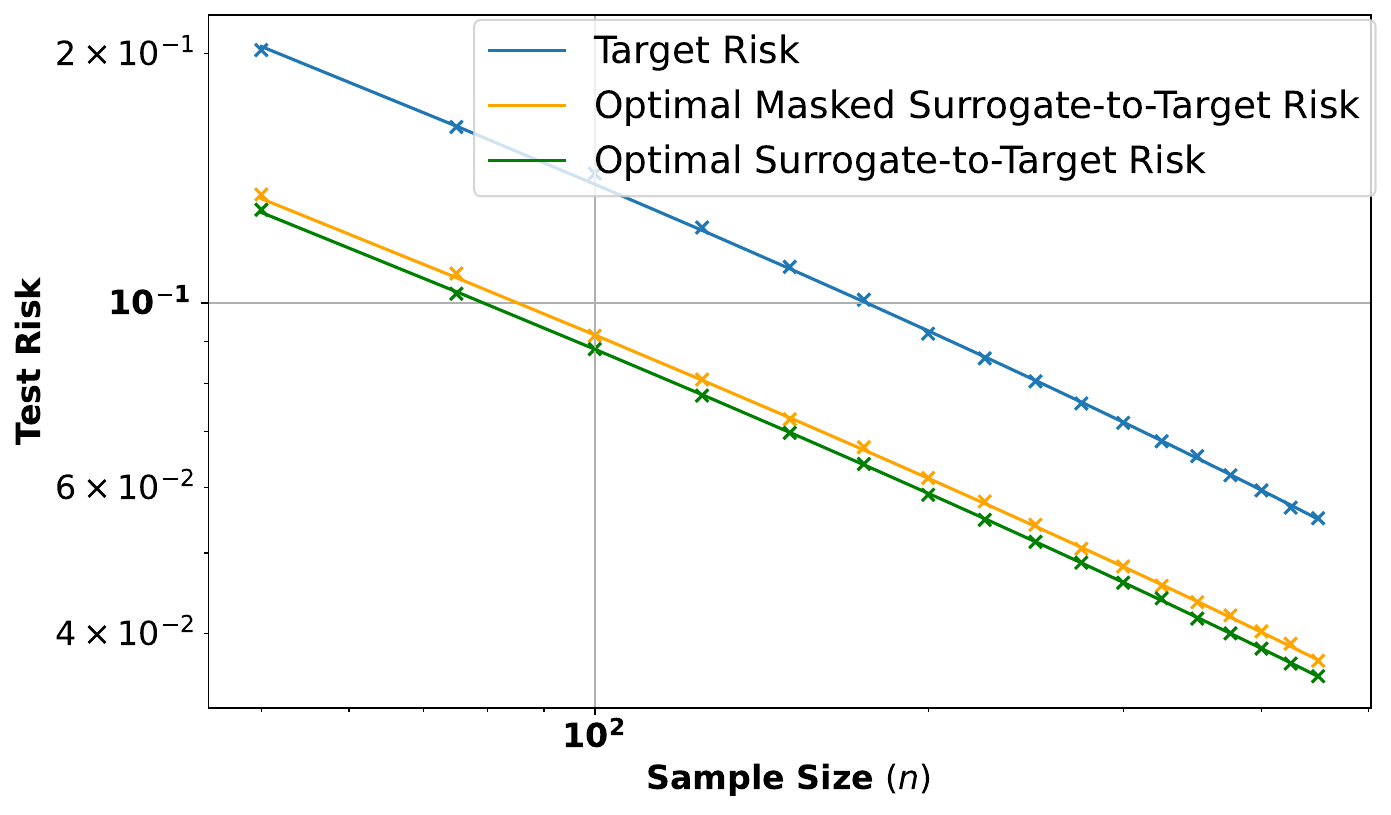}
         \caption{Test risks in log-log scale when $p = 5000 \gg n$}
         \label{fig:scaling-law-p-5000}
     \end{subfigure}
        \caption{\red{
        Scaling law behavior of the test risks of optimal surrogate models. \textbf{(a):} Associated test risks as a function of sample size in log-log scale. \textbf{Setting:} The feature size is $p=500$; the sample size $n$ changes from $50$ to $450$ with increments of $50$; the feature covariance follows the power-law structure $\lambda_i = i^{-2}$, $\lambda_i \beta_i^2 = i^{-1.5}$ \textbf{(b):} Associated test risks as a function of sample size in log-log scale when $p \gg n$. \textbf{Setting:} Same as in (a) except that $p=5000$.
        }}
        \label{fig:log-log-risk-model-comparison}
        \vspace{-0.55cm}
\end{figure*}

%Similar to the previous proposition 
Similar to the previous proposition and as discussed in Section~\ref{sect scaling laws}, to analyze the model's inherent error, we set $\sigmas^2 = O(n^{-\gamma})$ where $\gamma$ is the exponent characterizing the scaling law of the test risk in the noiseless setting. It is previously stated by \cite{Cui_2022,simon2024bettermodernmachinelearning} that the test risk of ridgeless overparameterized linear regression can be described in the scaling sense as $\mathbf{err} = \Theta ( n^{-\beta + 1} )$ when $\beta < 2\alpha + 1$. We will proceed by considering two cases based on the relationship between $\beta$ and $2\alpha + 1$.

\textbf{Case 1:} $\beta < 2\alpha + 1$

Consider the interval of $i$'s satisfying $\zeta_i > 1 - \Omega$ and $\zeta_i^2 < 1 - \Omega$. By Proposition \ref{closed-form pruning condition}, we have
\begin{align*}
\zeta_i > 1 - \Omega \iff i >  n C_1 + O(1), \quad \text{where} \quad C_1 = \dfrac{\alpha \sin{(\pi / \alpha)}}{\pi (\alpha - 1)^{1 / \alpha}} . \\
\zeta_i^2 > 1 - \Omega \iff i >  n C_2 + O(1), \quad \text{where} \quad C_2 = \dfrac{\alpha \sin{(\pi / \alpha)}}{\pi (\sqrt{\alpha} - 1)^{1 / \alpha}} .
\end{align*}
Let $\omgn$ be defined as in the previous proposition and define $\phin = nC_1 + O(1)$. Then, the interval of interest corresponds to the set of indices $i$ such that $\phin < i < \omgn$. Within this interval, we observe
\begin{align*}
(1-\zeta_i)^2 \zeta_i^2 &\geq \left(1 - \sqrt{(1 - \Omega)} \right)^2 (1-\Omega)^2 = k_1 \\
(1-\zeta_i)^2 + \frac{\Omega}{1 - \Omega} \zeta_i^2 & \leq 1 + \frac{\Omega}{1 - \Omega} = k_2
\end{align*}
Using the fact from Proposition \ref{asymptotic-tau-bound} that $\Omega = \dfrac{\alpha - 1}{\alpha} - O(n^{-1})$ tells us $k_1 = \Theta(1)$ and $k_2 = \Theta(1)$. Utilizing these bounds, we obtain
\begin{align*}
\Rcomst (\btws) \geq \red{\frac{\Omega}{1 - \Omega}} \sum_{i=1}^{p} \lambda_i \beta_i^2 \frac{(1-\zeta_i)^2 \zeta_i^2}{\left((1-\zeta_i)^2 + \frac{\Omega}{1 - \Omega} \zeta_i^2 \right)^2} & \geq \red{\frac{\Omega}{1 - \Omega}} \sum_{i=\phin}^{\omgn} i^{-\beta} \frac{(1-\zeta_i)^2 \zeta_i^2}{\left((1-\zeta_i)^2 + \frac{\Omega}{1 - \Omega} \zeta_i^2 \right)^2} \\
&\geq \red{\frac{\Omega}{1 - \Omega}} \sum_{i=\phin}^{\omgn} i^{-\beta} \frac{k_1}{k_2} \\
& \geq n^{-\beta + 1} \frac{k_1}{k_2} \red{\frac{\Omega}{1 - \Omega}} \left( \frac{(\phin / n)^{-\beta + 1} - (\omgn / n)^{-\beta + 1}}{\red{\beta - 1}} \right) \\
& \stackrel{(a)}{=} \Theta ( n^{-\beta + 1} ),
\end{align*}
where (a) follows from the fact $\omgn / n = \Theta(1)$, $\phin / n = \Theta(1)$, and $\Omega = \Theta(1)$. This implies that
\begin{align*}
\Rcomst (\btws) = \Omega ( n^{-\beta + 1} ).
\end{align*}
Recall that the optimal surrogate-to-target improves over the risk of the standard target model, thus $\Rcomst (\btws) = O(n^{-\beta + 1})$. We therefore conclude $\Rcomst (\btws) = \Theta(n^{-\beta + 1})$ for this case.

\textbf{Case 2:} $\beta > 2\alpha + 1$

\noindent In this case, we have
\begin{align*}
\Rcomst (\btws) \geq \red{\frac{\Omega}{1 - \Omega}} \sum_{i=1}^{p} \lambda_i \beta_i^2 \frac{(1-\zeta_i)^2 \zeta_i^2}{\left((1-\zeta_i)^2 + \frac{\Omega}{1 - \Omega} \zeta_i^2 \right)^2} & = \red{\frac{\Omega}{1 - \Omega}} \sum_{i=1}^{p} \lambda_i \beta_i^2 \zeta_i^2 \frac{(1-\zeta_i)^2}{\left((1-\zeta_i)^2 + \frac{\Omega}{1 - \Omega} \zeta_i^2 \right)^2} \\
& \geq \sum_{i: \zeta_i < 1 - \Omega} \lambda_i \zeta_i^2 \beta_i^2 \frac{\Omega^{\red{3}}}{\left( 1 + \frac{\Omega}{1 - \Omega} \right)^2 \red{(1 - \Omega)}} = \sum_{i=1}^{\phin} \frac{i^{-\beta}}{(1+ \frac{1}{\taus} i^{-\alpha})^2} k_3,
\end{align*}
where $k_3 = \frac{\Omega^3}{\left( 1 + \frac{\Omega}{1 - \Omega} \right)^2 (1 - \Omega)} = \Theta(1)$. From Case 2 in Proposition \ref{scaling-law-population-surrogate}, we already know that the same summation -- with upper bound $\omgn$ rather than $\phin$ -- scales as $\Theta(n^{-2\alpha})$. Yet, since $\phin$ and $\omgn$ have the same order $\Theta(n)$, the result remains. This gives $\Rcomst (\btws) = \Omega(n^{-2\alpha})$, which eventually yields
\begin{align*}
\Rcomst (\btws) \leq \Rcom (\bts) = O(n^{-2\alpha}) \implies \Rcomst (\btws) = \Theta ( n^{-2\alpha} ).
\end{align*}
Hence, this allows us to say that the scaling law doesn't improve even with the freedom to choose any $\btw$.
\end{proof}

\begin{figure*}[t!]
     \centering
     \includegraphics[width=0.5\textwidth]{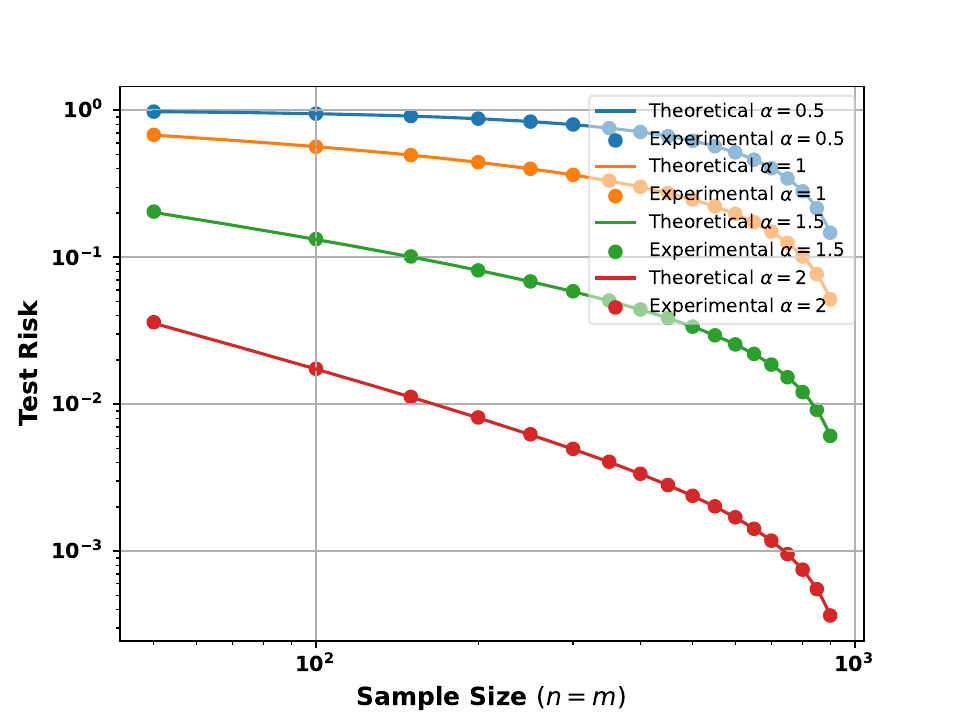}
     \caption{\red{We compare the experimental two-stage risk with our estimated theoretical risk in log-log scale to demonstrate the scaling law. \textbf{Setting:} The feature size is $p=1000$; the sample sizes $m=n$ change from $50$ to $900$ with increments of $50$; both feature covariances follow the power-law structure $\lambda_i = i^{-\alpha}$ for $\alpha = 0.5, 1, 1.5$ and $2$; the ground truth parameter $\btst$ is specified as $\beta_i = 1$.}}
     \label{fig:two-stage-log-log-risk}
\end{figure*}

\red{
In Figures \ref{fig:log-log-risk-model-comparison} and \ref{fig:two-stage-log-log-risk}, we illustrate the scaling-law behavior of the test risk. In %our current 
Figure \ref{fig:scaling-law-p-500}, the sample size $n$ varies from small values up to values close to $p$. Even though a linear trend is observable for smaller values of $n$ when $p=500$ in the log-log scale, as $n$ approaches $p$, it is less apparent due to finite-sample effects of $p$. We note that the asymptotic approximations in Propositions \ref{asymptotic-tau-bound} and \ref{closed-form pruning condition} rely on $p$ being significantly large compared to $n$.
Thus, Figure \ref{fig:scaling-law-p-5000} considers the same experiment with a larger dimension ($p=5000$) and the same range of $n$ values to satisfy $n \ll p$. In this scenario, we observe a clear linear behavior in the log-log plot, which is consistent with the scaling-law results presented in Proposition \ref{scaling-law-arbitrary-surrogate}.
}

\red{
In Figure \ref{fig:two-stage-log-log-risk}, we analyze the scaling-law behavior of the test risk under a two-stage framework where the sample sizes $m=n$ vary relative to a fixed dimension $p=1000$. In this scenario, we observe a similar linear behavior in the log-log scale when $m=n$ is sufficiently small compared to $p=1000$. Whereas, as the gap between $m=n$ and $p$ diminishes, the linearity in the figure disappears as the asymptotic approximations in Propositions \ref{asymptotic-tau-bound} and \ref{closed-form pruning condition} require $p$ to be significantly larger compared to $n$. These observations align well with our expectations and results in Section \ref{sect scaling laws}.
}

\begin{proposition}[Non-asymptotic analysis of $\tau$]\label{prop:boundxi}
Suppose that $\bSi \in \R^{p \times p}$ is diagonal and $\bSi_{i,i} = \lambda_i =  i^{-\alpha}$ for $1 < \alpha$. Assume that $n < p k$ for $k = \dfrac{3 + 2^{-\alpha}}{4 + 2^{-(\alpha - 2)}} $. If $\taus$ satisfies
\begin{align*}
    \sum_{i=1}^{p} \frac{\lambda_i}{\lambda_i + \taus} = n,
\end{align*}
%\sum_{i=1}^{p} \frac{\xi \lambda_i}{1 + \xi \lambda_i} = 
then $ c n^{\alpha} \leq \taus^{-1} \leq c \left( n + 1 + \frac{p + 1}{\alpha -1} \right)^{\alpha}$ for $c = \left(\dfrac{\alpha \sin{(\pi / \alpha)}}{\pi}\right)^{\alpha}$.
\end{proposition}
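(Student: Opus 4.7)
The plan is to exploit the closed-form evaluation
\[
\int_{0}^{\infty}\frac{dx}{1+\taus\, x^{\alpha}} \;=\; \frac{\pi}{\alpha\sin(\pi/\alpha)}\,\taus^{-1/\alpha} \;=\; c^{-1/\alpha}\,\taus^{-1/\alpha},
\]
obtained via the substitution $u=\taus^{1/\alpha}x$, and then to sandwich the defining relation $n=\sum_{i=1}^{p} f(i)$, where $f(x):=1/(1+\taus\, x^{\alpha}) = x^{-\alpha}/(x^{-\alpha}+\taus)$, between suitable integrals of $f$. Since $f$ is positive, continuous and strictly decreasing on $[0,\infty)$ with $f(0)=1$, the only tools I need are the standard comparisons $f(i)\leq \int_{i-1}^{i}f(x)\,dx$ and $\int_{j}^{j+1}f(x)\,dx\leq f(j)$.

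The lower bound $\taus^{-1}\geq c n^{\alpha}$ is the easy half: summing $f(i)\leq \int_{i-1}^{i}f(x)\,dx$ over $i=1,\ldots,p$ yields $n\leq \int_{0}^{p}f(x)\,dx\leq \int_{0}^{\infty}f(x)\,dx = c^{-1/\alpha}\taus^{-1/\alpha}$, and rearranging gives the claim.

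For the upper bound I aim to prove $\int_{0}^{\infty} f(x)\,dx\leq n + 1 + (p+1)/(\alpha-1)$, which combined with the integral identity above delivers $\taus^{-1}\leq c\bigl(n+1+(p+1)/(\alpha-1)\bigr)^{\alpha}$. Split the integral at $x=p+1$. On $[0,p+1]$, summing $\int_{j}^{j+1}f(x)\,dx\leq f(j)$ for $j=0,\ldots,p$ and using $f(0)=1$ gives $\int_{0}^{p+1}f(x)\,dx\leq 1+n$. On $[p+1,\infty)$, the elementary bound $f(x)\leq 1/(\taus\, x^{\alpha})$ yields $\int_{p+1}^{\infty}f(x)\,dx\leq (p+1)^{1-\alpha}/\bigl(\taus(\alpha-1)\bigr)$, and this last quantity is at most $(p+1)/(\alpha-1)$ precisely when $\taus\geq (p+1)^{-\alpha}$.

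The main obstacle is establishing $\taus\geq (p+1)^{-\alpha}$, which is what the hypothesis $n<pk$ is designed to buy us. Since the map $\tau\mapsto \sum_{i=1}^{p}1/(1+\tau i^{\alpha})$ is strictly decreasing, it suffices to show that its value at $\tau=(p+1)^{-\alpha}$, namely $n^{\star}:=\sum_{i=1}^{p}1/\bigl(1+(i/(p+1))^{\alpha}\bigr)$, satisfies $n^{\star}\geq pk$; then $n<pk\leq n^{\star}$ forces $\taus> (p+1)^{-\alpha}$. To bound $n^{\star}$ from below, partition the index range at $i=\lfloor (p+1)/2\rfloor$: for $i\leq (p+1)/2$ we have $(i/(p+1))^{\alpha}\leq 2^{-\alpha}$, so each summand is at least $1/(1+2^{-\alpha})$; for $(p+1)/2 < i\leq p$ we have $(i/(p+1))^{\alpha}\leq 1$, so each summand is at least $1/2$. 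Counting roughly $p/2$ terms in each range and simplifying gives
\[
n^{\star} \;\geq\; p\Bigl[\frac{1}{2(1+2^{-\alpha})}+\frac{1}{4}\Bigr] \;=\; p\cdot\frac{3+2^{-\alpha}}{4(1+2^{-\alpha})} \;=\; pk
\]
for even $p$, with a strictly positive correction when $p$ is odd. This reverse-engineers the otherwise cryptic definition of $k$, and completes the chain of estimates.
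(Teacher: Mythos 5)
Your proof is correct and follows essentially the same strategy as the paper's: sandwich the defining sum between integrals using the explicit value $\int_0^\infty \frac{dx}{1+\taus x^\alpha}=\frac{\pi}{\alpha\sin(\pi/\alpha)}\taus^{-1/\alpha}$, with the hypothesis $n<pk$ used to control $\taus$ from below so that the tail integral $\int_{p+1}^\infty$ can be bounded by $(p+1)/(\alpha-1)$. The only cosmetic difference is that the paper establishes $\taus^{-1}<p^{\alpha}$ by contradiction while you establish the (slightly weaker, and sufficient) $\taus\geq (p+1)^{-\alpha}$ by comparing the monotone map $\tau\mapsto\sum_i 1/(1+\tau i^{\alpha})$ at the threshold $\tau=(p+1)^{-\alpha}$; your version also makes the parity of $p$ explicit, which the paper glosses over.
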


\begin{proof}
In a similar vein to ~\cite{simon2024bettermodernmachinelearning}, we have:
\begin{align*}
    n = \sum_{i=1}^{p} \frac{i^{-\alpha}}{i^{-\alpha} + \taus} & \leq \int_0^{p} \frac{x^{-\alpha}}{x^{-\alpha}+ \taus} \, dx \leq \int_0^{\infty} \frac{x^{-\alpha}}{x^{-\alpha}+ \taus} \, dx = \dfrac{\pi}{\alpha \sin{(\pi / \alpha)}} \tau^{-1 / \alpha} . %\leq \dfrac{\pi}{\alpha \sin{(\pi / \alpha)}} \xi^{1 / \alpha}.
\end{align*}
Since the summand is decreasing, we can bound the Riemann sum by an integral, thus:
\begin{align*}
    n = \sum_{i=1}^{p} \frac{i^{-\alpha}}{i^{-\alpha} + \taus} &\geq \int_1^{p + 1} \frac{x^{-\alpha}}{x^{-\alpha} + \taus} \, dx \\
    &= \int_0^{\infty} \frac{x^{-\alpha}}{x^{-\alpha}+ \taus} \, dx - \int_0^{1} \frac{x^{-\alpha}}{x^{-\alpha} + \taus} \, dx - \int_{p+1}^{\infty} \frac{x^{-\alpha}}{x^{-\alpha} + \taus} \, dx \\
    &= \dfrac{\pi}{\alpha \sin{(\pi / \alpha)}} \taus^{-1 / \alpha} - \int_0^{1} \frac{x^{-\alpha}}{x^{-\alpha} + \taus} \, dx - \int_{p+1}^{\infty} \frac{x^{-\alpha}}{x^{-\alpha}+ \taus} \, dx \\
    &\geq \dfrac{\pi}{\alpha \sin{(\pi / \alpha)}} \taus^{-1 / \alpha} - 1 - \int_{p+1}^{\infty} \frac{1}{1 + \taus x^{\alpha}} \, dx \\
    & \geq \dfrac{\pi}{\alpha \sin{(\pi / \alpha)}} \taus^{-1 / \alpha} - 1 - \int_{p+1}^{\infty} \frac{1}{\taus x^{\alpha}} \, dx \\
    & = \dfrac{\pi}{\alpha \sin{(\pi / \alpha)}} \taus^{-1 / \alpha} - 1 - \left[ \frac{\taus^{-1} x^{-\alpha + 1}}{-\alpha + 1} \right]_{p+1}^{\infty} \\
    & = \dfrac{\pi}{\alpha \sin{(\pi / \alpha)}} \taus^{-1 / \alpha} - \frac{\taus^{-1} (p + 1)^{-\alpha + 1}}{\alpha - 1} - 1.
\end{align*}
Recalling that $\alpha > 1$ and assuming $\taus^{-1} < p^{\alpha}$, we derive:
\begin{align*}
    & \dfrac{\pi}{\alpha \sin{(\pi / \alpha)}} \taus^{-1 / \alpha} - 1 - \frac{p + 1}{\alpha - 1}\leq n \leq \dfrac{\pi}{\alpha \sin{(\pi / \alpha)}} \taus^{-1 / \alpha} \\
    \iff & \left( \frac{n \alpha \sin{(\pi / \alpha)}}{\pi} \right)^{\alpha} \leq \taus^{-1} \leq \left( \frac{\left( n + 1 + \frac{p + 1}{\alpha - 1} \right) \alpha \sin{(\pi / \alpha)}}{\pi} \right)^{\alpha}.
\end{align*}
We conclude by proving that $\taus^{-1} < p^{\alpha}$. For the sake of contradiction, assume that $\taus^{-1} \geq p^{\alpha}$. Then,
\begin{align*}
    n = \sum_{i=1}^{p} \frac{1}{1 + i^{\alpha} \taus} &= \sum_{i=1}^{p/2} \frac{1}{1 + i^{\alpha} \taus} + \sum_{i=p/2+1}^{p} \frac{1}{1 + i^{\alpha} \taus} \\
    &\geq \sum_{i=1}^{p/2} \frac{1}{1 + \dfrac{1}{2^{\alpha}}} + \sum_{i=p/2+1}^{p} \frac{1}{1 + 1} \\
    &= p \left( \frac{3 + \dfrac{1}{2^{\alpha}}}{4 + \dfrac{1}{2^{\alpha - 2}}} \right),
\end{align*}
which contradicts our assumption that $n < pk$.
\end{proof}

\begin{proposition}[Non-asymptotic analysis of $\Omega$]\label{prop:boundomega} Suppose that $\bSi \in \R^{p \times p}$ is diagonal and $\bSi_{i,i} = \lambda_i =  i^{-\alpha}$ for $1 < \alpha$. Let $\taus$ be defined as in Proposition \ref{prop:boundxi} and assume that $p k_1 + \dfrac{\alpha^2}{(\alpha - 1)^2} < n < p k_2$ for $k_1 = \dfrac{\alpha}{(\alpha - 1)^2}$ and $k_2 = \dfrac{3 + 2^{-\alpha}}{4 + 2^{-(\alpha - 2)}} $. Let $\Omega$ be the solution to
\begin{align*}
    n\Omega = \sum_{i=1}^{p} \left( \frac{\lambda_i}{\lambda_i + \taus} \right)^2.
\end{align*}
Then, 
\begin{align*}
\Omega > \dfrac{\alpha - 1}{\alpha} - \frac{1}{\alpha} \left( \dfrac{n+1 + \frac{p + 1}{\alpha - 1}}{p + 1} \right)^{2\alpha - 1} - \frac{1}{n}.
\end{align*}
\end{proposition}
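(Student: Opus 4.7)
My plan is to mirror the proof of Proposition \ref{prop:boundxi}: write $n\Omega = \sum_{i=1}^p f(i)$ with $f(x) = (1+\taus x^\alpha)^{-2}$, sandwich this decreasing Riemann sum by integrals, and then invoke the non-asymptotic bounds on $\taus$ already established in Proposition \ref{prop:boundxi}. Since $f$ is positive and strictly decreasing, the lower Riemann bound gives
\[
  n\Omega \;\geq\; \int_1^{p+1} f(x)\,dx \;=\; \int_0^\infty f(x)\,dx \;-\; \int_0^1 f(x)\,dx \;-\; \int_{p+1}^\infty f(x)\,dx.
\]

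The three pieces are evaluated exactly as in Propositions \ref{asymptotic-tau-bound}--\ref{prop:boundxi}. The main integral $\int_0^\infty (1+\taus x^\alpha)^{-2}\,dx = \frac{\pi(\alpha-1)}{\alpha^2 \sin(\pi/\alpha)}\taus^{-1/\alpha}$ follows from the substitution $u = \taus x^\alpha$ together with the Beta integral $\int_0^\infty u^{s-1}/(1+u)^2\,du = \Gamma(s)\Gamma(2-s)$ at $s = 1/\alpha$ and Euler's reflection formula; this is precisely the computation already done in Proposition \ref{asymptotic-tau-bound}. The left-edge piece is trivial, $\int_0^1 f(x)\,dx \leq 1$, and the tail is handled by dropping the $1$ in the denominator: $\int_{p+1}^\infty f(x)\,dx \leq \taus^{-2} \int_{p+1}^\infty x^{-2\alpha}\,dx = \frac{\taus^{-2}(p+1)^{1-2\alpha}}{2\alpha - 1}$. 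Combining these with the upper Riemann bound $\frac{\pi}{\alpha\sin(\pi/\alpha)}\taus^{-1/\alpha} \geq n$ from Proposition \ref{prop:boundxi} produces
\[
  n\Omega \;\geq\; \frac{\alpha-1}{\alpha}\,n \;-\; 1 \;-\; \frac{\taus^{-2}(p+1)^{1-2\alpha}}{2\alpha-1}.
\]

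The final, and main, step is to convert the tail term into the form $\frac{n}{\alpha}\bigl(\frac{n+1+(p+1)/(\alpha-1)}{p+1}\bigr)^{2\alpha-1}$. Plugging in the upper bound $\taus^{-1} \leq c\bigl(n+1+(p+1)/(\alpha-1)\bigr)^\alpha$ from Proposition \ref{prop:boundxi}, with $c = (\alpha\sin(\pi/\alpha)/\pi)^\alpha \leq 1$, the tail is bounded by
\[
  \frac{\taus^{-2}(p+1)^{1-2\alpha}}{2\alpha-1} \;\leq\; \frac{n+1+\frac{p+1}{\alpha-1}}{2\alpha-1}\,\biggl(\frac{n+1+\frac{p+1}{\alpha-1}}{p+1}\biggr)^{2\alpha-1}.
\]
The scalar prefactor is at most $n/\alpha$ precisely when $\alpha\bigl(n+1+\tfrac{p+1}{\alpha-1}\bigr) \leq (2\alpha-1)n$, and a short rearrangement rewrites this as $n \geq \frac{\alpha(p+\alpha)}{(\alpha-1)^2} = p k_1 + \frac{\alpha^2}{(\alpha-1)^2}$, which is exactly the hypothesis. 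Dividing the resulting inequality through by $n$ yields the claimed estimate. The main obstacle is not computational but rather recognizing this algebraic match: the sample-size assumption is calibrated so that the crude tail bound absorbs cleanly into the $\frac{1}{\alpha}(\cdots)^{2\alpha-1}$ piece, with the $-1/n$ contribution coming solely from the left-edge correction. Everything else reuses integrals and bounds already proved earlier in the appendix.
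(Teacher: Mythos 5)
Your proof mirrors the paper's argument step for step: lower-bounding $n\Omega$ by $\int_1^{p+1}\bigl(\tfrac{x^{-\alpha}}{x^{-\alpha}+\taus}\bigr)^2\,dx$, splitting into the full-line Beta integral minus the $[0,1]$ and $[p+1,\infty)$ corrections, bounding the left edge by $1$ and the tail by $\taus^{-2}(p+1)^{1-2\alpha}/(2\alpha-1)$, and finishing via the two-sided $\taus$ estimates from Proposition \ref{prop:boundxi} together with $(\alpha\sin(\pi/\alpha)/\pi)^{2\alpha}<1$ and the algebraic identification that the hypothesis $n > pk_1 + \alpha^2/(\alpha-1)^2$ is exactly the condition making the prefactor $\leq n/\alpha$. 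This is the same decomposition and the same calibration as the paper; the only differences are trivial reorderings of when you divide by $n$.
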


\begin{proof}
Since the summand is monotonically decreasing, we can lower bound the sum by the following integral and manipulate:
\begin{align*}
    n\Omega = \sum_{i=1}^{p} \left( \frac{i^{-\alpha}}{i^{-\alpha} + \taus} \right)^2 &\geq \int_1^{p + 1} \left( \frac{x^{-\alpha}}{x^{-\alpha} + \taus} \right)^2 \, dx \\
    &= \int_0^{\infty} \left( \frac{x^{-\alpha}}{x^{-\alpha}+ \taus} \right)^2 \, dx - \int_0^{1} \left( \frac{x^{-\alpha}}{x^{-\alpha}+ \taus} \right)^2 \, dx - \int_{p + 1}^{\infty} \left( \frac{x^{-\alpha}}{x^{-\alpha} + \taus} \right)^2 \, dx \\
    &= \dfrac{\pi (\alpha - 1)}{\alpha^2 \sin{(\pi / \alpha})} \taus^{-1 / \alpha} - \int_0^{1} \left( \frac{x^{-\alpha}}{x^{-\alpha}+ \taus} \right)^2 \, dx - \int_{p + 1}^{\infty} \left( \frac{x^{-\alpha}}{x^{-\alpha}+ \taus} \right)^2 \, dx \\
    &\geq \dfrac{\pi (\alpha - 1)}{\alpha^2 \sin{(\pi / \alpha})} \taus^{-1 / \alpha} - 1 - \int_{p + 1}^{\infty} \left( \frac{1}{1 + \taus x^{\alpha}} \right)^2 \, dx \\
    & \geq \dfrac{\pi (\alpha - 1)}{\alpha^2 \sin{(\pi / \alpha})}\taus^{-1 / \alpha} - 1 - \int_{p + 1}^{\infty} \frac{1}{(\taus x^{\alpha})^2} \, dx \\
    & = \dfrac{\pi (\alpha - 1)}{\alpha^2 \sin{(\pi / \alpha})}\taus^{-1 / \alpha} - 1 - \left[ \frac{\taus^{-2} x^{-2\alpha + 1}}{-2\alpha + 1} \right]_{p + 1}^{\infty} \\
    & = \dfrac{\pi (\alpha - 1)}{\alpha^2 \sin{(\pi / \alpha})}\taus^{-1 / \alpha} - \frac{\taus^{-2} (p + 1)^{-2\alpha + 1}}{2\alpha - 1} - 1.
\end{align*}

Let's now utilize the upper and lower bounds for $\taus^{-1}$ from Proposition \ref{prop:boundxi}. Substituting, we have
\begin{align*}
    n\Omega &\geq \dfrac{\pi (\alpha - 1)}{\alpha^2 \sin{(\pi / \alpha})} \dfrac{n \alpha \sin{(\pi / \alpha)}}{\pi} - \frac{\taus^{-2} (p + 1)^{-2\alpha + 1}}{2\alpha - 1} - 1\\
    & = \dfrac{n(\alpha - 1)}{\alpha} - \frac{\taus^{-2} (p + 1)^{-2\alpha + 1}}{2\alpha - 1} - 1 \\
    & \geq \dfrac{n(\alpha - 1)}{\alpha} - \left( \dfrac{\left( n + 1 + \frac{p + 1}{\alpha - 1} \right) \alpha \sin{(\pi / \alpha})}{\pi (p + 1)} \right)^{2\alpha} \frac{p + 1}{2\alpha - 1} - 1
\end{align*}
Dividing both left and right-hand side by $n$ gives:
\begin{align*}
\implies \Omega & > \dfrac{\alpha - 1}{\alpha} - \left( \dfrac{\left( n + 1 + \frac{p + 1}{\alpha - 1} \right) \alpha \sin{(\pi / \alpha})}{\pi (p + 1)} \right)^{2\alpha} \frac{p + 1}{n (2\alpha - 1)} - \frac{1}{n} \\
& > \dfrac{\alpha - 1}{\alpha} - \frac{1}{2\alpha - 1} \frac{n + 1 + \frac{p + 1}{\alpha - 1}}{n} \left( \dfrac{n+1 + \frac{p + 1}{\alpha - 1}}{p + 1} \right)^{2\alpha - 1} - \frac{1}{n}, \\
& > \dfrac{\alpha - 1}{\alpha} - \frac{1}{\alpha} \left( \dfrac{n+1 + \frac{p + 1}{\alpha - 1}}{p + 1} \right)^{2\alpha - 1} - \frac{1}{n},
\end{align*}
since $\dfrac{\alpha \sin{(\pi / \alpha)}}{\pi} < 1$ for $\alpha > 1$ and $n + 1 + \frac{p + 1}{\alpha - 1} < n \frac{2\alpha - 1}{\alpha}$ by our assumption on $n$.
\end{proof}

\begin{proposition}\label{prop:finite-n-p-benign}
Under the assumption that
\begin{align*}
\max \left( 2\alpha, p\frac{\alpha}{(\alpha - 1)^2} + \dfrac{\alpha^2}{(\alpha - 1)^2} \right) < n < \min\left((p+1)\dfrac{\alpha - 2}{\alpha}, p\left( \dfrac{3 + 2^{-\alpha}}{4 + 2^{-(\alpha - 2)}} \right),
p \dfrac{\pi \left(\sqrt{\frac{2\alpha}{5}} - 1\right)^{1 / \alpha}}{\alpha \sin{(\pi / \alpha)}} - \dfrac{p + 1}{\alpha - 1} \right) - 1
\end{align*}
and $4 < \alpha$, we can find a masked surrogate-to-target setting that improves over the risk of the standard target model by selecting all features $i$ such that $\zeta_i^2 > 1 - \Omega$.
\end{proposition}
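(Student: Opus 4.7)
The plan is to combine the per-coordinate risk decomposition in the proof of Proposition~\ref{proposition population surrogate} with the non-asymptotic bounds on $\taus$ and $\Omega$ from Propositions~\ref{prop:boundxi} and~\ref{prop:boundomega}. Recall from~\eqref{risk-decomposition-ps} that the per-index contribution to $\Rc(\bts)-\Rc(\btws)$ from masking feature $i$ out of the surrogate is $\lambda_i\beta_i^2(\zeta_i^2/(1-\Omega)-1)$. Hence it suffices to exhibit a single index $i\in[p]$ with $\zeta_i^2>1-\Omega$ to guarantee that the masked surrogate-to-target model (excluding every index with this property) strictly outperforms the standard target model. Since $\lambda_i=i^{-\alpha}$ is decreasing, $\zeta_i=\taus/(\lambda_i+\taus)$ is increasing in $i$, so it is enough to verify the inequality at the tail index $i=p$.

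First I would check that the hypotheses of the auxiliary propositions are satisfied: the condition $n<p(3+2^{-\alpha})/(4+2^{-(\alpha-2)})$ is exactly the $k_2$ hypothesis of Proposition~\ref{prop:boundxi}, and $n>p\alpha/(\alpha-1)^2+\alpha^2/(\alpha-1)^2$ matches the lower bound required by Proposition~\ref{prop:boundomega}. This yields both $\taus^{-1/\alpha}\leq (\alpha\sin(\pi/\alpha)/\pi)(n+1+(p+1)/(\alpha-1))$ and the corresponding lower bound on $\Omega$.

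The core of the argument is the clean intermediate bound $1-\Omega\leq 5/(2\alpha)$, which I would split as $1/\alpha+1/\alpha+1/(2\alpha)$. The first $1/\alpha$ absorbs the leading asymptotic gap $1-(\alpha-1)/\alpha$; using $n<(p+1)(\alpha-2)/\alpha-1$ together with $\alpha>4$ forces the base ratio $(n+1+(p+1)/(\alpha-1))/(p+1)$ to be at most $(\alpha-2)/\alpha+1/(\alpha-1)<1$, so its $(2\alpha-1)$-th power is bounded by $1$ and the middle error term in Proposition~\ref{prop:boundomega} is at most the second $1/\alpha$; finally $n>2\alpha$ gives $1/n\leq 1/(2\alpha)$. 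An elementary manipulation of $s\mapsto s/(1-s)$ then yields $\sqrt{1-\Omega}/(1-\sqrt{1-\Omega})\leq 1/(\sqrt{2\alpha/5}-1)$. Replaying the algebra in the proof of Proposition~\ref{closed-form pruning condition}, the inequality $\zeta_p^2>1-\Omega$ is equivalent to $p>\taus^{-1/\alpha}(\sqrt{1-\Omega}/(1-\sqrt{1-\Omega}))^{1/\alpha}$, and substituting the upper bound on $\taus^{-1/\alpha}$ and the bound on $\sqrt{1-\Omega}/(1-\sqrt{1-\Omega})$ reduces this to $n+1+(p+1)/(\alpha-1)<p\,\pi(\sqrt{2\alpha/5}-1)^{1/\alpha}/(\alpha\sin(\pi/\alpha))$, which is precisely the remaining upper bound on $n$ in the hypothesis.

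The main obstacle is the intermediate bound $1-\Omega\leq 5/(2\alpha)$: the middle term from Proposition~\ref{prop:boundomega} is raised to the power $2\alpha-1$, so any slack in keeping the base ratio strictly below $1$ is amplified exponentially, and it is precisely this amplification that dictates the condition $\alpha>4$ and the specific upper bound $n<(p+1)(\alpha-2)/\alpha-1$. Once this bound is established, the remaining chain of inequalities is direct bookkeeping, and strict improvement of the masked surrogate-to-target model over $\Rc(\bts)$ follows from the per-feature argument of Proposition~\ref{proposition population surrogate}.
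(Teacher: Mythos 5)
Your proof is correct and follows essentially the same route as the paper: bound $1-\Omega$ by $5/(2\alpha)$ via the decomposition $1/\alpha + 1/\alpha + 1/(2\alpha)$ using Propositions~\ref{prop:boundxi} and~\ref{prop:boundomega}, replay the algebra of Proposition~\ref{closed-form pruning condition} to translate $\zeta_i^2>1-\Omega$ into an index threshold, and check that $i=p$ clears it under the stated hypothesis on $n$. One small quibble on your closing remark: the step bounding the middle error term by $1/\alpha$ only needs the base ratio to be strictly less than $1$ (which follows from $\alpha>2$ and the upper bound on $n$), not any exponential amplification; the requirement $\alpha>4$ is there to keep the interval of admissible $n$ non-empty, as noted after the paper's proof.
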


\begin{proof}
From Proposition \ref{prop:boundomega}, we have
\begin{align*}
\Omega > \dfrac{\alpha - 1}{\alpha} - \frac{1}{\alpha} \left( \dfrac{n+1 + \frac{p + 1}{\alpha - 1}}{p + 1} \right)^{2\alpha - 1} - \frac{1}{n}.
\end{align*}
It's then enough to show that we can find a set of $i$'s such that
\begin{align*}
\zeta_i^2 > \dfrac{1}{\alpha} + \dfrac{1}{\alpha} \left( \dfrac{n+1 + \frac{p + 1}{\alpha - 1}}{p + 1} \right)^{2\alpha - 1} + \dfrac{1}{n} .  
\end{align*}
From the proof of Proposition \ref{closed-form pruning condition}, we know that
\begin{align*}
\zeta_i^2 > c^{\prime} \iff & i > \taus^{-1 / \alpha} \left( \frac{\sqrt{c^{\prime}}}{1 - \sqrt{c^{\prime}}} \right)^{1 / \alpha} .
\end{align*}
Hence, using the bound on $\taus^{-1}$ from Proposition \ref{prop:boundxi}, it's enough to find indices $i$ such that
\begin{align}
i > \dfrac{\alpha \sin{(\pi / \alpha)}}{\pi} \left(n + 1 + \frac{p + 1}{\alpha - 1} \right) \left( \frac{\sqrt{c^{\prime}}}{1 - \sqrt{c^{\prime}}} \right)^{1 / \alpha} \quad \text{where } c^{\prime} = \dfrac{1}{\alpha} + \dfrac{1}{\alpha} \left( \dfrac{n+1 + \frac{p + 1}{\alpha - 1}}{p + 1} \right)^{2\alpha - 1} + \dfrac{1}{n} . \label{i-lower-bound-finite-p-n}
\end{align}
By our assumption $p + 1 > n + 1 + \frac{p + 1}{\alpha - 1}$ and $n > 2\alpha$, we obtain that $\frac{5}{2\alpha} > c^{\prime}$. Since $\left( \frac{\sqrt{x}}{1 - \sqrt{x}} \right)^{1 / \alpha}$ is increasing with $x$ when $0 \leq x \leq 1$, we have
\begin{align*}
\left( \frac{1}{\sqrt{\frac{2\alpha}{5}} - 1} \right)^{1 / \alpha} \geq \left( \frac{\sqrt{c^{\prime}}}{1 - \sqrt{c^{\prime}}} \right)^{1 / \alpha}.
\end{align*}
Then, to ensure the existence of an interval of $i$'s satisfying the above inequality, we choose
\begin{align*}
& p - (p+1) \dfrac{\alpha \sin{(\pi / \alpha)}}{\pi (\alpha - 1)} \left( \frac{1}{\sqrt{\frac{2\alpha}{5}} - 1} \right)^{1 / \alpha} \geq (n+1) \dfrac{\alpha \sin{(\pi / \alpha)}}{\pi} \left( \frac{1}{\sqrt{\frac{2\alpha}{5}} - 1} \right)^{1 / \alpha} \\
\iff & p \frac{\pi \left(\sqrt{\frac{2\alpha}{5}} - 1\right)^{1 / \alpha}}{\alpha \sin{(\pi / \alpha)}} - \frac{p + 1}{\alpha - 1} \geq n + 1
\end{align*}
which follows from our assumption on $n$. Thus, discarding the features $i$ provided in the interval \eqref{i-lower-bound-finite-p-n} will strictly improve the test risk of the masked surrogate-to-target model over the standard target model.
\end{proof}

One can verify that our assumptions are coherent because they ensure a non-empty interval for $n$ when $\alpha > 4$ as the coefficients of $p$ are positive and $\dfrac{\alpha}{(\alpha - 1)^2}$ is smaller compared to the other three coefficients of $p$ in the minimum function.
\section{Proofs for Section \ref{sect two step}}\label{app two step}
\begin{theorem}[Distributional characterization, \cite{han2023distributionridgelesssquaresinterpolators}] \label{theorem dist charac}
Let $\kappaw = p/m > 1$ and suppose that, for some $M>1$, $1/M \leq \kappaw, \sigmaw^2 \leq M$ and $\spec{\bSiw}, \spec{\bSiw^{-1}}  \leq M$. Let $\tauw \in \R$ be the unique solution of the following equation:
    \begin{align}
        \kappaw^{-1} \red{ - \frac{\lambdaw}{\tau}}= \frac{1}{p} \tr{(\bSiw +  \tauw \Iden)^{-1} \bSiw}.
    \end{align}
    We define the random variable $X_{\kappaw, \sigmaw^2}^s(\bSiw, \btst, \gw)$ based on $\gw \sim \Nn(0, \Iden)$ and the function $\gammaw : \R^{p} \xrightarrow[]{} \R$ as follows:
    \begin{equation}
\begin{split}   
        X_{\kappaw, \sigmaw^2}^s(\bSiw, \btst, \gw) &:= (\bSiw + \tauw \Iden)^{-1} \bSiw \left[ \btst + \frac{\bSiw^{-1/2} \gammaw(\btst) \gw}{\sqrt{p}} \right] \\
        \gammaw^2(\btst) &:= \kappaw \left( \sigmaw^2 + \E_{\gw} [ \tn{\bSiw^{1/2} ( X_{\kappaw, \sigmaw^2}^s(\bSiw, \btst, \gw) - \btst)}^2 ] \right).
 \end{split}        
    \end{equation}
   Then, for any $L$-Lipschitz function $f: \R^p \xrightarrow[]{} \R$ where $L < L(M)$, there exists a constant $C = C(M)$ such that for any $\eps \in (0, 1/2]$, we have the following:
    \begin{align}
         \red{\sup_{\btst \in \B(R)}}\P ( \red{\sup_{\lambdaw \in [0, M]}} \left|f(\btw) - \E_{\gw}[f(X_{\kappaw, \sigmaw^2}^w(\bSiw, \btst, \gw))] \right| \geq \eps  ) \leq C p \e^{-p \eps^4/C},
    \end{align}
    where $R < M$.
\end{theorem}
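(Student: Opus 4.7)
The plan is to prove this distributional characterization via the Convex Gaussian Min-Max Theorem (CGMT), following the scheme of \cite{thrampoulidis2015regularized} and its extension to ridgeless interpolators in \cite{han2023distributionridgelesssquaresinterpolators}. First, I would whiten the design: set $\xt_i = \bSiw^{1/2} \vct{z}_i$ with $\vct{z}_i \distas \Nn(\boldsymbol{0}, \Iden_p)$ and write $\Xt = \Z \bSiw^{1/2}$, where $\Z \in \R^{m\times p}$ has i.i.d.\ standard Gaussian entries. With the change of variables $\bdel = \bSiw^{1/2}(\bt - \btst)$, the minimum-norm problem $\min \tn{\bt}$ subject to $\Xt \bt = \ybt$ becomes
\begin{align*}
\min_{\bdel \in \R^p} \tn{\bSiw^{-1/2} \bdel + \btst} \quad \text{subject to} \quad \Z \bdel = \bxi,
\end{align*}
with $\bxi \sim \Nn(\boldsymbol{0}, \sigmaw^2 \Iden_m)$. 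Dualizing the constraint through a Lagrange multiplier $\ub/\sqrt{p}$ produces a convex-concave bilinear form in $(\bdel, \ub)$ driven by $\Z$, which is precisely the regime where CGMT yields a tractable auxiliary problem.

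Next, I would apply CGMT to replace the bilinear term $\ub^\top \Z \bdel/\sqrt{p}$ by the decoupled Gaussian surrogate $\tn{\bdel}\, \g^\top \ub / \sqrt{p} + \tn{\ub}\, \hh^\top \bdel / \sqrt{p}$ with independent standard Gaussians $\g \in \R^m$ and $\hh \in \R^p$. Scalarizing the resulting auxiliary optimization (AO) -- minimizing over the direction of $\bdel$ in closed form via the separable structure of $\tn{\bSiw^{-1/2} \bdel + \btst}$, and maximizing over the direction of $\ub$ -- reduces the problem to a deterministic two-dimensional saddle point in the scalars $\alpha = \tn{\bdel}$ and $\beta = \tn{\ub}/\sqrt{p}$. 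The first-order conditions of this scalar problem, combined with $p > m$, yield $\kappaw^{-1} = \tr{(\bSiw + \tauw \Iden)^{-1} \bSiw}/p$, with $\tauw > 0$ playing the role of an implicit ridge induced by overparameterization; uniqueness follows from monotonicity of the right-hand side under $\spec{\bSiw}, \spec{\bSiw^{-1}} \leq M$.

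Reading off the minimizer of the AO and undoing the change of variables recovers exactly $X_{\kappaw, \sigmaw^2}^s(\bSiw, \btst, \gw) = (\bSiw + \tauw \Iden)^{-1} \bSiw \left[ \btst + \bSiw^{-1/2} \gammaw(\btst) \gw / \sqrt{p} \right]$, where $\gammaw^2(\btst)$ is identified by matching the optimal $\tn{\ub}$ in the AO with $\sqrt{\kappaw}$ times the residual norm, producing the self-consistent definition in the statement. For a fixed $\btst$ and $L$-Lipschitz $f$, two ingredients then combine: Gaussian Lipschitz concentration on the pair $(\Z, \bxi)$ yields $\P(|f(\btw) - \E f(\btw)| \geq \eps/2) \leq 2 \exp(-c p \eps^2 /L^2)$, while the CGMT primal--auxiliary value comparison, upgraded to a distributional comparison via approximation of $f$ by convex Lipschitz functions on a compact set, gives $|\E f(\btw) - \E f(X_{\kappaw, \sigmaw^2}^s)| \leq \eps/2$ once $\eps \gtrsim p^{-1/4}$. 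Together these produce the pointwise bound $C p \exp(-p \eps^4 / C)$.

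Finally, to upgrade the pointwise bound to the supremum over $\btst \in \B(R)$, I would cover the ball by a $\delta$-net of cardinality at most $(3R/\delta)^p$, union-bound the pointwise estimate over the net, and fill in the gaps using Lipschitz continuity in $\btst$ of both $f(\btw)$ (since $\btw$ is affine in $\btst$, with operator norm bounded in terms of $\spec{\bSiw}, \spec{\bSiw^{-1}}$) and $\E f(X_{\kappaw, \sigmaw^2}^s)$ (by inspection of its closed form). Choosing $\delta = \eps/(C'L)$ absorbs the factor $(C''/\eps)^p$ into the exponent, because $p\eps^4/C$ dominates the net entropy $p \log(1/\eps)$ in the regime of interest. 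The hard part is the $\eps^4$ slack in the exponent rather than the $\eps^2$ one obtains naively: it requires a careful second-order sensitivity analysis of the scalar saddle-point with respect to perturbations of the empirical quantities, showing that an $O(p^{-1/4})$ perturbation in the data translates to only an $O(1)$ perturbation in the mean. This is the delicate step that distinguishes the Lipschitz distributional characterization from the simpler convex-value transfer at the core of standard CGMT, and is the nontrivial ingredient one inherits from \cite{han2023distributionridgelesssquaresinterpolators}.
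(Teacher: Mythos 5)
First, a point of context: the paper does not prove this theorem. It is imported verbatim from \cite{han2023distributionridgelesssquaresinterpolators} (their Theorem 2.3), so there is no internal proof to compare against; what follows assesses your sketch on its own terms. Your overall strategy --- whitening, CGMT, scalarization of the auxiliary problem to recover the fixed-point equation for $\tauw$, and a Miolane--Montanari-style upgrade from a value comparison to a distributional characterization of the minimizer --- is indeed the route taken in the cited work, and you correctly locate the origin of the $\eps^4$ exponent in the interplay between value concentration at scale $\eps^2$ and quadratic growth of the auxiliary objective around its minimizer.

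However, your final step --- obtaining $\sup_{\btst\in\B(R)}$ by union-bounding over a $\delta$-net of cardinality $(3R/\delta)^p$ --- is quantitatively broken. With $\delta\asymp\eps$ the union bound multiplies the pointwise tail $e^{-p\eps^4/C}$ by a factor $e^{p\log(C''/\eps)}$, and since $\log(1/\eps)\geq\log 2$ while $\eps^4/C\leq 1/(16C)$ for every $\eps\in(0,1/2]$, the entropy term dominates the exponent for \emph{all} admissible $\eps$, not merely small ones; the product never recovers a bound of the form $Cpe^{-p\eps^4/C}$. The uniformity in $\btst$ has to be built into the argument itself: the bad event is defined in terms of the Gaussian data alone (singular-value control and concentration of the auxiliary value), and on its complement the deterministic comparison and perturbation bounds hold simultaneously for all $\btst$ in the ball. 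Two smaller soft spots: \emph{(i)} $\btw=\Xt^\top(\Xt\Xt^\top)^{-1}\ybt$ is not globally Lipschitz in the Gaussian data, since the pseudo-inverse degenerates as $\sigma_{\min}(\Xt\Xt^\top/p)$ vanishes, so your concentration step requires first restricting to a high-probability event of the kind constructed in Proposition \ref{prop concentration btw}; and \emph{(ii)} the distributional transfer is not performed by ``approximating $f$ by convex Lipschitz functions'' but by applying the exclusion-set form of CGMT to $D_\eps=\{\bdel:|f(\bdel)-\E f(X)|\geq\eps\}$ and showing that the auxiliary optimum restricted to $D_\eps$ exceeds the unrestricted optimum by order $\eps^2$.
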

%\emr{An important note about Theorem \ref{theorem dist charac} is that it utilizes Gaussian assumption on the distribution of $\X$. As we discussed in the meetings, \cite{han2023distributionridgelesssquaresinterpolators} provides the non-asymptotic theorems under both Gaussian and sub-gaussian assumptions in Theorem 2.3 and 2.4, respectively. The difference between Theorems 2.3 and 2.4 is the set of ground truth parameter $\btst$. Specifically, in Theorem 2.3, the set of $\btst$ is $\B_n(1)$ whereas, in Theorem 2.4, the set of $\btst$ is $\Uc_{\vartheta} \subset \B_n(1)$, whose properties is further clarified in Proposition 10.3 in \cite{han2023distributionridgelesssquaresinterpolators}.     }
\definitiontwostepasymptotic*
\begin{comment}

\begin{definition}[Asymptotic Weak-to-Strong Risk]\label{defn asymp w2s}
    Let $\kappas = p/n > 1$ and $\taus \in \R$ be the unique solution of the following equation:
    \begin{align}
        \kappas^{-1} = \frac{1}{p} \tr{(\bSis +  \taus \Iden)^{-1} \bSis}.
    \end{align}
    We define the function $\gammas : \R^{p} \xrightarrow[]{} \R$ as follows:
    \begin{align}
        \gammas^2(\btw) = \kappas \left( \sigmas^2 + \E_{(\x, y^s) \sim \Dcs(\btw)} [ \tn{\bSis^{1/2} (\btws - \btw)}^2 ] \right).
    \end{align}
    Recall the definition of $\tauw$ and $\gammaw$ in Theorem \ref{theorem dist charac}. Let $\dot{\kappa} = (\kappaw, \kappas)$, $\dot{\bSi} = (\bSiw, \bSis)$, and $\dot{\sigma} = (\sigmaw^2, \sigmas^2)$.
    Then, we define the asymptotic risk estimate as follows:
    \begin{align}
        \bar{\Rc}_{\dot{\kappa}, \dot{\sigma}}(\dot{\bSi}, \btst) &=  \tn{\bSis^{1/2}\left(\Iden - (\bSis + \taus \Iden)^{-1} \bSis (\bSiw + \tauw \Iden)^{-1} \bSiw \right) \btst}^2 \nonumber \\ &+ \frac{\gammaw^2(\btst) }{p}\tr{\bSiw^{1/2} (\bSiw + \tauw \Iden)^{-1} \bSis (\bSis + \taus \Iden)^{-1} \bSis (\bSis + \taus \Iden)^{-1} \bSis  (\bSiw + \tauw \Iden)^{-1} \bSiw^{1/2}} \nonumber \\ 
        &+ \frac{\E_{(\xt, \yt) \sim \Dcw} [\gammas^2(\btw)]}{p} \tr{\bSis^2 (\bSis + \taus \Iden)^{-2}} \nonumber
    \end{align}
    where the closed-form of $\E_{(\x, y^s) \sim \Dcs(\btw)} [\gammas^2(\btw)]$ is provided in Lemma \ref{lemma expected gamma}.
\end{definition}
\end{comment}

\theoremtwostep*
\begin{comment}

\begin{theorem}[Non-asymptotic risk for two-step model]\label{theorem risk charac two step}
Suppose that, for some $\Ms>1$, $1/\Ms \leq \kappaw, \sigmaw^2, \kappas, \sigmas^2 \leq \Ms$ and $\spec{\bSiw}, \spec{\bSiw^{-1}}, \spec{\bSis},  \spec{\bSiw^{-1}}  \leq \Ms$. Consider the two-step model defined in Section \ref{sect setup}. Recall the definition of $\bar{\Rc}, \dot{\bSi}, \dot{\kappa}, \dot{\sigma}$ in Definition \ref{defn asymp w2s}. Then, there exists a constant $C = C(\Ms)$ such that for any $\eps \in (0, 1/2]$, we have the following:
\begin{align*}
   \sup_{\btst \in \B_p(R)} \P ( \left| \Rc(\btws) - \bar{\Rc}_{\dot{\kappa}, \dot{\sigma}}(\dot{\bSi}, \btst) \right| \geq \eps) \leq C p e^{-p \eps^4/C},
\end{align*}
where $R+1 < \Ms$.
\end{theorem}
\end{comment}
\begin{proof}\label{proof of theorem two step}
    Define a function $f_1 : \R^p \xrightarrow[]{} \R$ as $f_1(\x) = \tn{\bSis^{1/2} (\x - \btst)}^2$. The gradient of this function is 
    \begin{align*}
        \tn{\nabla f_1(\x)} = \tn{2 \bSis ( \x - \btst)} \leq 2 \spec{\bSis} \tn{\x - \btst}.     
    \end{align*}
    Using Proposition \ref{prop concentration btws}, there exists an event $E$ with $\P(E^c) \leq C_t e^{-p/C_t}$ where $C_t = C_t(\Ms, \frac{\Ms-R}{2})$ with the definition of $\Mw$ in Proposition \ref{prop concentration btws}, such that $f_1(\btws)$ is $2\Ms^2$-Lipschitz if $\btst \in \B_p(R)$. Applying Theorem \ref{theorem dist charac} on the target model, there exists a constant $\bar{C}_s = \bar{C}_s(\Ms)$  such that for any $\eps \in (0, 1/2]$, we obtain
    \begin{align}
         \sup_{\btw \in \B(\frac{\Ms + R}{2})} \P \left(  \left|f(\btws) - \E_{\gs}[f(X_{\kappas, \sigmas^2}^t(\bSis, \btw, \gs))] \right| \geq \eps  \right) \leq C p \e^{-p \eps^4/C}, \label{eq u1}
    \end{align}
    where $f(\btws) = \Rc(\btws)$ and
    \begin{align*}
        X_{\kappas, \sigmas^2}^t(\bSis, \btw, \gs) = (\bSis + \taus \Iden)^{-1} \bSis \left[\btw + \frac{\bSis^{-1/2} \gammas(\btw) \gs}{\sqrt{p}} \right].
    \end{align*}
    Furthermore,
    \begin{align}
        \E_{\gs}\left[f(X_{\kappas, \sigmas^2}^s(\bSis, \btw, \gs))\right] &= \E_{\gs}\left[ \tn{ \bSis^{1/2}\left(\thb_1 (\btw - \btst) - (\Iden - \thb_1) \btst + \thb_2 \gammas(\btw) \right)}^2 \right] \nonumber \\
        &=(\btw - \btst)^\top \thb_1^\top \bSis \thb_1 (\btw - \btst) + \gammas^2(\btw) \E_{\gs}[\thb_2^\top \bSis \thb_2] \nonumber \\
        &+ \btst^\top (\Iden -\thb_1)^\top \bSis (\Iden - \thb_1) \btst - 2 \btst^\top (\Iden- \thb_1)^\top \bSis \thb_1 (\btw-\btst), \label{asymptotic-total-risk-decomposition}
    \end{align}
    where $\thb_1 := (\bSis + \taus \Iden)^{-1} \bSis$ and $\thb_2 := (\bSis + \taus \Iden)^{-1} \bSis^{1/2}\frac{\gs}{\sqrt{p}}$. Let $E(\Mw, \frac{\Mw-R}{2})$ be the event defined in Proposition \ref{prop concentration btw}. Let $f_2: \R^p \xrightarrow[]{} \R$ be defined as $f_2(\x) := (\x - \btst)^\top \thb_1^\top \bSis \thb_1 (\x - \btst)$. By Proposition \ref{prop g1}, the function $f_2$ is $2\Mw^2-$Lipschitz if $\btst \in \B_p(R)$ on the event $E(\Mw, \frac{\Mw-R}{2})$. Applying Theorem \ref{theorem dist charac} on the surrogate model, there exists a constant $\bar{C}_{w,1} = \bar{C}_{w,1}(\Ms)$ such that for any $\eps \in (0, 1/2]$, we obtain
    \begin{align}
        \sup_{\btst \in \B_p(R)} \P \left( \left|f_2(\btw) - \btst^\top (\Iden - \bPhi_1)^\top \thb_1^\top \bSis \thb_1 (\Iden - \bPhi_1) \btst - \gammaw^2(\btst) \E_{\gw} [\bPhi_2^\top \thb_1^\top \bSis \thb_1 \bPhi_2] \right| > \eps \right) \leq \bar{C}_{w,1} p e^{-p \eps^4/\bar{C}_{w,1}},  \label{eq u2}
    \end{align}
    where $\bPhi_1 := (\bSiw + \tauw \Iden)^{-1} \bSiw$ and $\bPhi_2 := (\bSiw + \tauw \Iden)^{-1} \bSiw^{1/2}\frac{ \gw}{\sqrt{p}}$.

    Let $f_3: \R^p \xrightarrow[]{} \R$ be defined as $f_3(\x) :=  \gammas^2(\x)\thb_2^\top \bSis \thb_2$. By Proposition \ref{prop g2} and Proposition 2.1 in \cite{han2023distributionridgelesssquaresinterpolators}, the function $f_3$ is $4\Mw^2$-Lipschitz if $\btst \in \B_p(R)$ on the event $E(\Mw, \frac{\Mw-R}{2})$. Applying Theorem \ref{theorem dist charac} on the surrogate model, there exists a constant $\bar{C}_{w,2} = \bar{C}_{w,2}(\Ms)$ such that for any $\eps \in (0, 1/2]$, we obtain \begin{align}
        \sup_{\btst \in \B_p(R)} \P \left( \left|f_3(\btw) - \E_{\btw \sim X^s} [\gammas^2(\btw)]\E_{\gs} [\thb_2^\top \bSis \thb_2] \right| > \eps \right) \leq \bar{C}_{w,2} p e^{-p \eps^4/\bar{C}_{w,2}}.  \label{eq u3} 
    \end{align}
    Let $f_4: \R^p \xrightarrow[]{} \R$ as $f_4(\x) := -2\btst^\top (\Iden - \thb_1)^\top \bSis \thb_1 (\x - \btst)$. By Proposition \ref{prop g3} and Proposition 2.1 in \cite{han2023distributionridgelesssquaresinterpolators}, the function $f_4$ is $2\Mw^2-$Lipschitz if $\btst \in \B_p(R)$ on the event $E(\Mw, \frac{\Mw-R}{2})$. Applying Theorem \ref{theorem dist charac} on the surrogate model, there exists a constant $\bar{C}_{w,3} = \bar{C}_{w,3}(\Ms)$ such that for any $\eps \in (0, 1/2]$, we obtain
    \begin{align}
       \sup_{\btst \in \B_p(R)} \P \left( \left|f_4(\btw) -2\left[\btst^\top (\Iden- \thb_1)^\top \bSis \thb_1 (\bPhi_1 - \Iden) \btst \right] \right| > \eps \right) \leq \bar{C}_{w,3} p e^{-p \eps^4/\bar{C}_{w,3}}.  \label{eq u4} 
    \end{align}
    By the definition of these functions, we have
    \begin{align}
        \E_{\gs}\left[f(X_{\kappas, \sigmas^2}^s(\bSis, \btw, \gs))\right] - \btst^\top (\Iden -\thb_1)^\top \bSis (\Iden - \thb_1) \btst = f_2(\btw) + f_3(\btw) - f_4(\btw) \label{fact1}
    \end{align}
    By the definition of $\thb_1$, $\thb_2$, $\bPhi_1$, and $\bPhi_2$, we have
    \begin{equation}\label{fact2}
    \begin{split}
        \bar{\Rc}_{\dot{\kappa}, \dot{\sigma}}(\dot{\bSi}, \btst) -  \btst^\top (\Iden -\thb_1)^\top \bSis (\Iden - \thb_1) \btst &= \btst^\top (\Iden - \bPhi_1)^\top \thb_1^\top \bSis \thb_1 (\Iden - \bPhi_1) \btst + \gammaw^2(\btst)\E_{\gw}[\bPhi_2^\top \thb_1^\top \bSis \thb_1 \bPhi_2] \\
        &+  \E_{\btw \sim X^s} [\gammas^2(\btw)] \E_{\gs} [\thb_2^\top \bSis \thb_2] - 2\left[\btst^\top (\Iden- \thb_1)^\top \bSis \thb_1 (\bPhi_1 - \Iden) \btst \right] .
        \end{split}
    \end{equation}
    Using \eqref{fact1}-\eqref{fact2} and applying a union bound on \eqref{eq u1}, \eqref{eq u2}, \eqref{eq u3}, and \eqref{eq u4}, we obtain the advertised claim.
\end{proof}
\begin{proposition}\label{prop concentration btw}
    Suppose that, for some $\Kw>1$, $1/\Kw \leq \kappaw, \sigmaw^2 \leq \Kw$ and $\spec{\bSiw}, \spec{\bSiw^{-1}}  \leq \Kw$. For every $c_s>0$, there exists an event $E(\Kw, c_s)$ with $\P((E(\Kw, c_s))^c) \leq C_s \e^{-p/C_s}$ where $C_s = C_s(\Kw, c_s)$ such that %we have the following on this event $E(\Kw, c_s)$: 
    \begin{align*}
         \tn{\btw} \leq \tn{\btst} + c_s \quad \text{and} \quad \tn{\btw - \btst} \leq \tn{\btst} + c_s. 
    \end{align*}
\end{proposition}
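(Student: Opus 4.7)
The plan is to apply the distributional characterization of Theorem~\ref{theorem dist charac} to two carefully chosen $1$-Lipschitz test functions and then bound the resulting deterministic expectations by elementary trace calculations controlled by $\Kw$. Specifically, take $f_1(\x)=\tn{\x}$ and $f_2(\x)=\tn{\x-\btst}$, both of which are $1$-Lipschitz in $\x$ (so they satisfy the Lipschitz hypothesis of Theorem~\ref{theorem dist charac} as soon as $L(\Kw)>1$). Applying the theorem yields a constant $C'=C'(\Kw)$ such that, outside an event of probability at most $2C'p\e^{-p\delta^4/C'}$,
\begin{align*}
\bigl|\tn{\btw}-\mu_1\bigr|\leq \delta\quad\text{and}\quad \bigl|\tn{\btw-\btst}-\mu_2\bigr|\leq \delta,
\end{align*}
where $\mu_1:=\E_{\gw}[\tn{X^s_{\kappaw,\sigmaw^2}(\bSiw,\btst,\gw)}]$ and $\mu_2$ is defined analogously with $\x-\btst$ in place of $\x$.

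Next, the two expectations need to be bounded deterministically. Using the explicit form $X^s=(\bSiw+\tauw\Iden)^{-1}\bSiw\btst+\tfrac{\gammaw(\btst)}{\sqrt p}(\bSiw+\tauw\Iden)^{-1}\bSiw^{1/2}\gw$, the triangle inequality combined with $(\bSiw+\tauw\Iden)^{-1}\bSiw\preceq\Iden$ and Jensen's inequality give
\begin{align*}
\mu_1\leq \tn{\btst}+\gammaw(\btst)\sqrt{\tfrac{1}{p}\tr{(\bSiw+\tauw\Iden)^{-2}\bSiw}},
\end{align*}
and the analogous bound holds for $\mu_2$ once one observes that the non-noise part of $X^s-\btst$ equals $-\tauw(\bSiw+\tauw\Iden)^{-1}\btst$, whose norm is at most $\tn{\btst}$. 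The event $E(\Kw,c_s)$ is then the intersection of the two concentration events, with $\delta$ chosen so that $B+\delta\leq c_s$, where $B=B(\Kw,\tn{\btst})$ denotes the deterministic upper bound on the noise contribution; for $c_s$ below $B$ one simply inflates $C_s$ so that the probability bound is trivially satisfied for the relevant range of $p$.

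The main obstacle is establishing uniform control of $\tauw$ and $\gammaw(\btst)$ purely from the structural assumptions $1/\Kw\leq\kappaw,\sigmaw^2\leq\Kw$ and $\spec{\bSiw},\spec{\bSiw^{-1}}\leq\Kw$. For $\tauw$, two-sided bounds follow from sandwiching its defining equation $\kappaw^{-1}=\tfrac{1}{p}\tr{(\bSiw+\tauw\Iden)^{-1}\bSiw}$ using the spectral bounds on $\bSiw$. For $\gammaw$, one solves the fixed-point equation in closed form as
\begin{align*}
\gammaw^2(\btst)=\kappaw\cdot\frac{\sigmaw^2+\tauw^2\tn{(\bSiw+\tauw\Iden)^{-1}\bSiw^{1/2}\btst}^2}{1-\tfrac{1}{m}\tr{(\bSiw+\tauw\Iden)^{-2}\bSiw^2}},
\end{align*}
whose denominator is bounded away from zero by the lower bound on $\tauw$ and whose numerator is bounded in terms of $\tn{\btst}$ and $\Kw$. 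These bounds in turn control the trace factor multiplying $\gammaw(\btst)$ in the upper bounds on $\mu_1$ and $\mu_2$, closing the argument.
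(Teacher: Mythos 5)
Your proof takes a genuinely different route from the paper's. The paper argues directly from the closed form $\btw = \Xt^\top(\Xt\Xt^\top)^{-1}\Xt\btst + \Xt^\top(\Xt\Xt^\top)^{-1}\zbt$: since the first summand is a projection of $\btst$, the triangle inequality gives $\tn{\btw}\le\tn{\btst}+\tn{\Xt^\top(\Xt\Xt^\top)^{-1}\zbt}$ pathwise, and the residual norm is controlled via Bernstein's inequality for $\zbt^\top\zbt$ and a smallest-singular-value bound (Rudelson--Vershynin) for $\Xt\Xt^\top$. You instead route everything through the heavy machinery of Theorem~\ref{theorem dist charac}, which is not circular here (it is an external input from Han et al.) but is conceptually redundant: Proposition~\ref{prop concentration btw} exists precisely so that the \emph{quadratic} test function arising from the risk can be treated as Lipschitz on a controlled ball; invoking the same distributional characterization to prove this preparatory lemma is more overhead than the direct argument. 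Your derivation of the deterministic bounds on $\mu_1,\mu_2$ (operator inequality $(\bSiw+\tauw\Iden)^{-1}\bSiw\preceq\Iden$, writing $X^s-\btst$ via $-\tauw(\bSiw+\tauw\Iden)^{-1}\btst$, then Jensen on the Gaussian part) is correct.

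There are, however, two substantive gaps you should be aware of. First, uniformity in $\btst$: in the paper's proof the residual $\tn{\Xt^\top(\Xt\Xt^\top)^{-1}\zbt}$ does not involve $\btst$ at all, so the resulting probability bound and the threshold on $c_s$ are genuinely independent of $\btst$. In your proof the noise contribution is $B=\gammaw(\btst)\sqrt{\tfrac1p\tr{(\bSiw+\tauw\Iden)^{-2}\bSiw}}$, and by the closed form you write down, $\gammaw^2(\btst)$ grows with $\tn{\btst}^2$. Combined with the fact that Theorem~\ref{theorem dist charac} itself only applies on $\btst\in\B(R)$, your constant $C_s$ implicitly depends on an a~priori bound on $\tn{\btst}$, making your version strictly weaker than the paper's. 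Second, the closing step for small $c_s$ does not work: ``inflating $C_s$ so that the probability bound is trivially satisfied for the relevant range of $p$'' fails because $C_se^{-p/C_s}\to 0$ as $p\to\infty$ for any fixed $C_s$, yet the event $\{\tn{\btw}\le\tn{\btst}+c_s\}$ does not have vanishing complement probability when $c_s<B$ --- take $\btst=0$, where $\tn{\btw}=\tn{\Xt^\top(\Xt\Xt^\top)^{-1}\zbt}$ concentrates around a $\Kw$-dependent positive constant. This is a genuine defect of the proposition \emph{as stated}, and it affects the paper's own argument as well (their control on $\spec{(\Xt\Xt^\top/p)^{-1}}$ cannot in fact make the residual vanish), so you have not lost anything relative to the paper here, but the hand-wave should be replaced by an explicit restriction to $c_s\ge c_s^0(\Kw,R)$, which is all the downstream application (with $c_t=\tfrac{\Ms-R}{2}$) requires.
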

\begin{proof}
    By the definition of $\btw$, we have
    \begin{align}
        \btw &= \Xt^\top (\Xt \Xt^\top)^{-1} \ybt \nonumber \\
        &= \Xt^\top (\Xt \Xt^\top)^{-1}\Xt \btst + \Xt^\top (\Xt \Xt^\top)^{-1} \zbt, \label{btw defn}
    \end{align}
    where $\zbt \sim \Nn(\boldsymbol{0}, \sigmaw^2 \Iden)$. By triangle inequality, we obtain
    \begin{align}
        \tn{\btw} &\leq \tn{\Xt^\top (\Xt \Xt^\top)^{-1} \Xt \btst} + \tn{\Xt^\top (\Xt \Xt^\top)^{-1} \zbt} \nonumber \\
        & \stackrel{(a)}{\leq}\tn{\btst} + \tn{\Xt^\top (\Xt \Xt^\top)^{-1} \zbt}, \label{btw defn norm}
    \end{align}
    where $(a)$ in above follows from the fact that $\Xt^\top (\Xt \Xt^\top)^{-1} \Xt$ is a projection matrix, and so all of its eigenvalues are either $0$ or $1$. Focusing on the second term of the RHS, we derive 
    \begin{align}
        \tn{\Xt^\top (\Xt \Xt^\top)^{-1} \zbt}^2 = \zbt^\top (\Xt \Xt^\top)^{-1} \zbt &= \frac{\zbt^\top}{\sqrt{p}} \left(\frac{\Xt \Xt^\top}{p}\right)^{-1} \frac{\zbt}{\sqrt{p}} \nonumber \\
        &\stackrel{(a)}{\leq}  \frac{\zbt^\top \zbt}{p} \spec{\left(\frac{\Xt \Xt^\top}{p}\right)^{-1}}, \label{concen1}
    \end{align}
    where $(a)$ in the above inequality follows from Cauchy-Schwarz inequality. Using Bernstein's inequality, there exists an absolute constant $C_0 > 0$ that depends on $\sigmaw^2$ such that 
    \begin{align*}
        \P \left( \frac{\zbt^\top \zbt}{p} - \sigmaw^2 > t \right) \leq \expa{-c \min\left\{ \frac{p t^2}{4C_0^2}, \frac{p t}{2C_0} \right\}}.
    \end{align*}
    On the other hand, let $\Zt = \Xt \bSiw^{-1/2}$, which means that the entries of $\Zt$ are independent and normally distributed with zero mean and unit variance. Then,
    \begin{align}
        \spec{\left(\frac{\Xt \Xt^\top}{p}\right)^{-1}} = \spec{\left(\frac{\Zt \bSiw \Zt^\top}{p}\right)^{-1}} \leq \spec{\bSiw^{-1}} \spec{\left(\frac{\Zt \Zt^\top}{p}\right)^{-1}}. \label{concen2}
    \end{align}
    Using Theorem 1.1 in \cite{rudelson2009smallestsingularvaluerandom}, there exist absolute constants $C_1, C_2 > 0$ such that we have the following for every $\eps > 0$
    \begin{align}
        \P \left( \spec{\left(\frac{\Zt \Zt^\top}{p}\right)^{-1}} \leq \eps^2 \left( 1- \frac{1}{\kappaw}\right)^2\right) \leq (C_1 \eps)^{p-m+1} + e^{-p C_2}. \label{concen3}
    \end{align}
By combining \eqref{concen1}, \eqref{concen2}, and \eqref{concen3}, we obtain that 
    \begin{align*}
        \P \Big( \tn{\btw} \leq \tn{\btst} + \eps (1 - \frac{1}{\kappaw}) &\sqrt{(t+\sigmaw^2) \spec{\bSiw^{-1}}}  \Big) \\
        &\leq (C_1 \eps)^{p-m+1} + e^{-p C_2} +  e^{-c \min\left\{ \frac{p t^2}{4C_0^2}, \frac{p t}{2C_0} \right\}}
    \end{align*}
     The advertised claim for $\tn{\btw}$ follows when $\eps$ is selected as $\eps < \frac{1}{C_1 e}$. For $\tn{\btw - \btst}$, using the definition of $\btw$, we write as follows:
     \begin{align}
         \tn{\btw - \btst} &= \tn{\Xt^\top (\Xt \Xt^\top)^{-1}\Xt \btst + \Xt^\top (\Xt \Xt^\top)^{-1} \zbt - \btst} \nonumber \\
         &\leq \tn{\Xt^\top (\Xt \Xt^\top)^{-1} \Xt - \Iden} \tn{\btst} + \tn{\Xt^\top (\Xt \Xt^\top)^{-1} \zbt} \nonumber \\
         & \stackrel{(a)}{\leq} \tn{\btst} + \tn{\Xt^\top (\Xt \Xt^\top)^{-1} \zbt} ,\label{btw - btst}
     \end{align}
     where $(a)$ in the above inequalities follows from the fact that the eigenvalues of $\Xt^\top (\Xt \Xt^\top)^{-1} \Xt - \Iden$ are either $1$ or $0$ as the eigenvalues of $\Xt^\top (\Xt \Xt^\top)^{-1} \Xt$ are either $1$ or $0$. The remaining part of this proof is identical to the previous part.
\end{proof}
\begin{corollary}\label{corol concentration btw}
    Suppose that $\btw \in \R^p$ is given, and for some $\Ks>1$, we have $1/\Ks \leq \kappas, \sigmas^2 \leq \Ks$ and $\spec{\bSis}, \spec{\bSis^{-1}}  \leq \Ks$. For every $c_t>0$, there exists an event $E(\Ks, c_t)$ with $\P((E(\Ks, c_t))^c) \leq C_t \e^{-p/C_t}$ where $C_t = C_t(\Kw, c_t)$ such that %we have the following on this event $E(\Ks, c_t)$: 
\begin{align*}
     \tn{\btws} \leq \tn{\btw} + c_t \quad \text{and} \quad \tn{\btws - \btw} \leq \tn{\btw} + c_t. 
\end{align*}
\end{corollary}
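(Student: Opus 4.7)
\textbf{Proof plan for Corollary \ref{corol concentration btw}.} The plan is to derive the corollary as a direct re-application of Proposition \ref{prop concentration btw} to the second (target) stage of the two-stage procedure, exploiting the fact that the target estimator $\btws$ is obtained from fresh i.i.d.\ samples with $\btw$ playing the role that $\btst$ played in the surrogate stage. Since $\btw$ is fixed by hypothesis, the randomness in the event $E(\Ks,c_t)$ is solely over the draw $(\x_i,z_i)_{i=1}^n$, and the Gaussianity of $\x_i\sim\Nn(\boldsymbol{0},\bSis)$ and $z_i\sim\Nn(0,\sigmas^2)$ matches the setting of Proposition \ref{prop concentration btw} verbatim under the substitutions $\btst\mapsto\btw$, $\Xt\mapsto\X$, $\zbt\mapsto\z$, $\sigmaw^2\mapsto\sigmas^2$, $\bSiw\mapsto\bSis$, $\kappaw\mapsto\kappas$, $m\mapsto n$, and $\Kw\mapsto\Ks$.

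Concretely, I would first write $\btws=\X^\top(\X\X^\top)^{-1}\ybw = \X^\top(\X\X^\top)^{-1}\X\btw + \X^\top(\X\X^\top)^{-1}\z$, exactly as in \eqref{btw defn}, and apply the triangle inequality together with the observation that $\X^\top(\X\X^\top)^{-1}\X$ is an orthogonal projection (eigenvalues in $\{0,1\}$), which gives $\tn{\btws}\le\tn{\btw}+\tn{\X^\top(\X\X^\top)^{-1}\z}$, mirroring \eqref{btw defn norm}. For the second inequality, I would instead use $\X^\top(\X\X^\top)^{-1}\X-\Iden$, whose eigenvalues also lie in $\{0,1\}$ in absolute value, and repeat the manipulation of \eqref{btw - btst} to get $\tn{\btws-\btw}\le\tn{\btw}+\tn{\X^\top(\X\X^\top)^{-1}\z}$.

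The residual term $\tn{\X^\top(\X\X^\top)^{-1}\z}^2=\z^\top(\X\X^\top)^{-1}\z$ is then controlled by the same two-ingredient argument used in \eqref{concen1}--\eqref{concen3}. First, Bernstein's inequality gives $\tn{\z}^2/n\le \sigmas^2+t$ outside an event of probability exponentially small in $n$ (and hence in $p$, by $1/\Ks\le\kappas\le\Ks$). Second, writing $\X=\Z\bSis^{1/2}$ with $\Z$ standard Gaussian and using $\spec{\bSis^{-1}}\le\Ks$, I would bound $\spec{(\X\X^\top/p)^{-1}}\le\Ks\cdot\spec{(\Z\Z^\top/p)^{-1}}$, and then invoke Theorem 1.1 of \cite{rudelson2009smallestsingularvaluerandom} to control the smallest singular value of $\Z\Z^\top/p$ with exponentially small failure probability (using $\kappas\ge 1/\Ks>1$ so $1-1/\kappas$ is bounded away from $0$).

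Combining the two bounds via Cauchy--Schwarz and tuning the free parameter $\eps$ small enough (e.g., $\eps<1/(C_1 e)$ as in the proof of Proposition \ref{prop concentration btw}), the product $\eps(1-1/\kappas)\sqrt{(t+\sigmas^2)\spec{\bSis^{-1}}}$ can be made smaller than the prescribed slack $c_t>0$, producing a single event $E(\Ks,c_t)$ of complement probability at most $C_t e^{-p/C_t}$ on which both claimed inequalities hold simultaneously. There is no real obstacle here --- the statement is essentially a renaming of Proposition \ref{prop concentration btw}, and its only function in the paper is to certify that $\btws$ stays in a controlled ball so that the Lipschitz arguments in Theorem \ref{theorem risk charac one step} and Theorem \ref{theorem risk charac two step} can be applied.
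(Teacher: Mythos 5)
Your argument is correct and is exactly the route the paper takes: its proof of Corollary~\ref{corol concentration btw} is the single line ``The result directly follows from the proof of Proposition~\ref{prop concentration btw},'' and what you have written is a careful spelling-out of that deferral, with the right substitutions $\btst\mapsto\btw$, $\Xt\mapsto\X$, $\zbt\mapsto\z$, $\bSiw\mapsto\bSis$, $m\mapsto n$, $\Kw\mapsto\Ks$. The only content you add is the (correct, and slightly more precise than the paper's phrasing) observation that the eigenvalues of $\X^\top(\X\X^\top)^{-1}\X-\Iden$ lie in $\{0,1\}$ in absolute value rather than being ``either $1$ or $0$.''
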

\begin{proof}
    The result directly follows from the proof of Proposition \ref{prop concentration btw}.
\end{proof}
\begin{proposition}\label{prop concentration btws}
 Suppose that, for some $\Ks>1$, $1/\Ks \leq \kappas, \sigmas^2 \leq \Ks$ and $\spec{\bSis}, \spec{\bSis^{-1}}  \leq \Ks$. For every $c_t>0$, there exists an event $E(\Ks, c_t)$ with $\P((E(\Ks, c_t))^c) \leq C_t \e^{-p/C_t}$ where $C_t = C_t(\Ks, c_t)$ such that we have the following on this event $E(\Ks, c_t)$:
    \begin{align*}
        \tn{\btws} \leq \tn{\btst} + c_t \quad \text{and} \quad \tn{\btws - \btst} \leq \tn{\btst} + c_t  
    \end{align*}    
\end{proposition}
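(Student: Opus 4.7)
The plan is to combine Proposition \ref{prop concentration btw} (which controls $\btw$ relative to $\btst$) with the same concentration template applied a second time to the target stage, exactly as in Corollary \ref{corol concentration btw}. The key structural point is that the target-stage data $(\X,\z)$ is drawn independently of the surrogate data that produces $\btw$, so the target-stage noise can be analysed while conditioning on $\btw$, and the resulting probability bound does not depend on $\btw$.

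First, since $p > n$, I would write the target interpolator as
\[
\btws = \X^\top(\X\X^\top)^{-1}\ybw = \X^\top(\X\X^\top)^{-1}\X\,\btw + \X^\top(\X\X^\top)^{-1}\z, \qquad \z \sim \Nn(\mathbf{0},\sigmas^2\Iden_n),
\]
and use that both $\X^\top(\X\X^\top)^{-1}\X$ and $\Iden - \X^\top(\X\X^\top)^{-1}\X$ are orthogonal projections to obtain the deterministic bounds $\tn{\btws}\leq \tn{\btw}+\tn{\X^\top(\X\X^\top)^{-1}\z}$ and $\tn{\btws-\btw}\leq \tn{\btw}+\tn{\X^\top(\X\X^\top)^{-1}\z}$, mirroring the inequalities in the proof of Proposition \ref{prop concentration btw}. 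The noise term $\tn{\X^\top(\X\X^\top)^{-1}\z}$ is then controlled by the same Bernstein-plus-Rudelson recipe: Cauchy--Schwarz reduces it to $\sqrt{(\tn{\z}^2/p)\,\spec{(\X\X^\top/p)^{-1}}}$, Bernstein's inequality handles $\tn{\z}^2/p$ (concentration around $\sigmas^2$), and after the reduction $\X = \Z\bSis^{1/2}$, Theorem 1.1 of \cite{rudelson2009smallestsingularvaluerandom} lower-bounds the smallest singular value of $\Z\Z^\top/p$ using only $\kappas$ and $\spec{\bSis^{-1}}$, both bounded by $\Ms$. This yields an event $E_{\mathrm{tgt}}$ with $\P(E_{\mathrm{tgt}}^c)\leq Ce^{-p/C}$ on which the noise term can be made smaller than any prescribed $\delta>0$.

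To conclude, I would union-bound $E_{\mathrm{tgt}}$ with the event $E_{\mathrm{src}}$ from Proposition \ref{prop concentration btw} (applied with slack $c_s$), on which $\tn{\btw}\leq\tn{\btst}+c_s$ and $\tn{\btw-\btst}\leq\tn{\btst}+c_s$. On $E=E_{\mathrm{src}}\cap E_{\mathrm{tgt}}$, the first claim is immediate: $\tn{\btws}\leq\tn{\btw}+\delta\leq\tn{\btst}+c_s+\delta$. For the second, the triangle inequality gives $\tn{\btws-\btst}\leq\tn{\btws-\btw}+\tn{\btw-\btst}\leq\tn{\btw}+\delta+\tn{\btst}+c_s$; any stray $\tn{\btst}\leq R<\Ms$ is absorbed into the constant, and tuning $c_s,\delta$ so the total slack is at most $c_t$ yields both claims. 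A final union bound over $E_{\mathrm{src}}^c\cup E_{\mathrm{tgt}}^c$ produces the stated exponentially small failure probability with $C_t=C_t(\Ms,c_t)$.

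The main obstacle is bookkeeping: one must verify that the event $E_{\mathrm{tgt}}$ is determined only by the target randomness and the fixed quantities $\bSis,\sigmas^2,\kappas$ (so that the conditional argument over the random $\btw$ is uniform), and that the Bernstein and Rudelson constants combine cleanly into a single $C_t(\Ms,c_t)$. Both are routine, which is why the proof is essentially a two-stage replay of the argument in Proposition \ref{prop concentration btw}.
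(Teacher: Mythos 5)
Your argument for the first inequality $\tn{\btws}\le\tn{\btst}+c_t$ works: passing through $\btw$, applying the projection-contraction bound at the target stage, and importing Proposition~\ref{prop concentration btw} for $\tn{\btw}$ gives $\tn{\btws}\le\tn{\btst}+c_s+\delta$, which one tunes to $c_t$ and union-bounds to reach the stated event. This matches the paper's conclusion, even though the paper proceeds differently: it substitutes the closed form $\btw = \Xt^\top(\Xt\Xt^\top)^{-1}\Xt\,\btst + \Xt^\top(\Xt\Xt^\top)^{-1}\zbt$ directly into the formula for $\btws$ rather than conditioning on a random $\btw$.

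The gap is in the second inequality. Your chain $\tn{\btws-\btst}\le\tn{\btws-\btw}+\tn{\btw-\btst}\le(\tn{\btw}+\delta)+(\tn{\btst}+c_s)\le 2\tn{\btst}+2c_s+\delta$ picks up an extra copy of $\tn{\btst}$, and you then assert this ``is absorbed into the constant.'' It cannot be: the proposition demands $\tn{\btws-\btst}\le\tn{\btst}+c_t$ for every prescribed $c_t>0$, with $C_t$ depending only on $\Ks$ and $c_t$ (not on $\btst$), so your residual slack $\tn{\btst}+2c_s+\delta$, which is bounded below by $\tn{\btst}$, cannot be pushed below $c_t$ whenever $c_t<\tn{\btst}$. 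The paper avoids doubling $\tn{\btst}$ precisely because it substitutes the $\btw$ formula, writing $\btws=P_1P_2\btst+P_1N_s+N_t$ where $P_1=\X^\top(\X\X^\top)^{-1}\X$, $P_2=\Xt^\top(\Xt\Xt^\top)^{-1}\Xt$ and $N_s,N_t$ are the two noise vectors; the signal term $(P_1P_2-\Iden)\btst$ is then bounded in one shot rather than through a triangle inequality passing via $\btw$, so only a single $\tn{\btst}$ appears. (As an aside, the paper's own claim that $\tn{(P_1P_2-\Iden)\btst}\le\tn{\btst}$ is justified by asserting that $P_1P_2$ is a projection, which is false in general for noncommuting orthogonal projections, so that step is itself delicate --- but that is a different issue from the one in your draft.) The concrete fix is to mirror the paper's substitution so that a single projection-contraction bound touches $\btst$.
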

\begin{proof}
    By the definition of $\btws$, we have the following:
    \begin{align}
        \btws = \X (\X \X^\top)^{-1} \X \btw + \X^\top (\X \X^\top)^{-1} \z \label{btws defn}
    \end{align}
     where $\z \sim \Nn(\mathbf{0}, \sigmas^2 \Iden)$. Plugging \eqref{btw defn} into \eqref{btws defn}, we obtain
    \begin{align}
        \btws = \X (\X \X^\top)^{-1} \X \left(  \Xt^\top (\Xt \Xt^\top)^{-1}\Xt \btst + \Xt^\top (\Xt \Xt^\top)^{-1} \zbt \right) + \X^\top (\X \X^\top)^{-1} \z \label{btws defn 2}
    \end{align}
    Note that $\X (\X \X^\top)^{-1} \X$ and $ \Xt^\top (\Xt \Xt^\top)^{-1}\Xt$ are projection matrices. Multiplication of two projection matrices results in a projection matrix. Using the fact that the eigenvalues of a projection matrix are either $1$ or $0$ in \eqref{btws defn 2}, we have
    \begin{align}
        \tn{\btws} \leq \tn{\btst} + \tn{\Xt^\top (\Xt \Xt^\top)^{-1} \zbt} + \tn{ \X^\top (\X \X^\top)^{-1} \z}
    \end{align}
    By a similar reasoning used in \eqref{concen1},\eqref{concen2}, and \eqref{concen3}; there exist absolute constants $C_0,C_1, C_2, c >0$ such that we have the following for every $\eps,t > 0 $: 
    \begin{align}
        \P \Big( \tn{\X^\top (\X \X^\top)^{-1}\z} \leq \eps (1 - \frac{1}{\kappas}) &\sqrt{(t+\sigmas^2) \spec{\bSis^{-1}}}  \Big) \nonumber \\
        &\leq (C_1 \eps)^{p-n+1} + e^{-p C_2} +  e^{-c \min\left\{ \frac{p t^2}{4C_0^2}, \frac{p t}{2C_0} \right\}} \label{concen4}
    \end{align}
    Similarly, for every $\tilde{\eps}>0$, there exist absolute constants $\tilde{C}_0, \tilde{C}_1, \tilde{C}_2, \tilde{c} > 0$ such that we have the following for every $\tilde{\eps}, \tilde{t}$:
    \begin{align}
        \P \Big( \tn{\Xt^\top (\Xt \Xt^\top)^{-1}\zt} \leq \tilde{\eps} (1 - \frac{1}{\kappaw}) &\sqrt{(\tilde{t}+\sigmaw^2) \spec{\bSiw^{-1}}}  \Big) \nonumber \\
        &\leq (\tilde{C}_1 \tilde{\eps})^{p-m+1} + e^{-p \tilde{C}_2} +  e^{-\tilde{c} \min\left\{ \frac{p t^2}{4\tilde{C}_0^2}, \frac{p \tilde{t}}{2\tilde{C}_0} \right\}} \label{concen5}
    \end{align}
    Note that $\X, \z, \Xt,$ and $\zt$ are independent of each other. Therefore, we can apply union bound on \eqref{concen4} and \eqref{concen5} with selecting $\eps, t, \tilde{\eps}$, and $\tilde{t}$ such that $\eps < \frac{1}{C_1 e}$, $\frac{c_t}{2} < \eps (1 - \frac{1}{\kappas}) \sqrt{(t+\sigmas^2) \spec{\bSis^{-1}}}$, $\tilde{\eps} < \frac{1}{\tilde{C}_1}$, and $\frac{c_t}{2} < \eps (1 - \frac{1}{\kappaw}) \sqrt{(\tilde{t}+\sigmaw^2) \spec{\bSiw^{-1}}}$. As a result, there exists an event $E$ with $\P(E^c) \leq C_t(\Ks, c_t)$ such that % we have the following on this event $E$:
    \begin{align*}
        \tn{\btws} \leq \tn{\btst} + c_t.
    \end{align*}
    Using a similar argument in \eqref{btw - btst}, we derive the following on the same event $E_1$
    \begin{align*}
        \tn{\btws- \btst} \leq \tn{\btst} + c_t.
    \end{align*}
    This completes the proof.
    \begin{comment}
        
    By a similar argument used in \eqref{btw defn} and \eqref{btw defn norm}, we have 
    \begin{align*}
        \tn{\btws} \leq \tn{\btw} + \tn{\X^\top (\X \X^\top)^{-1} \z}
    \end{align*}
    where $\z \sim \Nn(\mathbf{0}, \sigmas^2 \Iden)$. Applying Proposition \ref{prop concentration btw}, there exists an event $E$ with $\P(E^c) \leq C_s e^{-p/C_s}$ where $C_s = C_s(\Mw, c_t/2)$ such that we have on this event $E$
    \begin{align}
        \tn{\btws} \leq \tn{\btst} + c_t/2 + \tn{\X^\top (\X \X^\top)^{-1} \z} \label{concen4}.
    \end{align}
    By a similar reasoning used in \eqref{concen1},\eqref{concen2}, and \eqref{concen3}; we obtain
    \begin{align}
        \P \Big( \tn{\X^\top (\X \X^\top)^{-1}\z} \leq \eps (1 - \frac{1}{\kappas}) &\sqrt{(t+\sigmas^2) \spec{\bSis^{-1}}}  \Big) \nonumber \\
        &\leq (C_1 \eps)^{p-n+1} + e^{-p C_2} +  e^{-c \min\left\{ \frac{p t^2}{4C_0^2}, \frac{p t}{2C_0} \right\}} \label{concen5}
    \end{align}
     \end{comment}
    
\end{proof}
\begin{proposition}\label{prop g1}
    Let $g : \R^p \xrightarrow[]{} \R$ be a function such that
    \begin{align*}
        g(\btw) := \tn{\bSis^{1/2} (\bSis + \taus \Iden)^{-1} \bSis (\btw - \btst)}^2
    \end{align*}
    Then, on the same event $E(\Kw, c_s)$ in Proposition \ref{prop concentration btw}, the function $g$ is $(\tn{\btst} + c_s) \frac{2\lambda_1^{3}}{(\lambda_1 + \taus)^2}-$Lipschitz where $\lambda_1$ is the largest eigenvalue of $\bSis$.
\end{proposition}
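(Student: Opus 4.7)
The plan is to show that $g$ has a uniformly bounded gradient on the event $E(\Kw,c_s)$, and then translate that gradient bound into a Lipschitz constant via the mean value inequality. Writing $\A := \bSis^{1/2}(\bSis + \taus \Iden)^{-1}\bSis$, observe that $g(\btw) = \tn{\A(\btw - \btst)}^2$, so the gradient is
\begin{align*}
    \nabla g(\btw) = 2\A^{\top}\A(\btw - \btst).
\end{align*}
Since $\bSis$ is symmetric PSD and everything in the expression commutes with it in the eigenbasis of $\bSis$, we have $\A^{\top}\A = \bSis^{3}(\bSis + \taus \Iden)^{-2}$, and therefore $\spec{\A^{\top}\A} = \max_{i\in[p]} \frac{\lambda_i^{3}}{(\lambda_i+\taus)^{2}}$.

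The next step is to argue that this maximum is attained at $i=1$. Consider $h(\lambda) = \lambda^{3}/(\lambda+\taus)^{2}$ on $[0,\infty)$. A direct differentiation gives $h'(\lambda) = \lambda^{2}(\lambda+3\taus)/(\lambda+\taus)^{3} \geq 0$, so $h$ is monotonically increasing, and hence $\spec{\A^{\top}\A} = \lambda_1^{3}/(\lambda_1+\taus)^{2}$. Combining this with the gradient expression yields
\begin{align*}
    \tn{\nabla g(\btw)} \leq \frac{2\lambda_1^{3}}{(\lambda_1+\taus)^{2}}\, \tn{\btw - \btst}.
\end{align*}

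Finally, by Proposition \ref{prop concentration btw}, on the event $E(\Kw,c_s)$ we have $\tn{\btw - \btst} \leq \tn{\btst} + c_s$, so $\tn{\nabla g(\btw)} \leq (\tn{\btst}+c_s)\cdot \frac{2\lambda_1^{3}}{(\lambda_1+\taus)^{2}}$ for every $\btw$ in this event. Applying the mean value inequality to any two such points (or equivalently, restricting $g$ to the bounded subset defined by the event and using the standard gradient-norm-to-Lipschitz-constant conversion) completes the argument. The only subtlety worth flagging is that ``Lipschitz on the event'' should be read as Lipschitzness of $g$ restricted to the set $\{\btw : \tn{\btw - \btst}\leq \tn{\btst}+c_s\}$, which is precisely what the downstream application in the proof of Theorem \ref{theorem risk charac two step} requires when invoking Theorem \ref{theorem dist charac}.
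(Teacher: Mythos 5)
Your proof is correct and takes essentially the same route as the paper's: differentiate $g$, bound $\tn{\nabla g(\btw)}$ by $\spec{\M^\top\M}\,\tn{\btw-\btst}$ with $\M = \bSis^{1/2}(\bSis+\taus\Iden)^{-1}\bSis$, identify the operator norm as $\lambda_1^3/(\lambda_1+\taus)^2$, and invoke Proposition~\ref{prop concentration btw} to bound $\tn{\btw-\btst}$ on the event. The only cosmetic difference is that you justify monotonicity of $\lambda\mapsto\lambda^3/(\lambda+\taus)^2$ by differentiation, whereas the paper rewrites it as $\lambda(1-\taus/(\lambda+\taus))^2$ and reads off monotonicity; both are fine.
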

\begin{proof}
    We take the gradient of the function $g$:
    \begin{align*}
        \tn{\nabla g(\btw)} &= 2 \tn{\bSis (\bSis + \taus \Iden)^{-1} \bSis (\bSis + \taus \Iden)^{-1} \bSis (\btw - \btst)} \\
        &\leq \tn{\btw - \btst} \max_{i}\frac{2 \lambda_i^3 }{(\lambda_i + \taus)^2} \\
        &= \tn{\btw - \btst} \max_{i} 2\lambda_i \left(1 - \frac{\taus}{\lambda_i + \taus}\right)^2 \\
        &=  \tn{\btw - \btst} \frac{2 \lambda_1^3}{(\lambda_1 + \taus)^2}.
    \end{align*}
Combining Proposition \ref{prop concentration btw} on the event $E(\Kw, c_s)$ with the above inequality provides the advertised claim. 
\end{proof}
\begin{proposition}\label{prop g2}
    Let $g: \R^{p} \xrightarrow[]{} \R$ be a function such that
    \begin{align*}
        g(\btw) := \frac{1}{p}\fros{\bSis^{1/2}(\bSis + \taus \Iden)^{-1} \bSis^{1/2} \gammas(\btw)}
    \end{align*}
    Then, on the same event $E(\Mw, c_s)$ in Proposition \ref{prop concentration btw}, the function $g$ is $L-$Lipschitz where $(\lambda_i)_{i=1}^p$ are the eigenvalues of $\bSis$ with a descending order and     \begin{align*}
        L = \frac{4 \taus^2}{m} \frac{\lambda_1^3}{(\lambda_1 + \taus)^4} \frac{\tn{\btst} + c_s}{1 - \frac{1}{m} \sum_{i=1}^p \left(\frac{\lambda_i}{ \lambda_i + \taus}\right)^2}. 
    \end{align*}
\end{proposition}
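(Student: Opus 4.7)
The plan is to put $g(\btw)$ in closed form, differentiate via the chain rule, and then convert the resulting operator-norm bound into the claimed Lipschitz constant by invoking Proposition \ref{prop concentration btw}. Since $\gammas(\btw)$ is a scalar it factors out of the Frobenius norm, giving
\begin{align*}
g(\btw) \;=\; \frac{\gammas^2(\btw)}{p}\tr{\bSis^2(\bSis+\taus\Iden)^{-2}} \;=\; \gammas^2(\btw)\cdot T, \qquad T:=\frac{1}{p}\tr{\bSis^2(\bSis+\taus\Iden)^{-2}},
\end{align*}
so that only the scalar prefactor $\gammas^2(\btw)$ carries the dependence on $\btw$, while $T$ is $\btw$-independent and equals $\Omega/\kappas$ with $\Omega = \frac{1}{n}\tr{\bSis^2(\bSis+\taus\Iden)^{-2}}$.

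Next I would recall from the derivation performed inside the proof of Proposition \ref{prop optimal surrogate} the closed form
\begin{align*}
\gammas^2(\btw) \;=\; \kappas\,\frac{\sigmas^2 + \taus^2\,\btw^{\top}(\bSis+\taus\Iden)^{-1}\bSis(\bSis+\taus\Iden)^{-1}\btw}{1-\Omega},
\end{align*}
which is a quadratic form in $\btw$ divided by a $\btw$-independent denominator. Differentiating and multiplying by $T$ yields
\begin{align*}
\nabla g(\btw) \;=\; \frac{2\,\Omega\,\taus^2}{1-\Omega}\,(\bSis+\taus\Iden)^{-1}\bSis(\bSis+\taus\Iden)^{-1}\,\btw,
\end{align*}
so $\tn{\nabla g(\btw)}$ is controlled by $\tfrac{2\Omega\taus^2}{1-\Omega}$ times the operator norm of $(\bSis+\taus\Iden)^{-1}\bSis(\bSis+\taus\Iden)^{-1}$ times $\tn{\btw}$.

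It then remains to translate this gradient bound into a deterministic Lipschitz constant on the event $E(\Kw,c_s)$. On that event Proposition \ref{prop concentration btw} supplies $\tn{\btw}\leq\tn{\btst}+c_s$. The operator norm above equals $\max_i \lambda_i/(\lambda_i+\taus)^2$, which I would upper bound by $\lambda_1/(\lambda_1+\taus)^2$ using the spectral assumptions on $\bSis$; similarly, by monotonicity of $\lambda\mapsto\lambda^2/(\lambda+\taus)^2$, the trace factor $T$ can be bounded by $\lambda_1^2/(\lambda_1+\taus)^2$, and combining these two spectral factors produces the $\lambda_1^3/(\lambda_1+\taus)^4$ prefactor in the stated $L$. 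The main obstacle is precisely this last piece of bookkeeping: the map $\lambda\mapsto\lambda/(\lambda+\taus)^2$ is \emph{not} monotone (it peaks at $\lambda=\taus$), so one must verify that the upper bound at $\lambda_1$ is valid in the operating regime, and then carefully match the constants $1-\Omega$, $\kappas$, and $n$ so that the combined estimate coincides with the claimed expression for $L$.
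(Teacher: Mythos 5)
Your decomposition $g(\btw)=\gammas^2(\btw)\cdot T$ with the scalar $T=\frac{1}{p}\tr{\bSis^2(\bSis+\taus\Iden)^{-2}}=\Omega/\kappas$, followed by differentiating the closed form of $\gammas^2$, is the right approach, and in fact it is \emph{cleaner and more correct} than the paper's own write-up. The paper's displayed gradient
$\nabla g(\btw)=\frac{2}{p}\bSis^{1/2}(\bSis+\taus\Iden)^{-1}\bSis(\bSis+\taus\Iden)^{-1}\bSis^{1/2}\nabla\gammas^2(\btw)$
is wrong: since $T$ is a scalar (not a matrix), the chain rule gives simply $\nabla g=T\,\nabla\gammas^2$, which is exactly what you wrote. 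Carrying your calculation through and using $T\kappas=\Omega$, the tight bound is
\begin{align*}
\tn{\nabla g(\btw)}\;\leq\;\frac{2\Omega\taus^2}{1-\Omega}\,\max_i\frac{\lambda_i}{(\lambda_i+\taus)^2}\,\tn{\btw},
\end{align*}
and invoking Proposition~\ref{prop concentration btw} for $\tn{\btw}\leq\tn{\btst}+c_s$ on $E(\Kw,c_s)$ finishes the argument.

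The ``final piece of bookkeeping'' you flag cannot, however, be carried out: the proposition's stated $L$ does not follow from any correct derivation, and your own bound exposes this. First, the $m$'s appearing in the stated $L$ are $m\leftrightarrow n$ typos, since $\gammas^2$ involves only target-model quantities. Second, even after replacing $m$ by $n$, the claimed constant $\frac{4\taus^2}{n}\frac{\lambda_1^3}{(\lambda_1+\taus)^4}\frac{\tn{\btst}+c_s}{1-\Omega}$ is \emph{not a valid upper bound} on the Lipschitz constant: for $\bSis=\Iden$ one has $\taus=\kappas-1$, $\Omega=1/\kappas$, and the true bound $\frac{2\Omega\taus^2}{1-\Omega}\cdot\frac{1}{(1+\taus)^2}(\tn{\btst}+c_s)$ exceeds the stated $L$ by a factor $p/2$ whenever $p>2$. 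Relatedly, if you attempt to recreate the $\lambda_1^3/(\lambda_1+\taus)^4$ form by bounding $T\leq\lambda_1^2/(\lambda_1+\taus)^2$, the leftover $\kappas=p/n$ gives a $2p/n$ prefactor rather than the $4/n$ the paper states. Your worry about the non-monotonicity of $\lambda\mapsto\lambda/(\lambda+\taus)^2$ is also well-founded: its maximum is at $\lambda=\taus$, so $\lambda_1/(\lambda_1+\taus)^2$ is not the operator norm unless $\lambda_1\leq\taus$; a safe replacement is the universal bound $1/(4\taus)$. In short, do not try to match the printed $L$; report your own Lipschitz constant, which is both correct and what the downstream use in the proof of Theorem~\ref{theorem risk charac two step} (where only ``Lipschitz with constant controlled by $M_t$'' is needed) actually requires.
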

\begin{proof}
We     take the gradient of the function $g$:
    \begin{align*}
        \nabla g(\btw) = \frac{2}{p} \bSis^{1/2} (\bSis + \taus \Iden)^{-1} \bSis (\bSis + \taus \Iden)^{-1} \bSis^{1/2} \nabla \gammas^2(\btw).
    \end{align*}
    Note that 
    \begin{align*}
        \gammas^2(\btw) &= \kappaw \left( \sigmaw^2 + \E_{\gw} [ \tn{\bSiw^{1/2} ( X_{\kappaw, \sigmaw^2}^s(\bSiw, \btst, \gw) - \btst)}^2 ] \right) \\
        &= \kappas \frac{\sigmas^2 + \taus^2 \tn{(\bSis + \taus \Iden)^{-1} \bSis^{1/2} \btw}^2}{1 - \frac{1}{m} \tr{(\bSis + \taus \Iden)^{-2} \bSis^2}}.
    \end{align*}
    Then, we have 
    \begin{align*}
        \nabla \gammas^2(\btw) = 2 \kappas \frac{\taus^2 \bSis^{1/2} (\bSis + \taus \Iden)^{-2} \bSis^{1/2} \btw}{ 1- \frac{1}{m} \tr{(\bSis + \taus \Iden)^{-2} \bSis^2}}.
    \end{align*}
    Plugging $\nabla \gammas^2(\btw)$ into $\nabla g(\btw)$, we obtain that \begin{align*}
       \tn{\nabla g(\btw)} &= \frac{4 \taus^2}{m} \frac{\bSis^{1/2} (\bSis + \taus \Iden)^{-1} \bSis (\bSis + \taus \Iden)^{-2} \bSis (\bSis + \taus \Iden)^{-1} \bSis^{1/2} \btw }{1- \frac{1}{m} \tr{(\bSis + \taus \Iden)^{-2} \bSis^2}} \\
       &\leq \frac{4 \taus^2}{m} \frac{\lambda_1^3}{(\lambda_1 + \taus)^4} \frac{\tn{\btw}}{1 - \frac{1}{m} \sum_{i=1}^p \left(\frac{\lambda_i}{ \lambda_i + \taus}\right)^2}. 
    \end{align*}
    Combining Proposition \ref{prop concentration btw} on the event $E(\Kw, c_s)$ with the above inequality provides the advertised claim. 
\end{proof}
\begin{proposition}\label{prop g3}
    Let $g: \R^{p} \xrightarrow[]{} \R$ be a function such that
    \begin{align*}
        g(\btw) := 2 \btst^\top \left( \Iden - (\bSis + \taus \Iden)^{-1} \bSis \right)^\top \bSis (\bSis + \taus \Iden)^{-1} \bSis (\btw - \btst)  .
    \end{align*}
    Then, the function $g$ is $2 \tn{\btst} \taus \left(\frac{\lambda_1}{\lambda_1 + \taus}\right)^2-$Lipschitz where $\lambda_1$ is the largest eigenvalue of $\bSis$.
\end{proposition}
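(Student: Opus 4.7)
The plan is to observe that $g$ is affine in $\btw$, compute its gradient, and then bound the operator norm of the constant matrix multiplying $\btst$. Since $g(\btw) = 2\btst^\top A(\btw - \btst)$ for
\begin{align*}
A := \left(\Iden - (\bSis + \taus \Iden)^{-1}\bSis\right)^\top \bSis (\bSis + \taus \Iden)^{-1} \bSis,
\end{align*}
the gradient is simply $\nabla g(\btw) = 2 A^\top \btst$, independent of $\btw$, so that $g$ is $\tn{2 A^\top \btst}$-Lipschitz.

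The key simplification comes from the algebraic identity
\begin{align*}
\Iden - (\bSis + \taus \Iden)^{-1}\bSis = (\bSis + \taus \Iden)^{-1}\left[(\bSis + \taus \Iden) - \bSis\right] = \taus (\bSis + \taus \Iden)^{-1},
\end{align*}
which was already used in the proof of Proposition \ref{prop optimal surrogate}. All three factors entering $A$ are polynomials in $\bSis$ and hence mutually commuting and symmetric, so I can rewrite
\begin{align*}
A = \taus \bSis^2 (\bSis + \taus \Iden)^{-2}.
\end{align*}

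Diagonalizing $\bSis$ with eigenvalues $\lambda_1 \geq \lambda_2 \geq \cdots \geq \lambda_p \geq 0$, the operator norm of $A$ is
\begin{align*}
\spec{A} = \max_{i \in [p]} \, \taus\left(\frac{\lambda_i}{\lambda_i + \taus}\right)^2 = \taus\left(\frac{\lambda_1}{\lambda_1 + \taus}\right)^2,
\end{align*}
since $x \mapsto x/(x + \taus)$ is increasing on $[0, \infty)$ for $\taus > 0$. Therefore $\tn{\nabla g(\btw)} \leq 2\spec{A}\tn{\btst} = 2 \tn{\btst} \taus (\lambda_1/(\lambda_1+\taus))^2$, which is the claimed Lipschitz constant. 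There is no real obstacle here: the result is a purely deterministic computation that does not require any high-probability event (which is consistent with the statement, unlike Propositions \ref{prop g1} and \ref{prop g2}, not conditioning on $E(\Kw, c_s)$), because $g$ is linear in $\btw$ so its Lipschitz constant does not depend on the magnitude of $\btw$ itself.
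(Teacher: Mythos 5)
Your proof is correct and follows essentially the same route as the paper's: compute the gradient of the affine function $g$, note it is a constant matrix times $\btst$, and bound its norm via the eigenvalues of $\bSis$. Your extra remarks — making the identity $\Iden - (\bSis + \taus \Iden)^{-1}\bSis = \taus(\bSis+\taus\Iden)^{-1}$ explicit and observing that no high-probability event is needed because the Lipschitz constant does not depend on $\btw$ — are accurate but cosmetic.
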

\begin{proof}
    We take the gradient of the function $g$:
    \begin{align*}
        \tn{\nabla g(\btw)} &= 2\tn{\bSis (\bSis + \taus \Iden)^{-1} \bSis \left( \Iden - (\bSis + \taus \Iden)^{-1} \bSis \right) \btst} \\
        &\leq 2 \tn{\btst} \taus \max_{i} \left(1 - \frac{\taus}{\lambda_i + \taus}\right)^2 \\
        &= 2 \tn{\btst} \taus \left(\frac{\lambda_1}{\lambda_1 + \taus}\right)^2,
    \end{align*}
    and the desired result readily follows.
\end{proof}
\begin{lemma}\label{lemma expected gamma} We have that
    \begin{align*}
        \E_{\btw \sim X_{\kappaw, \sigmaw^2}^s}[\gammas^2(\btw)] &= \kappas \frac{\sigmas^2 + \taus^2 \tn{(\bSis + \taus \Iden )^{-1} \bSis^{1/2} ( (\bSiw + \tauw \Iden)^{-1} \bSiw \btst}^2}{1-\frac{1}{n}\tr{(\bSis + \taus \Iden)^{-2} \bSis^2}} \\
        &+ \frac{\kappas \taus^2 \gammaw^2(\btst)}{p}\frac{\tr{\bSiw^{1/2} (\bSiw + \tauw \Iden)^{-1} \bSis^{1/2} (\bSis + \taus \Iden)^{-2} \bSis^{1/2}  (\bSiw + \tauw \Iden)^{-1}  \bSiw^{1/2}}}{1-\frac{1}{n}\tr{(\bSis + \taus \Iden)^{-2} \bSis^2}}.
    \end{align*}
\end{lemma}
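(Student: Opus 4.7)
The proof is a direct moment computation exploiting the Gaussian affine structure of $\btw = X_{\kappaw,\sigmaw^2}^s(\bSiw,\btst,\gw)$. Recalling the derivation in the proof of Proposition~\ref{prop optimal surrogate}, we first rewrite
\begin{align*}
    \gammas^2(\btw) \;=\; \kappas\,\frac{\sigmas^2 + \taus^2\,\tn{\bSis^{1/2}(\bSis+\taus\Iden)^{-1}\btw}^2}{1-\tfrac{1}{n}\tr{(\bSis+\taus\Iden)^{-2}\bSis^2}},
\end{align*}
so that the only nontrivial quantity to evaluate is $\E_{\btw\sim X^s_{\kappaw,\sigmaw^2}}\!\big[\tn{M\btw}^2\big]$ with $M := \bSis^{1/2}(\bSis+\taus\Iden)^{-1}$. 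The denominator and the $\kappas\sigmas^2/(1-\Omega)$ factor are constants in $\gw$ and can be pulled outside the expectation.

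Next, I decompose $\btw$ into its deterministic and Gaussian parts by writing $\btw = A\btst + B\gw$, where
\begin{align*}
    A := (\bSiw+\tauw\Iden)^{-1}\bSiw, \qquad B := (\bSiw+\tauw\Iden)^{-1}\bSiw^{1/2}\,\tfrac{\gammaw(\btst)}{\sqrt{p}}.
\end{align*}
Expanding the square and using $\gw\sim\Nn(\mathbf{0},\Iden)$, the cross term vanishes in expectation, giving
\begin{align*}
    \E_{\gw}\!\big[\tn{M\btw}^2\big] \;=\; \tn{MA\btst}^2 + \E_{\gw}\!\big[\gw^\top B^\top M^\top M B\,\gw\big] \;=\; \tn{MA\btst}^2 + \tr{B^\top M^\top M B}.
\end{align*}
The first summand already matches the first term on the right-hand side of the lemma, since $MA = \bSis^{1/2}(\bSis+\taus\Iden)^{-1}(\bSiw+\tauw\Iden)^{-1}\bSiw$.

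For the trace term, I substitute the explicit form of $B$ and use the commutation $\bSis^{1/2}(\bSis+\taus\Iden)^{-2}\bSis^{1/2}=(\bSis+\taus\Iden)^{-1}\bSis(\bSis+\taus\Iden)^{-1}$ to identify
\begin{align*}
    \tr{B^\top M^\top M B} \;=\; \tfrac{\gammaw^2(\btst)}{p}\,\tr{\bSiw^{1/2}(\bSiw+\tauw\Iden)^{-1}\bSis^{1/2}(\bSis+\taus\Iden)^{-2}\bSis^{1/2}(\bSiw+\tauw\Iden)^{-1}\bSiw^{1/2}},
\end{align*}
using the cyclicity of the trace. Multiplying through by $\kappas\taus^2/(1-\Omega)$ yields exactly the second term of the lemma.

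\textbf{Main obstacle.} There is no conceptual obstacle; the computation is a straightforward first/second moment evaluation of an affine Gaussian image through the quadratic form induced by $M^\top M$. The only care needed is bookkeeping: keeping track of the order of the non-commuting matrices $(\bSiw+\tauw\Iden)^{-1}$ and $(\bSis+\taus\Iden)^{-1}$ (which need not commute with each other), and recognizing the identity $\bSis^{1/2}(\bSis+\taus\Iden)^{-2}\bSis^{1/2}=(\bSis+\taus\Iden)^{-1}\bSis(\bSis+\taus\Iden)^{-1}$ so that the intermediate expression aligns with the form stated in the lemma.
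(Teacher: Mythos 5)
Your proposal is correct and is essentially the same computation as the paper's: you rewrite $\gammas^2(\btw)$ in its closed form, substitute the affine Gaussian representation $\btw = (\bSiw+\tauw\Iden)^{-1}\bSiw\,\btst + (\bSiw+\tauw\Iden)^{-1}\bSiw^{1/2}\gammaw(\btst)\gw/\sqrt{p}$, expand the squared norm, drop the cross term (mean zero), and convert the Gaussian quadratic form to a trace. The only cosmetic difference is that you make the $A/B$ decomposition and the commutation $\bSis^{1/2}(\bSis+\taus\Iden)^{-2}\bSis^{1/2}=(\bSis+\taus\Iden)^{-1}\bSis(\bSis+\taus\Iden)^{-1}$ explicit, whereas the paper carries out the expansion inline.
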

\begin{proof}
The desired claim follows from the following manipulations using the definition of $X_{\kappaw, \sigmaw^2}^s$ in \eqref{Xs deffn}:
\begin{align*}
        \E_{\btw \sim X_{\kappaw, \sigmaw^2}^s}[\gammas^2(\btw)] &= \E_{\btw \sim X_{\kappaw, \sigmaw^2}^s} \left[ \kappas \frac{\sigmas^2 + \taus^2 \tn{ (\bSis + \taus \Iden )^{-1} \bSis^{1/2} \btw}^2 }{1-\frac{1}{n}\tr{(\bSis + \taus \Iden)^{-2} \bSis^2} } \right]  \\
        &= \E_{\gw} \left[ \kappas \frac{\sigmas^2 + \taus^2 \tn{ (\bSis + \taus \Iden )^{-1} \bSis^{1/2} \left( (\bSiw + \tauw \Iden)^{-1} \bSiw \btst + (\bSiw + \tauw \Iden)^{-1} \bSiw^{1/2} \gammaw(\btst) \gw / \sqrt{p} \right)}^2}{1-\frac{1}{n}\tr{(\bSis + \taus \Iden)^{-2} \bSis^2} } \right] \\
        &= \kappas \frac{\sigmas^2 + \taus^2 \tn{(\bSis + \taus \Iden )^{-1} \bSis^{1/2} ( (\bSiw + \tauw \Iden)^{-1} \bSiw \btst}^2}{1-\frac{1}{n}\tr{(\bSis + \taus \Iden)^{-2} \bSis^2}} \\
        &+ \frac{\kappas \taus^2 \gammaw^2(\btst)}{p}\frac{\tr{\bSiw^{1/2} (\bSiw + \tauw \Iden)^{-1} \bSis^{1/2} (\bSis + \taus \Iden)^{-2} \bSis^{1/2}  (\bSiw + \tauw \Iden)^{-1}  \bSiw^{1/2}}}{1-\frac{1}{n}\tr{(\bSis + \taus \Iden)^{-2} \bSis^2}}.
    \end{align*}
\end{proof}
\subsection{Analysis of Two-stage Model in Under-parametrized Region}\label{app two step underparametrized}

\begin{proposition}
\red{Consider the two-stage model where the surrogate model is under-parameterized with $\pw \leq p$ features ($m < \pw$) and the target model is over-parameterized ($n > p$) with $p$ features. Then, the difference in the asymptotic risks of surrogate-to-target and standard target models is:
\begin{align*}
& \left( \btst \right)^{\prime \top} \thb_1^\top \bSis \thb_1 \left( \btst \right)^\prime + \frac{\sigmaw^2}{\kappaw^{-1} - 1} \tr{ \thb_1^\top \bSis \thb_1 \bSiwbr^{-1}} + 2 \left( \btst \right)^{\prime \top} (\Iden- \thb_1)^\top \bSis \thb_1 \left( \btst \right)^\prime \\
& + \frac{\Omega}{1 - \Omega} \left( \frac{\sigmaw^2}{\kappaw^{-1} - 1} \taus^2 \tr{(\bSis + \taus \Iden )^{-2} \bSis \bSiwbr^{-1}} - \taus^2 \tn{ (\bSis + \taus \Iden )^{-1} \bSis^{1/2} \left( \btst \right)^\prime}^2 \right),   
\end{align*}
where $\thb_1 = (\bSis + \taus \Iden)^{-1} \bSis$, $\Omega = \frac{1}{n} \tr{\bSis^2 (\bSis + \taus \Iden)^{-2}}$, and $\left( \btst \right)^\prime = \begin{bmatrix} 0 \ 0 \ \cdots \ \beta_{\pw + 1} \ \beta_{\pw + 2} \ \cdots \ \beta_{p} \end{bmatrix}^\top \in \mathbb{R}^{p}.$}
\end{proposition}

\begin{proof}
\red{Let $\thb_2 := (\bSis + \taus \Iden)^{-1} \bSis^{1/2}\frac{\gs}{\sqrt{p}}$ where $\gs \sim \Nn(\boldsymbol{0}, \Iden_p)$. Then, by Equation \eqref{risk one step asymp}, the asymptotic risk estimate for the single-stage linear regression is given as: 
\begin{equation}
    \begin{split}
        \bar{\Rc}^{s2t}_{\kappas, \sigmas}(\bSis, \btst, \btw) &:=  (\btw - \btst)^\top \thb_1^\top \bSis \thb_1 (\btw - \btst) + \gammas^2(\btw) \E_{\gs}[\thb_2^\top \bSis \thb_2]  \\
    &+ \btst^\top (\Iden -\thb_1)^\top \bSis (\Iden - \thb_1) \btst - 2 \btst^\top (\Iden- \thb_1)^\top \bSis \thb_1 (\btw-\btst).
    \end{split}
\end{equation}
For the ease of the analysis, define
\begin{align*}
\bSiwbr^{-1/2} = \begin{bmatrix} \bSiw^{-1/2} & \mathbf{0}_{\pw \times (p - \pw)} \\ \mathbf{0}_{(p - \pw) \times \pw} & \mathbf{0}_{(p - \pw) \times (p - \pw)} \end{bmatrix} \in \R^{p \times p}, \quad 
\btwbr = \begin{bmatrix} \btw \\ \mathbf{0}_{p - \pw} \end{bmatrix} \in \R^{p}, \quad 
\gwbr = \begin{bmatrix} \gw \\ \mathbf{0}_{p - \pw} \end{bmatrix} \in \R^{p}.
\end{align*}
Let $ \btstbr = \btst - \left( \btst \right)^\prime$ be  obtained by setting the last $p - \pw$ entries of $\btst$ to $0$. Then, by Theorem 3 of \cite{chang2021provable}, since the surrogate model is under-parameterized, the following asymptotic estimate for the surrogate model obtained after this stage holds:
\begin{align*}
\btwbr = \btstbr + \sigmaw \frac{\bSiwbr^{-1/2} \gwbr}{\sqrt{p \left(\kappaw^{-1} - 1 \right)}},
\end{align*}
where $\gw \sim \Nn(\boldsymbol{0}, \Iden_p)$. Let $\dot{\kappa} = (\kappaw, \kappas)$, $\dot{\bSi} = (\bSiw, \bSis)$, and $\dot{\sigma} = (\sigmaw^2, \sigmas^2)$. Thus, plugging this surrogate parameter in the asymptotic risk estimate 
for the second stage gives:
\begin{align*}
\bar{\Rc}_{\dot{\kappa}, \dot{\sigma}}(\dot{\bSi}, \btst) &= \E_{\gw}\left[ \left( \left( \btst \right)^\prime - \sigmaw \frac{\bSiwbr^{-1/2} \gwbr}{\sqrt{p \left(\kappaw^{-1} - 1 \right)}} \right)^\top \thb_1^\top \bSis \thb_1 \left( \left( \btst \right)^\prime - \sigmaw \frac{\bSiwbr^{-1/2} \gwbr}{\sqrt{p \left(\kappaw^{-1} - 1 \right)}} \right) + \gammas^2(\btwbr) \frac{\tr{\bSis^2 (\bSis + \taus \Iden)^{-2}}}{p}\right] \\
& + \btst^\top (\Iden -\thb_1)^\top \bSis (\Iden - \thb_1) \btst - \E_{\gw}\left[2 \btst^\top (\Iden- \thb_1)^\top \bSis \thb_1 \left( \sigmaw \frac{\bSiwbr^{-1/2} \gwbr}{\sqrt{p \left(\kappaw^{-1} - 1 \right)}} - \left( \btst \right)^\prime \right) \right] ,
\end{align*}
where
\begin{align*}
\gammas^2(\btwbr) &= \kappas \frac{\sigmas^2 + \taus^2 \tn{ (\bSis + \taus \Iden )^{-1} \bSis^{1/2} \btwbr}^2 }{1-\frac{1}{n}\tr{(\bSis + \taus \Iden)^{-2} \bSis^2}} = \kappas \frac{\sigmas^2 + \taus^2 \tn{ (\bSis + \taus \Iden )^{-1} \bSis^{1/2} (\btstbr + \sigmaw \frac{\bSiwbr^{-1/2} \gwbr}{\sqrt{p \left(\kappaw^{-1} - 1 \right)}})}^2}{1 - \Omega} . 
\end{align*}
Further modifications give:
\begin{align*}
\bar{\Rc}_{\dot{\kappa}, \dot{\sigma}}(\dot{\bSi}, \btst) & = \left( \btst \right)^{\prime \top} \thb_1^\top \bSis \thb_1 \left( \btst \right)^\prime + \frac{\sigmaw^2}{p \left(\kappaw^{-1} - 1\right)} \tr{ \bSiwbr^{-1/2} \thb_1^\top \bSis \thb_1 \bSiwbr^{-1/2}} \\ 
& + \frac{\Omega}{1 - \Omega} \left( \sigmas^2 + \E_{\gw} \left[ \taus^2 \tn{ (\bSis + \taus \Iden )^{-1} \bSis^{1/2} (\btstbr + \sigmaw \frac{\bSiwbr^{-1/2} \gwbr}{\sqrt{p \left(\kappaw^{-1} - 1 \right)}})}^2 \right] \right) \\
& + \btst^\top (\Iden -\thb_1)^\top \bSis (\Iden - \thb_1) \btst + 2 \btst^\top (\Iden- \thb_1)^\top \bSis \thb_1 \left( \btst \right)^\prime \\
& = \left( \btst \right)^{\prime \top} \thb_1^\top \bSis \thb_1 \left( \btst \right)^\prime + \frac{\sigmaw^2}{p \left(\kappaw^{-1} - 1 \right)} \tr{ \thb_1^\top \bSis \thb_1 \bSiwbr^{-1}} \\
& + \frac{\Omega}{1 - \Omega} \left( \sigmas^2 + \taus^2 \tn{ (\bSis + \taus \Iden )^{-1} \bSis^{1/2} \btstbr}^2 + \frac{\sigmaw^2}{p \left(\kappaw^{-1} - 1\right)} \taus^2 \tr{(\bSis + \taus \Iden )^{-2} \bSis \bSiwbr^{-1}} \right) \\
& + \btst^\top (\Iden -\thb_1)^\top \bSis (\Iden - \thb_1) \btst + 2 \left( \btst \right)^{\prime \top} (\Iden- \thb_1)^\top \bSis \thb_1 \left( \btst \right)^\prime .
\end{align*}
Then, using the fact that since $\left( \btst \right)^\prime$ is orthogonal to $\btstbr$ (thus, their supports do not overlap), it follows that the difference between this surrogate-to-target risk and the standard target model risk is:
\begin{equation}\label{diff label}
    \begin{split}
        \textbf{Difference} & = \left( \btst \right)^{\prime \top} \thb_1^\top \bSis \thb_1 \left( \btst \right)^\prime + \frac{\sigmaw^2}{p \left(\kappaw^{-1} - 1\right)} \tr{ \thb_1^\top \bSis \thb_1 \bSiwbr^{-1}} + 2 \left( \btst \right)^{\prime \top} (\Iden- \thb_1)^\top \bSis \thb_1 \left( \btst \right)^\prime\\
& + \frac{\Omega}{1 - \Omega} \left( \frac{\sigmaw^2}{p \left(\kappaw^{-1} - 1 \right)} \taus^2 \tr{(\bSis + \taus \Iden )^{-2} \bSis \bSiwbr^{-1}} - \taus^2 \tn{ (\bSis + \taus \Iden )^{-1} \bSis^{1/2} \left( \btst \right)^\prime}^2 \right) . \\
    \end{split}
\end{equation}
This completes the proof.}
\end{proof}
\begin{proposition}
\red{Consider the two-stage model where the surrogate model is under-parameterized with $p$ features ($m < p$) and the second stage is over-parameterized ($n > p$) with $p$ features. Under this setting, the surrogate-to-target model cannot outperform the standard target model in terms of asymptotic risk.}
\end{proposition}

\begin{proof}
\red{Let $\thb_1 := (\bSis + \taus \Iden)^{-1} \bSis$ and $\thb_2 := (\bSis + \taus \Iden)^{-1} \bSis^{1/2}\frac{\gs}{\sqrt{p}}$ where $\gs \sim \Nn(\boldsymbol{0}, \Iden_p)$. Then, by Equation \eqref{risk one step asymp}, the asymptotic risk estimate for the single-stage linear regression is given as: 
\begin{equation}\label{one-step-asymp-risk-underparam}
    \begin{split}
        \bar{\Rc}^{s2t}_{\kappas, \sigmas}(\bSis, \btst, \btw) &:=  (\btw - \btst)^\top \thb_1^\top \bSis \thb_1 (\btw - \btst) + \gammas^2(\btw) \E_{\gs}[\thb_2^\top \bSis \thb_2]  \\
    &+ \btst^\top (\Iden -\thb_1)^\top \bSis (\Iden - \thb_1) \btst - 2 \btst^\top (\Iden- \thb_1)^\top \bSis \thb_1 (\btw-\btst). 
    \end{split}
\end{equation}
When the surrogate model is under-parameterized, we can use the following asymptotic estimate for the surrogate model obtained after this stage by Theorem 3 of \cite{chang2021provable}:
\begin{align*}
\btw = \btst + \sigmaw \frac{\bSiw^{-1/2} \gw}{\sqrt{p \left(\kappaw^{-1} - 1\right)}},
\end{align*}
where $\gw \sim \Nn(\boldsymbol{0}, \Iden_p)$. Let $\dot{\kappa} = (\kappaw, \kappas)$, $\dot{\bSi} = (\bSiw, \bSis)$, and $\dot{\sigma} = (\sigmaw^2, \sigmas^2)$. Then, by plugging this in Equation \eqref{one-step-asymp-risk-underparam}, the asymptotic risk estimate is the following:
\begin{align*}
\bar{\Rc}_{\dot{\kappa}, \dot{\sigma}}(\dot{\bSi}, \btst) &= \E_{\gw}\left[\frac{\sigmaw^2}{p \left( \kappaw^{-1} - 1 \right)} \gw^{\top} \bSiw^{-1/2} \thb_1^\top \bSis \thb_1 \bSiw^{-1/2} \gw + \gammas^2(\btw) \frac{\tr{\bSis^2 (\bSis + \taus \Iden)^{-2}}}{p}\right] \\
& + \btst^\top (\Iden -\thb_1)^\top \bSis (\Iden - \thb_1) \btst - \E_{\gw}\left[2 \frac{\sigmaw}{\sqrt{p \left( \kappaw^{-1} - 1 \right)}} \btst^\top (\Iden- \thb_1)^\top \bSis \thb_1 \bSiw^{-1/2} \gw \right] ,
\end{align*}
where
\begin{align*}
\gammas^2(\btw) &= \kappas \frac{\sigmas^2 + \taus^2 \tn{ (\bSis + \taus \Iden )^{-1} \bSis^{1/2} \btw}^2 }{1-\frac{1}{n}\tr{(\bSis + \taus \Iden)^{-2} \bSis^2}} = \kappas \frac{\sigmas^2 + \taus^2 \tn{ (\bSis + \taus \Iden )^{-1} \bSis^{1/2} (\btst + \sigmaw \frac{\bSiw^{-1/2} \gw}{\sqrt{p \left( \kappaw^{-1} - 1 \right)
}})}^2}{1 - \Omega} . 
\end{align*}
Further simplifications give:
\begin{align*}
\bar{\Rc}_{\dot{\kappa}, \dot{\sigma}}(\dot{\bSi}, \btst) & = \frac{\sigmaw^2}{p \left(\kappaw^{-1} - 1\right)} \tr{ \bSiw^{-1/2} \thb_1^\top \bSis \thb_1 \bSiw^{-1/2}} + \frac{\Omega}{1 - \Omega} \left( \sigmas^2 + \E_{\gw} \left[ \taus^2 \tn{ (\bSis + \taus \Iden )^{-1} \bSis^{1/2} (\btst + \sigmaw \frac{\bSiw^{-1/2} \gw}{\sqrt{p \left( \kappaw^{-1} - 1 \right)}})}^2 \right] \right) \\
& + \btst^\top (\Iden -\thb_1)^\top \bSis (\Iden - \thb_1) \btst \\
& = \frac{\sigmaw^2}{p \left(\kappaw^{-1} - 1\right)} \tr{ \bSiw^{-1/2} \thb_1^\top \bSis \thb_1 \bSiw^{-1/2}} + \frac{\Omega}{1 - \Omega} \left( \sigmas^2 + \sum_{i=1}^{p} \lambda_{t, i} \zeta_{t,i}^2 \left( \beta_i^2 + \frac{\sigmaw^2 \lambda_{s, i}^{-1}}{p \left(\kappaw^{-1} - 1\right)}\right) \right) + \sum_{i=1}^{p} \lambda_{t, i} \zeta_{t,i}^2 \beta_i^2 \\
& > \frac{\sigmas^2 \Omega + \sum_{i=1}^{p} \lambda_{t, i} \zeta_{t,i}^2 \beta_i^2}{1 - \Omega},
\end{align*}
which is the asymptotic risk estimate of the standard target model (follows from Definition \ref{omniscent test risk estimate} or by simply plugging $\btw = \btst$ in Equation \eqref{one-step-asymp-risk-underparam}). Hence, we conclude that improvement over the excess test risk is not possible when the first stage is under-parameterized with $p$-dimensions.}
\end{proof}

\begin{corollary}
\red{Consider the setting of Proposition \ref{proposition population surrogate}. Let $\bSis$ be the covariance matrix of the target model and let $\Pip$ be the mask such that the surrogate-to-target model with $\bSis$ and $\Pip$ outperforms the standard target model in the asymptotic risk. Let $\bSiw = \E[ \Pip(\x) \Pip(\x)^\top]$ where $\x \sim \Nn(\boldsymbol{0}, \bSis)$. Then, for any $\btst \in \R^p$, there exists $K \in \mathbb{N}$ such that the surrogate-to-target model with two stages given $\bSis$, $\bSiw$, $\btst$, and $m > K$ outperforms the standard target model. }
\end{corollary}
\begin{proof}
\red{We make the following three observations. First, the expression in \eqref{diff label} is continuous with respect to $m\in \R$ when $m > p$. Note that $m$ is a natural number in our setting, but we consider the expression in \eqref{diff label} as a function of $m$ where $m \in \R$. Second, the expression in \eqref{diff label} is monotone with respect to $m$ since $ \tr{ \thb_1^\top \bSis \thb_1 \bSiwbr^{-1}} \geq 0$  and $ \frac{\Omega}{1 - \Omega}  \taus^2 \tr{(\bSis + \taus \Iden )^{-2} \bSis \bSiwbr^{-1}} \geq 0$. Third, by definition, we know that the standard target model with $\bSis$ and $\Pip$ outperforms the standard target model in the asymptotic setting. This means that the standard target model with two stages given $\bSis, \bSiw$, and $\btst$ while $m$ approaches $\infty$ outperforms the standard target model. By combining these three observations, the desired result follows.} % we conclude that if there exists $K \in \R$ such that if $m > K$, then the surrogate-to-target model with two stages given $\bSis$, $\bSiw$, and $\btst$ outperforms the standard target model. This completes the proof.
\end{proof}

\end{document}